\DeclareMathOperator*{\argmax}{arg\,max}  
\renewcommand{\E}{\mathbb{E}}
\newcommand{\Prob}{\mathbb{P}}
\newtheorem{theorem}{Theorem}
\newtheorem{lemma}{Lemma}
\newtheorem{proposition}{Proposition}
\title{Improving the Expected Improvement Algorithm}
\author{
  Chao Qin\quad Diego Klabjan\quad Daniel Russo\\  
  Northwestern University\\
  \texttt{chaoqin2019@u.northwestern.edu}\\
  \texttt{d-klabjan@northwestern.edu}\\
  \texttt{dan.joseph.russo@gmail.com}
}
\begin{document}

\maketitle

\begin{abstract}
	The expected improvement (EI) algorithm is a popular strategy for information collection in optimization under uncertainty. The algorithm is widely known to be too greedy, but nevertheless enjoys wide use due to its simplicity and ability to handle uncertainty and noise in a coherent decision theoretic framework. To provide rigorous insight into EI, we study its properties in a simple setting of Bayesian optimization where the domain consists of a finite grid of points. This is the so-called best-arm identification problem, where the goal is to allocate measurement effort wisely to confidently identify the best arm using a small number of measurements. In this framework, one can show formally that EI is far from optimal. To overcome this shortcoming, we introduce a simple modification of the expected improvement algorithm. Surprisingly, this simple change results in an algorithm that is asymptotically optimal for Gaussian best-arm identification problems, and provably outperforms standard EI by an order of magnitude.  
\end{abstract}

\section{Introduction}

Recently Bayesian optimization has received much attention in the machine learning community \cite{DBLP:journals/pieee/ShahriariSWAF16}. This literature studies the problem of maximizing an unknown black-box objective function by collecting noisy measurements of the function at carefully chosen sample points. At first a prior belief over the objective function is prescribed, and then the statistical model is refined sequentially as data are observed. \textsl{Expected improvement (EI)} \cite{Jones1998} is one of the most widely-used Bayesian optimization algorithms. It is a greedy improvement-based heuristic that samples the point offering greatest expected improvement over the current best sampled point. EI is simple and readily implementable, and it offers reasonable performance in practice. 

Although EI is reasonably effective, it is too greedy, focusing nearly all sampling effort near the estimated optimum and gathering too little information about other regions in the domain. This phenomenon is most transparent in the simplest setting of Bayesian optimization where the function's domain is  a finite grid of points. This is the problem of best-arm identification (BAI) \cite{DBLP:conf/colt/AudibertBM10} in a multi-armed bandit. The player sequentially selects arms to measure and observes noisy reward samples with the hope that a small number of measurements enable a confident identification of the best arm. Recently \citet{doi:10.1287/opre.2016.1494} studied the performance of EI in this setting. His work focuses on a link between EI and another algorithm known as the optimal computing budget allocation \cite{chen2000simulation}, but his analysis reveals EI allocates a vanishing  proportion of samples to suboptimal arms as the total number of samples grows. Any method with this property will be far from optimal in BAI problems \cite{DBLP:conf/colt/AudibertBM10}. 

In this paper, we improve the EI algorithm dramatically through a simple modification. The resulting algorithm, which we call  \emph{top-two expected improvement} (TTEI), combines the top-two sampling idea of \citet{russo2016simple} with a careful change to the improvement-measure used by EI.  We show that this simple variant of EI achieves strong asymptotic optimality properties in the BAI problem, and benchmark the algorithm in simulation experiments. 

Our main theoretical contribution is a complete characterization of the asymptotic proportion of samples TTEI allocates to each arm as a function of the true (unknown) arm means. These particular sampling proportions have been shown to be optimal from several perspectives \cite{chernoff1959, jennison1982asymptotically,glynn2004large, russo2016simple, GarivierK16}, and this enables us to establish two different optimality results for TTEI. The first concerns the rate at which the algorithm gains confidence about the identity of the optimal arm as the total number of samples collected grows. Next we study the so-called fixed confidence setting, where the algorithm is able to stop at any point and return an estimate of the optimal arm. We show that when applied with the stopping rule of \citet{GarivierK16}, TTEI essentially minimizes the expected number of samples required among all rules obeying a constraint on the probability of incorrect selection.

One undesirable feature of our algorithm is its dependence on a tuning parameter. Our theoretical results precisely show the impact of this parameter, and reveal a surprising degree of robustness to its value. It is also easy to design methods that adapt this parameter over time to the optimal value, and we explore one such method in simulation. Still, removing this tuning parameter is an interesting direction for future research.

\paragraph{Further related literature.}
Despite the popularity of EI, its theoretical properties are not well studied. A notable exception is the work of \citet{journals/jmlr/Bull11}, who studies a global optimization problem and provides a convergence rate for EI's expected loss. However, it is assumed that the observations are noiseless. 
Our work also relates to  a large number of recent machine learning papers that try to characterize the sample complexity of the best-arm identification problem \cite{Even-dar02pacbounds,Mannor04thesample,DBLP:conf/colt/AudibertBM10,NIPS2012_4640,pmlr-v28-karnin13,pmlr-v35-jamieson14,DBLP:conf/ciss/JamiesonN14, pmlr-v30-Kaufmann13, pmlr-v35-kaufmann14, JMLR:v17:kaufman16a}. Despite substantial progress, matching asymptotic upper and lower bounds remained elusive in this line of work. Building on older work in statistics \cite{chernoff1959, jennison1982asymptotically} and simulation optimization \cite{glynn2004large}, recent work of \citet{GarivierK16} and \citet{russo2016simple} characterized the optimal sampling proportions. Two notions of asymptotic optimality are established: \textsl{sample complexity in the fixed confidence setting} and \textsl{rate of posterior convergence}. \citet{GarivierK16} developed two sampling rules designed to closely track the asymptotic optimal proportions and showed that, when combined with a stopping rule motivated by \citet{chernoff1959}, this sampling rule minimizes the expected number of samples required to guarantee a vanishing threshold on the probability of incorrect selection is satisfied. \citet{russo2016simple} independently proposed three simple Bayesian algorithms, and proved that each algorithm attains the optimal rate of posterior convergence.  TTEI proposed in this paper is conceptually most similar to the top-two value sampling of \citet{russo2016simple}, but it is more computationally efficient.

\subsection{Main Contributions}
As discussed below, our work makes both theoretical and algorithmic contributions.
\begin{description}
	\item[Theoretical:] Our main theoretical contribution is Theorem \ref{thm: sampling proportions}, which establishes that TTEI--a simple modification to a popular Bayesian heuristic--converges to the known optimal asymptotic sampling proportions. It is worth emphasizing that, unlike recent results for other top-two sampling algorithms \cite{russo2016simple}, this theorem establishes that the expected time to converge to the optimal proportions is finite, which we need to establish optimality in the fixed confidence setting. Proving this result required substantial technical innovations. Theorems \ref{thm: sc1} and \ref{thm: sc2} are additional theoretical contributions. These mirror results in 	\cite{russo2016simple} and \cite{GarivierK16}, but we extract minimal conditions on sampling rules that are sufficient to guarantee the two notions of optimality studied in these papers.  
	\item[Algorithmic:] On the algorithmic side, we substantially improve a widely used algorithm. TTEI can be easily implemented by modifying existing EI code, but, as shown in our experiments, can offer an order of magnitude improvement. A more subtle point involves the advantages of TTEI over algorithms that are designed to directly target convergence on the asymptotically optimal proportions. In the experiments, we show that TTEI substantially \emph{outperforms an oracle sampling rule} whose sampling proportions directly track the asymptotically optimal proportions. This phenomenon should be explored further in future work, but suggests that by carefully reasoning about the value of information TTEI accounts for important factors that are washed out in asymptotic analysis. Finally--as discussed in the conclusion--although we focus on uncorrelated priors we believe our method can be easily extended to more complicated problems like that of best-arm
	 identification in linear bandits \cite{soare2014best}. 
\end{description}
\section{Problem Formulation}

Let $A=\{1,\ldots,k\}$ be the set of arms. The reward $Y_{n,i}$ of arm $i \in A$ at time $n\in \mathbb{N}$ follows a normal distribution $N(\mu_i,\sigma^2)$ with common known variance $\sigma^2$, but unknown mean $\mu_i$. At each time $n=1,2,\ldots$, an arm $I_n\in A$ is measured, and the corresponding noisy reward $Y_{n,I_n}$ is observed. The objective is to allocate measurement effort wisely in order to confidently identify the arm with highest mean using a small number of measurements. 
We assume that $\mu_{1}>\mu_{2}>\ldots >\mu_{k}$, i.e., the arm-means are unique and arm 1 is the best arm.   Our analysis takes place in a \emph{frequentist setting}, in which the true means $(\mu_{1},\ldots,\mu_{k})$ are fixed but unknown. The algorithms we study, however, are Bayesian,  in the sense that they begin with prior over the arm means and update the belief to form a posterior distribution as evidence is gathered.

\paragraph{Prior and Posterior Distributions.}

The sampling rules studied in this paper begin with a normally distributed prior over the true mean of each arm $i\in A$ denoted by $N(\mu_{1,i}, \sigma_{1,i}^2)$, and update this to form a posterior distribution as observations are gathered. 
By conjugacy, the posterior distribution after observing the sequence  $(I_1,Y_{1,I_1},\ldots,I_{n-1},Y_{n-1,I_{n-1}})$ 
is also a normal distribution denoted by $N(\mu_{n,i}, \sigma_{n,i}^2)$. The posterior mean and variance can be calculated using the following recursive equations: 
\begin{equation*}
\mu_{n+1,i} = 
\begin{cases}
(\sigma_{n,i}^{-2}\mu_{n,i} + \sigma^{-2}Y_{n,i})/(\sigma_{n,i}^{-2} + \sigma^{-2}) & \text{if } I_{n} = i,\\
\mu_{n,i}, & \text{if } I_{n}\neq i,
\end{cases} 
\end{equation*}
and
\[
\sigma_{n+1,i}^2 = 
\begin{cases}
1/(\sigma_{n,i}^{-2} + \sigma^{-2}) & \text{if } I_{n} = i,\\
\sigma_{n,i}^2, & \text{if } I_{n}\neq i.
\end{cases} .
\]
We denote the posterior distribution over the vector of arm means by 
\[ 
\Pi_{n} = N(\mu_{n,1}, \sigma_{n,1}^2) \otimes N(\mu_{n,2}, \sigma_{n,2}^2) \otimes \cdots \otimes N(\mu_{n,k}, \sigma_{n,k}^2)
\]
and let $\theta=(\theta_1,\ldots,\theta_k)$. For example, with this notation  
\[ 
\E_{\theta \sim \Pi_n}\left[ \sum_{i\in A} \theta_i \right] = \sum_{i\in A}\mu_{n,i}.
\]
The posterior probability assigned to the event that arm $i$ is optimal is
\begin{equation}\label{eq: optimal action probs}
\alpha_{n,i} \triangleq \Prob_{\theta \sim \Pi_n}\left( \theta_i > \max_{j\neq i} \theta_j\right).
\end{equation}
To avoid confusion, we use $\theta=(\theta_1,\ldots,\theta_k)$ to denote a random vector of arm means drawn from the algorithm's posterior $\Pi_n$, and $\mu=(\mu_1,\ldots,\mu_k)$ to denote the vector of true arm means.  

\paragraph{Two notions of asymptotic optimality.}
Our first notion of optimality relates to the rate of posterior convergence. As the number of observations grows, one hopes that the posterior distribution definitively identifies the true best arm, in the sense that the posterior probability $1-\alpha_{n,1}$ assigned by the event that a different arm is optimal tends to zero. By sampling the arms intelligently, we hope this probability can be driven to zero as rapidly as possible. 
We will see that under TTEI the posterior probability tends to zero at an exponential rate, and so following \citet{russo2016simple}, we aim to maximize the exponent governing the rate of decay, effectively solving the optimization problem
\[
\underset{\text{sampling rules}}{\min} \limsup_{n\to \infty} \,\, \frac{1}{n} \log\left( 1-\alpha_{n,1}\right). 
\]
The second setting we consider is often called the ``fixed confidence'' setting. Here, the agent is allowed at any point to stop gathering samples and return an estimate of the identity of the optimal. In addition to the sampling rule TTEI, we require a stopping rule that selects a time $\tau$ at which to stop, and decision rule that returns an estimate $\hat{i}_{\tau}$ of the optimal arm based on the first $\tau$  observations.  We consider minimizing the average number of observations $\E[\tau]$ required by an algorithm guaranteeing a vanishing probability $\delta$ of incorrect identification, i.e., $\Prob(\hat{i}_{\tau} \neq 1)\leq \delta$. Following \citet{GarivierK16}, the number of samples required scales with $\log(1/\delta)$, and so we aim to minimize 
\[ 
\limsup_{\delta \to 0} \frac{\E[\tau]}{\log(1/\delta)}
\]
among algorithms with probability of error no more than $\delta$. In this setting, we study the performance of EI when combined with the stopping rule studied by \citet{chernoff1959} and \citet{GarivierK16}.  

\section{Sampling Rules}\label{sec: algos}
In this section, we first introduce the expected improvement algorithm, and point out its weakness. Then a simple variant of the expected improvement algorithm is proposed. 
Both algorithms make calculations using function $f(x) = x\Phi(x) + \phi(x)$ where $\Phi(\cdot)$ and $\phi(\cdot)$ are the CDF and PDF of the standard normal distribution. One can show that as $x\to \infty$, $\log f(-x) \sim -x^2/2$, and so $f(-x) \approx e^{-x^2 /2 }$ for very large $x$. One can also show that $f$ is an increasing function. 

\paragraph{Expected Improvement.}
\textsl{Expected improvement} \cite{Jones1998} is a simple improvement-based sampling rule. The EI algorithm favors the arm that offers the largest amount of improvement upon a target. The EI algorithm measures the arm $I_{n} = \argmax_{i\in A} v_{n,i}$ where $v_{n,i}$ is the EI value of arm $i$ at time $n$. Let $I_n^* = \argmax_{i\in A}\mu_{n,i}$ denote the arm with largest posterior mean at time $n$. The EI value of arm $i$ at time $n$ is defined as
\[
v_{n,i} \triangleq \E_{\theta\sim \Pi_n} \left[\left(\theta_i - \mu_{n,I_n^*}\right)^+\right].
\]
where $x^+ = \max\{x,0\}$. 
The above expectation can be computed analytically as follows,
\begin{align*}
v_{n,i}
=&\left(\mu_{n,i}-\mu_{n,I_n^*}\right)  \Phi\left(\frac{\mu_{n,i}-\mu_{n,I_n^*}}{\sigma_{n,i}}\right)   + \sigma_{n,i} \phi\left(\frac{\mu_{n,i}-\mu_{n,I_n^*}}{\sigma_{n,i}}\right)
=\sigma_{n,i} f\left( \frac{\mu_{n,i}-\mu_{n,I_n^*}}{\sigma_{n,i}} \right).
\end{align*}
The EI value $v_{n,i}$ measures the potential of arm $i$ to improve upon the largest posterior mean $\mu_{n,I_n^*}$ at time $n$. Because $f$ is an increasing function, $v_{n,i}$ is increasing in both the posterior mean $\mu_{n,i}$ and posterior standard deviation $\sigma_{n,i}$.

\paragraph{Top-Two Expected Improvement.}
The EI algorithm can have very poor performance for selecting the best arm. Once it finds a particular arm with reasonably high probability to be the best, it allocates nearly all future samples to this arm at the expense of measuring other arms. Recently \citet{doi:10.1287/opre.2016.1494} showed that EI only allocates $\mathcal{O}(\log n)$ samples to suboptimal arms asymptotically. This is a severe shortcoming, as it means $n$ must be extremely large before the algorithm has enough samples from suboptimal arms to reach a confident conclusion. 

To improve the EI algorithm, we build on the top-two sampling idea in Russo \cite{russo2016simple}. The idea is to identify in each period the two ``most promising'' arms based on current observations, and randomize to choose which to sample. A tuning parameter $\beta \in (0,1)$ controls the probability assigned to the ``top'' arm. A naive top-two variant of EI would identify the two arms with largest EI value, and flip a $\beta$--weighted coin to decide which to measure. However, one can prove that this algorithm is not optimal for any choice of $\beta$. Instead, what we call the top-two expected improvement algorithm uses a novel modified EI criterion which more carefully accounts for the decision-maker's uncertainty when deciding which arm to sample.  

For $i, j \in A$, define $v_{n,i,j} \triangleq \E_{\theta \sim \Pi_{n}}\left[ (\theta_i - \theta_j)^{+} \right]$. This measures the expected magnitude of improvement arm $i$ offers over arm $j$, but unlike the typical EI criterion, this expectation integrates over the uncertain quality of \emph{both arms}. This measure can be computed analytically as
\[
v_{n,i,j} = \sqrt{\sigma_{n,i}^2+\sigma_{n,j}^2} f\left(\frac{\mu_{n,i}-\mu_{n,j}}{\sqrt{\sigma_{n,i}^2+\sigma_{n,j}^2}}\right).
\]

TTEI depends on a tuning parameter $\beta > 0$, set to $1/2$ by default. 
With probability $\beta$, TTEI measures the arm $I_n^{(1)}$ by optimizing the EI criterion, and otherwise it measures an alternative $I_n^{(2)}$ that offers the largest expected improvement on the arm $I_n^{(1)}$.  Formally, TTEI measures the arm 
\begin{equation*}
I_n = 
\begin{cases}
I_n^{(1)} = \argmax_{i\in A} v_{n,i}, & \text{with probability } \beta,\\
I_n^{(2)} = \argmax_{i\in A} v_{n,i, I_{n}^{(1)}}, & \text{with probability } 1-\beta.
\end{cases}
\end{equation*}
Note that $v_{n,i, i}=0$, which implies $I_n^{(2)}\neq I_n^{(1)}$.

We notice that TTEI with $\beta=1$ is the standard EI algorithm. Comparing to the EI algorithm, TTEI with $\beta\in(0,1)$ allocates much more measurement effort to suboptimal arms. We will see that TTEI allocates $\beta$ proportion of samples to the best arm asymptotically, and it uses the remaining $1-\beta$ fraction of samples for gathering evidence against each suboptimal arm.

\section{Convergence to Asymptotically Optimal Proportions}
For all $i\in A$ and $n\in\mathbb{N}$, we define $T_{n,i} \triangleq \sum_{\ell=1}^{n-1} \mathbf{1}\{I_\ell = i\}$ to be the number of samples of arm $i$ before time $n$.
We will show that under TTEI with parameter $\beta$, $ \lim_{n\to \infty} T_{n,1}/n= \beta$. That is, the algorithm asymptotically allocates $\beta$ proportion of the samples to true best arm. Dropping for the moment questions regarding the impact of this tuning parameter, let us consider the optimal asymptotic proportion of effort to allocate to each f the $k-1$ remaining arms. It is known that the optimal proportions are given by the unique vector $(w^{\beta}_2,\cdots, w^{\beta}_k)$ satisfying,  $\sum_{i=2}^{k} w_{i}^{\beta} = 1-\beta$ and  
\begin{equation}\label{eq: optimal proportions}
\frac{(\mu_{2}-\mu_{1})^2}{1/w_{2}^\beta+1/\beta} =\ldots = \frac{(\mu_{k}-\mu_{1})^2}{1/w_{k}^\beta+1/\beta}.
\end{equation}
We set $w^{\beta}_1 = \beta$, so $w^{\beta}=\left(w^{\beta}_1,\ldots, w^{\beta}_k\right)$ encodes the sampling proportions of each arm.

To understand the source of equation \eqref{eq: optimal proportions}, imagine that over the first $n$ periods each arm $i$ is sampled exactly $w_{i}^{\beta} n$ times, and let $\hat{\mu}_{n,i}  \sim N\left(\mu_i, \frac{\sigma^2}{w_{i}^{\beta} n}\right)$ denote the empirical mean of arm $i$. Then 
\[ 
\hat{\mu}_{n,1} - \hat{\mu}_{n,i} \sim N\left(\mu_{1}-\mu_{i}, \tilde{\sigma}_{i}^2\right) \qquad \text{where} \qquad \tilde{\sigma}^2_{i}=\frac{\sigma^2}{n/\beta + n/w^{\beta}_i}.
\]
The probability $\hat{\mu}_{n,1} -\hat{\mu}_{n,i} \leq 0$--leading to an incorrect estimate of the  arm with highest mean--is $\Phi\left((\mu_i - \mu_1) / \tilde{\sigma}_i\right)$ where $\Phi$ is the CDF of the standard normal distribution. Equation \eqref{eq: optimal proportions} is equivalent to requiring $(\mu_1 - \mu_i) / \tilde{\sigma}_i$ is equal for all arms $i$, so the probability of falsely declaring $\mu_{i} \geq \mu_1$ is equal for all $i\neq 1$. In a sense, these sampling frequencies equalize the evidence against each suboptimal arm. These proportions appeared first in the machine learning literature in \cite{russo2016simple, GarivierK16}, but appeared much earlier in the statistics literature in \cite{jennison1982asymptotically}, and separately in the simulation optimization literature in \cite{glynn2004large}. As we will see in the next section, convergence to this allocation is a necessary condition for both notions of optimality considered in this paper. 

Our main theoretical contribution is the following theorem, which establishes that under TTEI sampling proportions converge to the proportions $w^{\beta}$ derived above. Therefore, while the sampling proportion of the optimal arm is controlled by the tuning parameter $\beta$, the remaining $1-\beta$ fraction of measurement is optimally distributed among the remaining $k-1$ arms. One of our results requires more than convergence to $w^{\beta}$ with probability 1, but a sense in which the expected time until convergence is finite. To make this precise, we introduce a time after which for each arm, both its empirical mean and empirical proportion are accurate. Specifically, given $\beta\in(0,1)$ and $\epsilon>0$, we define
\begin{equation}\label{eq: time to reach proportions}
T^\epsilon_\beta\triangleq\inf \left\{N\in\mathbb{N} \,:\, |\mu_{n,i}-\mu_i|\leq\epsilon \text{ and } |T_{n,i}/n - w^\beta_i|\leq\epsilon,\forall i\in A \text{ and } n\geq N \right\}.
\end{equation}
If $T_{n,i}/n \to w^{\beta}_i$ with probability 1, then by the law of large numbers $\Prob(T^{\epsilon}_\beta < \infty) =1$ for every $\epsilon>0$. Such a result was established for other top-two sampling algorithms in \cite{russo2016simple}. To establish optimality in the ``fixed confidence setting'', we need to prove in addition that $\E[T^{\beta}_\epsilon]<\infty$ for all $\epsilon>0$, which requires substantial new technical innovations. 
\begin{theorem}\label{thm: sampling proportions}
	If TTEI is applied with parameter $\beta \in (0,1)$, $\E[T^\epsilon_\beta]<\infty$ for any $\epsilon>0$. Therefore,
	\[
	\lim_{n \to \infty} \frac{T_{n,i}}{n} = w^{\beta}_{i} \qquad \forall i \in A.
	\] 
\end{theorem}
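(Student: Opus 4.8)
The plan is to first establish the almost-sure convergence $T_{n,i}/n\to w_i^\beta$ and then upgrade it to the stronger integrability statement $\E[T_\beta^\epsilon]<\infty$, which is the real content of the theorem. The almost-sure statement rests on three ingredients, which I would prove in sequence: (i) every arm is sampled infinitely often, in fact a linear number of times, so that $\sigma_{n,i}^2=\Theta(1/n)$ and $\mu_{n,i}\to\mu_i$; (ii) the best arm receives exactly fraction $\beta$ of the measurements; and (iii) the modified improvement measure $v_{n,i,1}$ forces the suboptimal proportions to the balanced allocation \eqref{eq: optimal proportions}.

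For (i) I would argue by contradiction: since both $v_{n,i}$ and $v_{n,i,j}$ are increasing in the posterior standard deviation, any arm sampled only finitely often keeps $\sigma_{n,i}$ bounded away from zero while the criteria of well-sampled arms shrink, eventually forcing the starved arm to be selected. Thus $\sigma_{n,i}\to 0$ for every arm and, by Gaussian concentration of the posterior mean, $\mu_{n,i}\to\mu_i$. Once means are accurate and variances small, $I_n^*=1$ and the EI value of arm $1$ is of order $\sigma_{n,1}$, whereas for each suboptimal arm the asymptotic $\log f(-x)\sim -x^2/2$ gives $v_{n,i}\asymp\sigma_{n,i}\exp\!\left(-(\mu_1-\mu_i)^2/2\sigma_{n,i}^2\right)$, which is exponentially smaller. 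Hence $I_n^{(1)}=1$ eventually. Because $v_{n,1,1}=0$, arm $1$ is never the runner-up, so eventually arm $1$ is measured precisely when the independent $\beta$-coin calls for the top arm; the law of large numbers then yields $T_{n,1}/n\to\beta$, establishing (ii).

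For the suboptimal arms I would analyze the runner-up rule $I_n^{(2)}=\argmax_{i} v_{n,i,1}$. Using $\sigma_{n,i}^2\approx\sigma^2/T_{n,i}$, $\sigma_{n,1}^2\approx\sigma^2/(\beta n)$, and again $\log f(-x)\sim -x^2/2$, one obtains
\[
\tfrac1n\log v_{n,i,1}\;\longrightarrow\;-\frac{1}{2\sigma^2}\cdot\frac{(\mu_1-\mu_i)^2}{1/w_{n,i}+1/\beta},\qquad w_{n,i}=T_{n,i}/n,
\]
so that maximizing $v_{n,i,1}$ amounts to sampling the arm with the smallest value of $(\mu_1-\mu_i)^2/(1/w_{n,i}+1/\beta)$. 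Sampling that arm increases $w_{n,i}$ and therefore raises exactly the quantity that was smallest, a self-correcting dynamic whose only stationary point among proportions summing to $1-\beta$ is the balanced allocation \eqref{eq: optimal proportions}. Making this rigorous requires showing that the most-starved arm is indeed the one selected and that no arm can lag the target by a fixed margin indefinitely; this delivers $w_{n,i}\to w_i^\beta$ almost surely.

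The harder task, and the one I expect to be the main obstacle, is $\E[T_\beta^\epsilon]<\infty$, since almost-sure convergence only gives $\Prob(T^\epsilon_\beta<\infty)=1$. For this I would combine two quantitative estimates and sum over $N$. First, a concentration estimate: $\Prob(\exists\,n\ge N,\,i:\,|\mu_{n,i}-\mu_i|>\epsilon)$ must decay in $N$ fast enough to be summable. The difficulty is that the sample counts $T_{n,i}$ are random and adaptively chosen, so ordinary i.i.d.\ Gaussian tail bounds do not apply directly; I would instead invoke an anytime/self-normalized deviation bound uniform over all possible sample counts, together with a \emph{quantitative} lower bound on each arm's sampling frequency (the starvation argument made effective) guaranteeing enough observations per arm by time $N$. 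Second, a deterministic drift bound: on the event that all posterior means are $\epsilon'$-accurate from time $N$ onward, the contraction of the balancing map must force the proportion vector into the $\epsilon$-ball within a number of further steps controlled only by $N$, $\epsilon$, and the gaps. The crux is making both the starvation bound and the drift uniform enough — ensuring that before the proportions are near-optimal no suboptimal arm can be neglected for more than a controlled stretch — so that the two estimates compose into a summable tail for $T_\beta^\epsilon$, yielding finite expectation.
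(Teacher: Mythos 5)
Your outline of the almost-sure part tracks the paper's structure closely (sufficient exploration, concentration of posterior means, $I_n^{(1)}=I_n^*=1$ eventually, the $\beta$-coin for arm $1$, and the rate-balancing dynamic for the runner-up), and those ingredients are all sound. But the actual content of the theorem is $\E[T^\epsilon_\beta]<\infty$, and this is precisely the step your proposal labels ``the crux'' and leaves unresolved. The gap is not cosmetic: your two-phase plan (prove a.s.\ convergence first, then upgrade by summing $\Prob(T^\epsilon_\beta>N)$ over $N$) cannot be executed as written, because every argument in your first phase is qualitative --- ``eventually forcing the starved arm to be selected,'' ``the law of large numbers then yields $T_{n,1}/n\to\beta$,'' limits of $\tfrac1n\log v_{n,i,1}$ --- and none of these produce the uniform-in-time, quantitative control needed to make the tails summable under adaptive sampling. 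In particular, nothing in your proposal controls the discrepancy between the realized counts $T_{n,i}$ and the randomization of the top-two coin; an ordinary LLN gives no rate, and this is exactly where the composition of your ``starvation bound'' and ``drift bound'' breaks down.

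The paper's resolution is a different organization that dissolves the composition problem: it never proves a.s.\ convergence separately. Instead it introduces two maximal random variables with finite exponential moments --- $W_1$, an anytime self-normalized bound on $|\mu_{n,i}-\mu_i|$ uniform over all $n$ and $i$ (this you essentially anticipate), and, crucially, $W_2$, a uniform-in-time martingale bound on $|T_{n,i}-\Psi_{n,i}|$ where $\Psi_{n,i}=\sum_{\ell<n}\Prob(I_\ell=i\mid\mathcal{F}_{\ell-1})$, obtained from a self-normalized concentration inequality for martingales. Conditioned on the values of $W_1$ and $W_2$, every subsequent step is a deterministic, pathwise statement with explicit thresholds: $T_{n,i}\geq\sqrt{n/k}$ after time $\poly(W_1,W_2)$; $|\mu_{n,i}-\mu_i|\leq\epsilon$ and $I_n^{(1)}=1$ after time $\poly(W_1,W_2,1/\epsilon)$; and the over-/under-sampled-set balancing argument (your ``drift'') likewise yields an explicit polynomial threshold. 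The conclusion is the pathwise inequality $T^\epsilon_\beta\leq\poly(W_1,W_2,1/\epsilon)$, whence $\E[T^\epsilon_\beta]<\infty$ follows at once from integrability of polynomials of $W_1,W_2$, and the a.s.\ convergence $T_{n,i}/n\to w^\beta_i$ drops out as a corollary rather than serving as a first phase. The missing idea in your proposal is this device --- especially the $W_2$-type bound, for which your plan has no analogue --- and without it the ``summable tail'' strategy remains a restatement of the difficulty rather than a proof.
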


\subsection{Problem Complexity Measure}
Given $\beta\in(0,1)$, define the problem complexity measure
\[
\Gamma_\beta^* \triangleq \frac{(\mu_{2}-\mu_{1})^2}{2\sigma^2\left(1/w_{2}^{\beta}+1/\beta\right)} =\ldots = \frac{(\mu_{k}-\mu_{1})^2}{2\sigma^2\left(1/w_{k}^{\beta}+1/\beta\right)},
\]
which is a function of the true arm means and variances. This will be the exponent governing the rate of posterior convergence, and also characterizing the average number of samples in the fixed confidence stetting. The optimal exponent comes from maximizing over $\beta$. Let us define $\Gamma^* = \max_{\beta\in(0,1)}\Gamma_\beta^*$ and $\beta^* = \argmax_{\beta\in(0,1)} \Gamma_\beta^*$ and set 
\[
w^*= w^{\beta^*} = \left(\beta^*, w_{2}^{\beta^*},\ldots,w_{k}^{\beta^*}\right).
\]
Russo \cite{russo2016simple} has proved that for $\beta\in(0,1)$,
$\Gamma^*_{\beta} \geq \Gamma^*/\max\left\{\frac{\beta^*}{\beta},\frac{1-\beta^*}{1-\beta}\right\},$ and therefore $\Gamma_{1/2}^*\geq \Gamma^*/2$. This demonstrates a surprising degree of robustness to $\beta$. In particular, $\Gamma_{\beta}$ is close to $\Gamma^*$ if $\beta$
is adjusted to be close to $\beta^*$, and the choice of $\beta=1/2$ always yields a 2-approximation to $\Gamma^*$. 

\section{Implied Optimality Results}
This section establishes formal optimality guarantees for TTEI. Both results, in fact, hold for any algorithm satisfying the conclusions of Theorem \ref{thm: sampling proportions}, and is therefore one of broader interest. 

\subsection{Optimal Rate of Posterior Convergence}
We first provide upper and lower bounds on the exponent governing the rate of posterior convergence. The same result has been has been proved in \citet{russo2016simple} for bounded correlated priors. We use different proof techniques to prove the following result for uncorrelated Gaussian priors. 

This theorem shows that no algorithm can attain a rate of posterior convergence faster than $e^{-\Gamma^* n}$ and that this is attained by any algorithm that, like TTEI with optimal tuning parameter $\beta^*$, has asymptotic sampling ratios $(w^*_1,\ldots,w^*_k)$. The second part implies TTEI with parameter $\beta$ attains convergence rate $e^{-n \Gamma^*_\beta}$ and that it is optimal among sampling rules that allocation $\beta$--fraction of samples to the optimal arm. Recall that, without loss of generality, we have assumed arm $1$ is the arm with true highest mean $\mu_1 = \max_{i\in A} \mu_i$. We will study the posterior mass $1-\alpha_{n,1}$ assigned to the event that some other has the highest mean.  
\begin{theorem}[Posterior Convergence - Sufficient Condition for Optimality]	\label{thm: sc1}
	The following properties hold with probability 1:
	\begin{enumerate}
		\item Under any allocation rule satisfying $T_{n,i}/n \to w_i^*$ for each $i \in A$,
		\[
		\lim_{n\to\infty} \,\, -\frac{1}{n}\log\left( 1-\alpha_{n,1} \right) = \Gamma^*.
		\]       
		Under any sampling rule,
		\[
		\limsup_{n\to\infty} \,\,-\frac{1}{n}\log(1-\alpha_{n,1}) \leq \Gamma^*.
		\]
		\item For $\beta \in (0,1)$, under any allocation rule satisfying $T_{n,i}/n \to w^{\beta}_i$ for each $i\in A$,
		\[
		\lim_{n\to\infty} -\frac{1}{n}\log(1-\alpha_{n,1}) = \Gamma_\beta^*.
		\]   
		Under any sampling rule satisfying $T_{n,1}/n \to \beta$,  
		\[
		\limsup_{n\to\infty} \,\, -\frac{1}{n}\log(1-\alpha_{n,1}) \leq \Gamma_\beta^*.
		\]
	\end{enumerate}
\end{theorem}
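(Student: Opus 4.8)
The plan is to reduce the posterior mass $1-\alpha_{n,1}$ to a collection of pairwise Gaussian tail probabilities, and then exploit the explicit recursions for the posterior parameters. Under $\Pi_n$ the coordinates $\theta_1$ and $\theta_i$ are independent Gaussians, so $\Prob_{\theta\sim\Pi_n}(\theta_i > \theta_1) = \Phi(-d_{n,i})$ with $d_{n,i} = (\mu_{n,1}-\mu_{n,i})/\sqrt{\sigma_{n,1}^2+\sigma_{n,i}^2}$. Since the event that arm $1$ is not the maximizer is $\bigcup_{i\neq 1}\{\theta_i > \theta_1\}$, a union bound together with the trivial lower bound by a single term sandwiches the quantity of interest:
\[
\max_{i\neq 1}\Phi(-d_{n,i}) \;\le\; 1-\alpha_{n,1} \;\le\; (k-1)\max_{i\neq 1}\Phi(-d_{n,i}).
\]
Taking $-\tfrac1n\log(\cdot)$, the prefactor $k-1$ contributes $o(1)$, so it suffices throughout to analyze $\min_{i\neq 1}\bigl[-\tfrac1n\log\Phi(-d_{n,i})\bigr]$.

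Next I would pin down each pairwise exponent. The variance recursion gives $\sigma_{n,i}^{-2} = \sigma_{1,i}^{-2}+T_{n,i}\sigma^{-2}$, hence $n(\sigma_{n,1}^2+\sigma_{n,i}^2)\to \sigma^2(1/w_1+1/w_i)$ whenever $T_{n,j}/n\to w_j$; and by the strong law the posterior mean satisfies $\mu_{n,i}\to\mu_i$ almost surely whenever $T_{n,i}\to\infty$. Using the tail estimate $\log\Phi(-x)\sim -x^2/2$ (equivalently $\log f(-x)\sim -x^2/2$) and noting that $d_{n,i}\to\infty$ at rate $\sqrt n$, so that the lower-order tail corrections are $o(n)$, I obtain
\[
-\frac1n\log\Phi(-d_{n,i}) = \frac{(\mu_{n,1}-\mu_{n,i})^2}{2n(\sigma_{n,1}^2+\sigma_{n,i}^2)}\,(1+o(1)) \longrightarrow \frac{(\mu_1-\mu_i)^2}{2\sigma^2(1/w_1+1/w_i)}.
\]

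For the equality statements I would substitute $w=w^\beta$ with $w_1=\beta$, so that each pairwise exponent converges to the common value $\Gamma_\beta^*$ determined by equation \eqref{eq: optimal proportions}; the $\min$ over $i$ then converges to $\Gamma_\beta^*$, giving the first display of part~2, and the choice $\beta=\beta^*$, $w=w^*$ gives part~1. For the two $\limsup$ bounds I would not assume convergence of the proportions. Writing $p_{n,i}=T_{n,i}/n$ with $\sum_i p_{n,i}\le 1$, the same tail estimate bounds the exponent of $1-\alpha_{n,1}$ above by $\min_{i\neq 1}\frac{(\mu_1-\mu_i)^2}{2\sigma^2(1/p_{n,1}+1/p_{n,i})}+o(1)$, and the key fact is the static optimization identity
\[
\max_{p\in\Delta}\ \min_{i\neq 1}\ \frac{(\mu_1-\mu_i)^2}{2\sigma^2(1/p_1+1/p_i)} = \Gamma^*,
\]
with the maximum restricted to $\{p:p_1=\beta\}$ equal to $\Gamma_\beta^*$; this is precisely the characterization of the optimal proportions $w^\beta$ known from \cite{russo2016simple,GarivierK16,glynn2004large}. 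Hence the min-exponent can never exceed $\Gamma^*$ (respectively $\Gamma_\beta^*$ when $p_{n,1}\to\beta$), yielding both upper bounds.

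I expect the main obstacle to be the upper bound under an arbitrary sampling rule, where the empirical proportions need not converge and some arms may be sampled only sub-linearly. I would argue along subsequences: by compactness pass to a further subsequence along which $p_{n,i}\to q_i$ for every $i$ with $\sum_i q_i\le 1$. If $q_i=0$ for some $i$ (including $i=1$), then $n(\sigma_{n,1}^2+\sigma_{n,i}^2)\to\infty$ while $(\mu_{n,1}-\mu_{n,i})^2$ stays bounded almost surely, so that arm's exponent tends to $0\le\Gamma^*$ and the $\min$ is driven to $0$; if all $q_i>0$ the exponents converge to the explicit formula and the optimization identity applies. The delicate point is making the tail approximation uniform enough to survive the $\min$ and $\limsup$ without any convergence hypothesis, but this reduces to the elementary two-sided estimate $\tfrac{x}{1+x^2}\phi(x)\le \Phi(-x)\le e^{-x^2/2}$, which controls $\log\Phi(-x)$ to within $O(\log x)$ and hence leaves the normalized exponent unaffected.
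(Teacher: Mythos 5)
Your proposal is correct and follows essentially the same route as the paper's proof: the union-bound sandwich reducing $1-\alpha_{n,1}$ to the worst pairwise Gaussian tail probability, two-sided Gaussian tail estimates to extract the exponent $\frac{(\mu_{n,1}-\mu_{n,i})^2}{2\sigma^2(n/T_{n,1}+n/T_{n,i})}$, and the max--min characterization of $\Gamma^*$ and $\Gamma^*_\beta$ from \cite{russo2016simple}. The only cosmetic difference is in handling non-convergent proportions: the paper separates out finitely-sampled arms up front (where the exponent is trivially $0$) and then bounds the minimum pointwise in $n$ by $\max_{w\in W}\min_{i\neq 1}$, whereas you pass to convergent subsequences via compactness -- the same mathematical content.
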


This result reveals that when the tuning parameter $\beta$ is set optimally to $\beta^*$, TTEI attains the optimal rate of posterior convergence. Since $\Gamma^*_{1/2}\geq \Gamma^* /2$, when $\beta$ set to the default value $1/2$, the exponent governing the convergence rate of TTEI is at least half of the optimal one.

\subsection{Optimal Average Sample Size} \label{section_stopping}

\paragraph{Chernoff's Stopping Rule.}
In the fixed confidence setting, besides an efficient sampling rule, a player also needs to design an intelligent stopping rule. This section introduces a stopping rule proposed by \citet{chernoff1959} and studied recently by \citet{GarivierK16}. This stopping rule makes use of the Generalized Likelihood Ratio statistic, which depends on the current maximum likelihood estimates of all unknown means. For each arm $i\in A$, the maximum likelihood estimate of its unknown mean $\mu_i$ at time $n$ is its empirical mean $\hat{\mu}_{n,i} = T_{n,i}^{-1} \sum_{\ell=1}^{n-1} \mathbf{1}\{I_\ell = i\}Y_{\ell,I_\ell}$. If $T_{n,i}=0$, we set $\hat{\mu}_{n,i} = 0$.  For arms $i,j\in A$, if $\hat{\mu}_{n,i} \geq \hat{\mu}_{n,j}$, the Generalized Likelihood Ratio statistic $Z_{n,i,j}$ has the following explicit expression for Gaussian noise distributions:
\[
Z_{n,i,j} \triangleq T_{n,i}d(\hat{\mu}_{n,i},\hat{\mu}_{n,i,j}) + T_{n,j}d(\hat{\mu}_{n,j},\hat{\mu}_{n,i,j})
\]
where $d(x,y) \triangleq (x-y)^2/(2\sigma^2)$ is the KL-divergence between two normal distributions $N(x,\sigma^2)$ and $N(y,\sigma^2)$, and $\hat{\mu}_{n,i,j}$ is a weighted average of the empirical means of arms $i,j$ defined as
\[
\hat{\mu}_{n,i,j} \triangleq \frac{T_{n,i}}{T_{n,i}+T_{n,j}}\hat{\mu}_{n,i} + \frac{T_{n,j}}{T_{n,i}+T_{n,j}}\hat{\mu}_{n,j}.
\]
On the other hand, if $\hat{\mu}_{n,i}<\hat{\mu}_{n,j}$, then $Z_{n,j,i}$ is well-defined as above, and $Z_{n,i,j} = -Z_{n,j,i}\leq 0$ (if $T_{n,i}=T_{n,j}=0$, we let $Z_{n,i,j}=Z_{n,j,i}=0$). Given a target confidence $\delta\in(0,1)$, to ensure that one arm is better than the others with probability at least $1-\delta$, we use the stopping time
\[
\tau_{\delta} \triangleq \inf\left\{n\in\mathbb{N} \,:\, Z_n \triangleq \max_{i\in A}\min_{j\in A\setminus\{i\}} Z_{n,i,j} > \gamma_{n,\delta} \right\}
\]
where $\gamma_{n,\delta}>0$ is an appropriate threshold. By definition, we known that $\min_{j\in A\setminus\{i\}} Z_{n,i,j}$ is nonnegative if and only if $\hat{\mu}_{n,i}\geq \hat{\mu}_{n,j}$ for all $j\in A\setminus\{i\}$. Hence, whenever $\hat{I}_n^*\triangleq\argmax_{i\in A} \hat{\mu}_{n,i}$ is unique, $Z_n= \min_{j\in A\setminus \left\{\hat{I}_n^*\right\}}Z_{n,\hat{I}_n^*,j}$.

Next we introduce the exploration rate for normal bandit models that can ensure to identify the best arm with probability at least $1-\delta$. We use the following result given in Garivier and Kaufmann \cite{GarivierK16}.
\begin{proposition} [Garivier and Kaufmann \cite{GarivierK16} Proposition 12]
	\label{GK}
	Let $\delta \in (0,1)$ and $\alpha > 1$. For any normal bandit model, there exists a constant $C= C(\alpha,k)$ such that under any possible sampling rule, using the Chernoff's stopping rule with the threshold $\gamma_{n,\delta}^\alpha =\log(C n^\alpha/\delta)$ guarantees
	\[
	\mathbb{P}\left(\tau_\delta < \infty, \argmax_{i\in A}\hat{\mu}_{\tau_\delta,i}\neq 1\right) \leq \delta.
	\]
\end{proposition}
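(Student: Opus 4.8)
The plan is to bound the probability of the error event $\{\tau_\delta < \infty,\ \argmax_{i\in A}\hat\mu_{\tau_\delta,i}\neq 1\}$ by a union bound over the suboptimal arms and over all times, and then to control each term with a time-uniform deviation inequality for the Gaussian empirical means. First I would observe that on this event the empirically best arm at the stopping time is some $i\neq 1$, so that by the characterization $Z_n=\min_{j\neq \hat I_n^*}Z_{n,\hat I_n^*,j}$ and the fact that arm $1$ is among the competitors of $i$, we have both $Z_{\tau_\delta,i,1}\geq Z_{\tau_\delta}>\gamma_{\tau_\delta,\delta}^\alpha$ and $\hat\mu_{\tau_\delta,i}\geq\hat\mu_{\tau_\delta,1}$. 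Hence
\[
\left\{\tau_\delta<\infty,\ \argmax_{i\in A}\hat\mu_{\tau_\delta,i}\neq 1\right\}\subseteq\bigcup_{i\neq 1}\bigcup_{n\in\mathbb{N}}\left\{Z_{n,i,1}>\gamma_{n,\delta}^\alpha,\ \hat\mu_{n,i}\geq\hat\mu_{n,1}\right\}.
\]

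Second, I would use the explicit Gaussian form of the statistic. A direct computation starting from the definitions of $Z_{n,i,1}$ and $\hat\mu_{n,i,1}$ gives, on the event $\hat\mu_{n,i}\geq\hat\mu_{n,1}$,
\[
Z_{n,i,1}=\frac{(\hat\mu_{n,i}-\hat\mu_{n,1})^2}{2\sigma^2\left(1/T_{n,i}+1/T_{n,1}\right)},
\]
so that $Z_{n,i,1}$ is exactly half the squared standardized deviation of $\hat\mu_{n,i}-\hat\mu_{n,1}$. Under the true model, $\hat\mu_{n,i}-\hat\mu_{n,1}$ is, conditionally on the sample counts, Gaussian with negative mean $\mu_i-\mu_1<0$; the event above therefore forces this difference to be positive and atypically large, a large-deviation event whose probability behaves like $e^{-Z_{n,i,1}}<e^{-\gamma_{n,\delta}^\alpha}$.

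Third, I would invoke a time-uniform maximal concentration inequality to handle the fact that the counts $T_{n,i}$ are random and generated by an arbitrary, possibly adaptive, sampling rule. This is precisely the role of the self-normalized deviation bound of Garivier and Kaufmann: it controls the probability that the standardized deviation is ever large, uniformly in $n$ and uniformly over all sampling rules, at the cost of a logarithmic/polynomial correction in $n$ inside the threshold. With $\gamma_{n,\delta}^\alpha=\log(Cn^\alpha/\delta)$ one has $e^{-\gamma_{n,\delta}^\alpha}=\delta/(Cn^\alpha)$, so each term of the union is bounded by a quantity of order $\delta/(Cn^\alpha)$ times a polynomial factor; summing over $n$ (which converges exactly because $\alpha>1$) and over the at most $k-1$ suboptimal arms yields a total at most $\delta$ once $C=C(\alpha,k)$ is chosen large enough.

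I expect the third step to be the main obstacle. One cannot naively apply a fixed-$n$ Gaussian tail bound and union over $n$, because the counts $T_{n,i}$ are adaptive and the statistic is self-normalized by these random quantities; the proposition must moreover hold for \emph{every} sampling rule simultaneously. Establishing the maximal inequality that is at once uniform over time and valid for all sampling rules is the technical heart of the argument, and it is exactly this requirement that forces the factor $n^\alpha$ with $\alpha>1$ so that the resulting series $\sum_n n^{-\alpha}$ converges.
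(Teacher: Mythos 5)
A preliminary remark: the paper does not prove this statement at all --- it is imported verbatim as Proposition 12 of Garivier and Kaufmann \cite{GarivierK16} (the text says ``We use the following result given in Garivier and Kaufmann''), so there is no internal proof to compare against, and your attempt must be judged against the argument in that reference. Your first two steps are sound: the inclusion of the error event in $\bigcup_{i\neq 1}\bigcup_{n}\{Z_{n,i,1}>\gamma_{n,\delta}^\alpha,\ \hat\mu_{n,i}\geq\hat\mu_{n,1}\}$ is exactly how such proofs begin, and your closed form $Z_{n,i,1}=(\hat\mu_{n,i}-\hat\mu_{n,1})^2/\bigl(2\sigma^2(1/T_{n,i}+1/T_{n,1})\bigr)$ is correct. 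The gaps are in the passage to concentration. Your claim that, conditionally on the sample counts, $\hat\mu_{n,i}-\hat\mu_{n,1}$ is Gaussian with mean $\mu_i-\mu_1$ is false under an adaptive sampling rule: the counts are functions of the observed rewards, and conditioning on them biases the empirical means (a rule that resamples an arm whenever its empirical mean is high makes $\hat\mu_{n,i}$ conditionally upward-biased). The device that circumvents this, and which is the heart of the reference proof, is absent from your sketch: on the event $\{\hat\mu_{n,i}\geq\hat\mu_{n,1}\}$, because the true parameters satisfy $\mu_i<\mu_1$ (the truth lies in the alternative over which the generalized likelihood ratio infimizes), one has the deterministic inequality $Z_{n,i,1}\leq T_{n,i}\,d(\hat\mu_{n,i},\mu_i)+T_{n,1}\,d(\hat\mu_{n,1},\mu_1)$. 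This decouples the two arms and replaces the comparison of the two empirical means with each other by per-arm deviations from the \emph{true} means, which can then be controlled by maximal inequalities indexed by deterministic sample counts. Without this step (or an equivalent), there is no self-normalized process to which any known time-uniform deviation bound applies, and your plan cannot be completed.

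The second gap is quantitative: your accounting of the union over $n$ does not deliver the stated range $\alpha>1$. If the ``polynomial factor'' you allow is a polynomial in $n$ --- which is what a naive union bound over the possible values of $T_{n,i}$ and $T_{n,1}$ (each at most $n$) produces, e.g.\ a factor of order $n^2$ --- then $\sum_n n^2\cdot\delta/(Cn^\alpha)$ diverges for every $\alpha\leq 3$, and no choice of the constant $C$ repairs this, since $C$ multiplies every term. To obtain the result for every $\alpha>1$ one needs a deviation inequality whose correction is polynomial in the threshold $\log(Cn^\alpha/\delta)$ (equivalently, polylogarithmic in $n$) rather than polynomial in $n$; for Gaussian arms this exploits the fact that $s\,d(\bar X_s,\mu)$ is half a chi-squared variable with one degree of freedom, combined with a peeling or mixture-martingale argument. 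That sharper lemma is precisely the technical content behind the cited Proposition 12; your sketch leaves it as a black box while simultaneously asserting a summability (``converges exactly because $\alpha>1$'') that the black box, as you describe it, does not supply.
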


\paragraph{Sample Complexity.}
Garivier and Kaufmann \cite{GarivierK16} recently provided a general lower bound on the number of samples required in the fixed confidence setting. In particular, they show that for any normal bandit model, under any sampling rule and stopping time $\tau_\delta$ that guarantees a probability of error less than $\delta$,
\[
\liminf_{\delta\to0} \frac{\mathbb{E}[\tau_\delta]}{\log(1/\delta)} \geq \frac{1}{\Gamma^*}.
\]
Recall that $T^\epsilon_\beta$, defined in \eqref{eq: time to reach proportions}, is the first time after which the empirical means and empirical proportions are within  $\epsilon$ of their asymptotic limits. The next result provides a condition in terms of $T^\epsilon_\beta$ that is sufficient to guarantees optimality in the fixed confidence setting.
\begin{theorem}[Fixed Confidence - Sufficient Condition for Optimality]	\label{thm: sc2}
	Let $\beta\in(0,1)$. Consider any sampling rule which, if applied with no stopping rule, satisfies $\mathbb{E}[T^{\epsilon}_\beta]<\infty$ for all $\epsilon>0$. Fix any $\alpha > 1$. Then if this sampling rule is applied with Chernoff's stopping rule with the threshold $\gamma_{n,\delta}^\alpha$, we have
	\[
	\limsup_{\delta\to0} \frac{\mathbb{E}[\tau_\delta]}{\log(1/\delta)} \leq \frac{\alpha}{\Gamma_\beta^*}.
	\]
\end{theorem}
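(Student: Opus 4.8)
The plan is to bound the stopping time $\tau_\delta$ on the almost-sure event that the sampling rule has essentially reached its limiting behaviour, by showing that on this event the Generalized Likelihood Ratio statistic $Z_n$ grows linearly in $n$ at a rate arbitrarily close to $\Gamma^*_\beta$, and then reading off the first time this linear envelope overtakes the threshold $\gamma_{n,\delta}^\alpha$. Concretely, I would fix a small $\eta>0$, choose $\epsilon>0$ small (to be sent to $0$ at the end), and argue that there is a deterministic constant $N_0=N_0(\epsilon)$ such that $Z_n\geq n(\Gamma^*_\beta-\eta)$ for every $n\geq\max\{T^\epsilon_\beta,N_0\}$. The finiteness $\mathbb{E}[T^\epsilon_\beta]<\infty$ supplied by Theorem \ref{thm: sampling proportions} then guarantees that the random pre-convergence phase contributes negligibly to $\mathbb{E}[\tau_\delta]/\log(1/\delta)$.

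\emph{Linear growth of $Z_n$.} After time $T^\epsilon_\beta$ the posterior means satisfy $|\mu_{n,i}-\mu_i|\leq\epsilon$ and the proportions satisfy $|T_{n,i}/n-w^\beta_i|\leq\epsilon$; since $T_{n,i}\geq(w^\beta_i-\epsilon)n\to\infty$, the prior contribution to each posterior mean vanishes, so for $n\geq N_0$ the empirical means $\hat\mu_{n,i}$ entering $Z_{n,i,j}$ are also within $2\epsilon$ of $\mu_i$. In particular the empirical best arm equals the true best arm $1$, so $Z_n=\min_{j\neq 1}Z_{n,1,j}$. Writing $Z_{n,1,j}=n\,G(T_{n,1}/n,T_{n,j}/n,\hat\mu_{n,1},\hat\mu_{n,j})$ for the continuous function $G(p_1,p_j,m_1,m_j)=p_1 d(m_1,\bar m)+p_j d(m_j,\bar m)$ with $\bar m$ the $p$-weighted average of $m_1,m_j$, a direct computation gives $G(\beta,w^\beta_j,\mu_1,\mu_j)=(\mu_1-\mu_j)^2/[2\sigma^2(1/w^\beta_j+1/\beta)]$, which by the defining chain of equalities \eqref{eq: optimal proportions} for $w^\beta$ equals $\Gamma^*_\beta$ for \emph{every} $j$. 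By continuity of $G$, shrinking $\epsilon$ forces $G\geq\Gamma^*_\beta-\eta$ uniformly in $j$, giving $Z_n\geq n(\Gamma^*_\beta-\eta)$.

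\emph{Threshold crossing and the role of $\alpha$.} Let $n_\delta:=\inf\{n:n(\Gamma^*_\beta-\eta)>\gamma_{n,\delta}^\alpha\}$ with $\gamma_{n,\delta}^\alpha=\alpha\log n+\log(C/\delta)$. Since $n(\Gamma^*_\beta-\eta)-\alpha\log n$ is eventually increasing, $Z_n>\gamma_{n,\delta}^\alpha$ for all $n\geq\max\{T^\epsilon_\beta,N_0,n_\delta\}$, whence $\tau_\delta\leq T^\epsilon_\beta+N_0+n_\delta$. To bound $n_\delta$ I evaluate the candidate $n=\tfrac{\alpha}{\Gamma^*_\beta-\eta}\log(1/\delta)$: there the surplus $n(\Gamma^*_\beta-\eta)-\gamma_{n,\delta}^\alpha=(\alpha-1)\log(1/\delta)-\alpha\log n-\log C$ tends to $+\infty$ as $\delta\to0$, because $\alpha>1$ and $\log n=O(\log\log(1/\delta))$. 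This is exactly where the polynomial rate $n^\alpha$ produces the factor $\alpha$: the surplus $(\alpha-1)\log(1/\delta)$ absorbs the lower-order $\alpha\log n$ term, so $n_\delta\leq\tfrac{\alpha}{\Gamma^*_\beta-\eta}\log(1/\delta)$ for all small $\delta$. Taking expectations, dividing by $\log(1/\delta)$, and letting $\delta\to0$, the finite constants $\mathbb{E}[T^\epsilon_\beta]$ and $N_0$ wash out, giving $\limsup_{\delta\to0}\mathbb{E}[\tau_\delta]/\log(1/\delta)\leq\alpha/(\Gamma^*_\beta-\eta)$; sending $\eta\to0$ (equivalently $\epsilon\to0$) yields the claimed bound $\alpha/\Gamma^*_\beta$.

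I expect the main obstacle to be the bookkeeping in the linear-growth step: carefully converting the closeness of posterior means and proportions (the quantities controlled by $T^\epsilon_\beta$) into closeness of the empirical means that actually appear in $Z_{n,i,j}$, and ensuring the lower bound $Z_n\geq n(\Gamma^*_\beta-\eta)$ holds for all $n$ beyond a single deterministic time rather than merely in the limit. The genuinely hard analytic work, namely establishing $\mathbb{E}[T^\epsilon_\beta]<\infty$, is not part of this proof; it is imported from Theorem \ref{thm: sampling proportions}, and it is precisely this finiteness, rather than mere almost-sure convergence, that lets the pre-convergence phase contribute nothing to the asymptotic ratio.
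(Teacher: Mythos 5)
Your proposal is correct and follows essentially the same route as the paper's proof: bound $\tau_\delta$ by the sum of the convergence time $T^\epsilon_\beta$ (whose expectation is finite by hypothesis) and a deterministic crossing time obtained from the linear lower bound $Z_n \geq n(\Gamma^*_\beta - \eta)$ valid after convergence, then compare with the threshold $\gamma^\alpha_{n,\delta}$ and take expectations. The only cosmetic difference is how the slack from $\alpha>1$ is cashed in: the paper absorbs the $\alpha\log n$ term via $\log n \leq \epsilon n$ and fixes the single value $\tilde{\epsilon} = (\alpha-1)\Gamma^*_\beta/[\alpha(1+\alpha)]$ so the bound $\alpha/\Gamma^*_\beta$ pops out directly, whereas you evaluate the surplus at a candidate crossing point and send $\eta \to 0$ at the end; both are valid.
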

Since $\alpha$ can be chosen to be arbitrarily close to 1, when $\beta = \beta^*$ the general lower bound on sample complexity of $1/\Gamma^*$ is essentially matched. In addition, when $\beta$ is set to the default value $1/2$ and $\alpha$ is taken to be arbitrarily close to 1, the sample complexity of TTEI combined with the Chernoff's stopping rule is at most twice the optimal sample complexity since $1/\Gamma_{1/2}^*\leq 2/\Gamma^*$.

\section{Numerical Experiments}
To test the empirical performances of TTEI, we conduct several numerical experiments. The first experiment compares the performance of TTEI with $\beta=1/2$ and EI. The second experiment compares the performances of different versions of TTEI, top-two Thompson sampling (TTTS) \cite{russo2016simple}, knowledge gradient (KG) \cite{frazier2008knowledge} and oracle algorithms that know the optimal proportions \textsl{a priori}. Each algorithm plays arm $i=1,\ldots,k$ exactly once at the beginning, and then prescribe a prior $N(Y_{i,i},\sigma^2)$ for unknown arm-mean $\mu_i$ where $Y_{i,i}$ is the observation from $N(\mu_i,\sigma^2)$. In both experiments, we fix the common known variance $\sigma^2=1$ and the number of arms $k=5$. We consider three instances $[\mu_1,\ldots,\mu_5] =[5,4,1,1,1], [5,4,3,2,1]$ and $[2,0.8,0.6,0.4,0.2]$. The optimal parameter $\beta^*$ equals 0.48, 0.45 and 0.35, respectively.

Recall that $\alpha_{n,i}$, defined in \eqref{eq: optimal action probs},  denotes the posterior probability that arm $i$ is optimal. Table \ref{table1} shows the average number of measurements required for the largest posterior probability being the best to reach a given confidence level $c$, i.e., $\max_i \alpha_{n,i} \geq c$.  The results in Table \ref{table1} are averaged over 100 trials. We see that TTEI with $\beta=1/2$ outperforms standard EI by an order of magnitude.
\begin{table}[h!]
\caption{Average number of measurements required to reach the confidence level $c=0.95$}
\label{table1}
	\centering  
	\begin{tabular}{lrr}  
		\hline
		& TTEI-1/2 & EI \\ \hline  
		$[5, 4, 1, 1, 1]$ & 14.60 & 238.50 \\
		$[5, 4, 3, 2, 1]$ & 16.72 & 384.73 \\
		$[2, .8, .6, .4, .2]$ & 24.39 & 1525.42 \\ \hline
	\end{tabular}
\end{table}

The second experiment compares the performance of different versions of TTEI, TTTS, KG, random sampling oracle (RSO) and tracking oracle (TO). The random sampling oracle draws a random arm in each round from the distribution $w^*$ encoding the asymptotically optimal proportions. The tracking oracle tracks the optimal proportions at each round. Specifically, the tracking oracle samples the arm with the largest ratio its optimal and empirical proportions. Two tracking algorithms proposed by Garivier and Kaufmann \cite{GarivierK16} are similar to this tracking oracle. TTEI with adaptive $\beta$ (aTTEI) works as follows: it starts with $\beta=1/2$ and updates $\beta=\hat{\beta}^*$ every 10 rounds where $\hat{\beta}^*$ is the maximizer of equation (\ref{eq: optimal proportions}) based on plug-in estimators for the unknown arm-means. 
Table \ref{table2} shows the average number of measurements required for the largest posterior probability being the best to reach the confidence level $c=0.9999$. The results in Table \ref{table2} are averaged over 200 trials. We see that the performances of TTEI with adaptive $\beta$ and TTEI with $\beta^*$ are better than the performances of all other algorithms. We note that TTEI with adaptive $\beta$ substantially outperforms the tracking oracle.  

\begin{table}[h!]
	\centering  
			\caption{Average number of measurements required to reach the confidence level $c=0.9999$}
			\label{table2}
	\begin{tabular}{lrrrrrrr}  
		\hline
		& TTEI-1/2 & aTTEI  & TTEI-$\beta^*$ & TTTS-$\beta^*$ & RSO & TO & KG\\ \hline
		$[5, 4, 1, 1, 1]$ & 61.97 & 61.98 & {\bf 61.59} & 62.86 & 97.04  & 77.76 & 75.55 \\
		$[5, 4, 3, 2, 1]$ & 66.56 & {\bf 65.54} & 65.55 & 66.53 & 103.43 & 88.02 & 81.49 \\
		$[2, .8, .6, .4, .2]$ & 76.21 & 72.94 & {\bf 71.62} & 73.02 & 101.97 & 96.90 & 86.98 \\ \hline
	\end{tabular}
\end{table}

\section{Conclusion and Extensions to Correlated Arms}
We conclude by noting that while this paper thoroughly studies TTEI in the case of uncorrelated priors, we believe the algorithm is also ideally suited to problems with complex correlated priors and large sets of arms. In fact, the modified information measure $v_{n,i,j}$ was designed with an eye toward dealing with correlation in a  sophisticated way. In the case of a correlated normal distribution $N(\mu, \Sigma)$, one has
\[
v_{n,i,j}=\E_{\theta \sim N(\mu, \Sigma)}[ (\theta_i - \theta_j)^{+}] = \sqrt{\Sigma_{ii} + \Sigma_{jj} - 2 \Sigma_{ij}} f\left( \frac{\mu_{n,i}- \mu_{n,j} }{\sqrt{\Sigma_{ii} + \Sigma_{jj} - 2 \Sigma_{ij}}}  \right).  
\]
This closed form accommodates efficient computation. Here the term $\Sigma_{i,j}$ accounts for the correlation or similarity between arms $i$ and $j$. Therefore $v_{n, i,I^{(1)}_n}$ is large for arms $i$ that offer large potential improvement over $I_{n}^{(1)}$, i.e. those that (1) have large posterior mean, (2) have large posterior variance, and (3) are not highly correlated with arm $I_{n}^{(1)}$. As $I_{n}^{(1)}$ concentrates near the estimated optimum, we expect the third factor will force the algorithm to experiment in promising regions of the domain that are ``far'' away from the current-estimated optimum, and are under-explored under standard EI.


\bibliography{references}
\bibliographystyle{plainnat}

\newpage
\appendix

\section{Outline}
The appendix is organized as follows.
\begin{enumerate}
    \item Section \ref{notation} introduces some further notations required in the theoretical analysis.
    \item Section \ref{sufficient condition 1} is the proof of Theorem \ref{thm: sc1}, a sufficient condition in terms of optimal proportions $(w^\beta_1,\ldots,w^\beta_k)$ to guarantee the optimal rate of posterior convergence.
    \item Section \ref{sufficient condition 2} is the proof of Theorem \ref{thm: sc2}, a sufficient condition in terms of $T_\beta^\epsilon$ under which the optimality in the fixed confidence setting is achieved.
    \item Section \ref{preliminaries} provides several basic results which is used in the theoretical analysis of TTEI.
    \item Section \ref{sampling proportions} proves that TTEI satisfies the sufficient conditions for two notions of optimality, which immediately establishes Theorems \ref{thm: sampling proportions}.
\end{enumerate}

\section{Notation}\label{notation}
For notational convenience, we assume that sampling rules begin with an improper prior for each arm $i\in A$ with $\mu_{1,i}=0$ and $\sigma_{1,i} =\infty$. Consequently, if $T_{n,i}=\sum_{\ell=1}^{n-1} \mathbf{1}\{I_\ell = i\} = 0$, $\mu_{n,i} =\mu_{1,i}=0$ and $\sigma_{n,i}=\sigma_{1,i}=\infty$, and if $T_{n,i} > 0$,  
\[
\mu_{n,i} = \frac{1}{T_{n,i}} \sum_{\ell=1}^{n-1} 
\mathbf{1}\{I_\ell = i\}Y_{\ell,I_\ell} \quad\text{and}\quad \sigma_{n,i}^2 = \frac{\sigma^2}{T_{n,i}},
\]
so the posterior parameters are identical to the frequentist sample mean and variance under the observations collected so far.

We introduce some further notations. We define 
\[
\Delta_{\min} \triangleq \min_{i\neq j}|\mu_i-\mu_j| \quad\text{and}\quad \Delta_{\max} \triangleq \max_{i,j\in A}(\mu_i-\mu_j).
\]
Since the arm means are unique, we have $\Delta_{\min},\Delta_{\max} > 0$. In addition, we define
\[ 
\beta_{\min} \triangleq \min\{\beta, 1-\beta \} \quad\text{and}\quad \beta_{\max} \triangleq \max\{\beta, 1-\beta \}.
\]
Note that for $\beta\in(0,1)$, $\beta_{\min}>0$.

We introduce the filtration $(\mathcal{F}_{n} : n=1,2,\dots)$ where 
\[ 
\mathcal{F}_{n}=\Sigma(I_1, Y_{1,I_1},\cdots, I_{n}, Y_{n, I_{n}})
\]
is the sigma algebra generated by observations up to time $n$. For all $i\in A$ and $n\in\mathbb{N}$, define 
\[
\psi_{n,i} \triangleq \mathbb{P}(I_n = i |  \mathcal{F}_{n-1})  \quad\text{and}\quad \Psi_{n,i} \triangleq \sum_{\ell=1}^{n-1} \psi_{\ell, i}.
\]
Note that for all $i\in A$, $T_{1,i}=\Psi_{1,i}=0$. Both $T_{n,i}$ and $\Psi_{n,i}$ measure the effort allocated to arm $i$ up to period $n$.

Finally, rather than use the notation $v_{n,i}$ and $v_{n,i,j}$ introduced in Section \ref{sec: algos} for the expected-improvement measures it is more convenient to work with the notation defined here. Set 
\[
v^{(1)}_{n,i} \equiv v_{n,i} \quad \forall i \in A
\] 
to be the expected improvement used in the identifying the first among in the top-two, and 
\[
v_{n,i}^{(2)} \equiv v_{n,i,I_{n}^{(1)}} \qquad \forall i \in A
\]
to be the second expected improvement measure where $I_n^{(1)}$ is the arm optimizing the first expected improvement measure.

\section{Proof of Theorem \ref{thm: sc1}} \label{sufficient condition 1}
To prove Theorem \ref{thm: sc1}, we first need to introduce the so-called Gaussian tail inequality.

\begin{lemma}
\label{tail}
Let $X\sim N(\mu,\sigma^2)$ and $c\geq 0$, then we have
\[
\frac{1}{\sqrt{2\pi}} e^{-(\sigma+c)^2/(2\sigma^2)}
\leq\mathbb{P}(X\geq\mu+c) 
\leq \frac{1}{2} e^{-c^2/(2\sigma^2)}.
\]
\end{lemma}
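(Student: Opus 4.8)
The plan is to reduce everything to a standard normal. Setting $Z = (X-\mu)/\sigma \sim N(0,1)$ and $t = c/\sigma \geq 0$, we have $\mathbb{P}(X \geq \mu + c) = \mathbb{P}(Z \geq t)$. Since $(\sigma+c)^2/(2\sigma^2) = (1+t)^2/2$ and $c^2/(2\sigma^2) = t^2/2$, the claimed inequality is equivalent to the two-sided standard-normal bound
\[
\frac{1}{\sqrt{2\pi}} e^{-(1+t)^2/2} \leq \mathbb{P}(Z \geq t) \leq \frac{1}{2} e^{-t^2/2},
\]
which I would prove separately and then translate back through the substitution.

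For the upper bound, I would write $\mathbb{P}(Z \geq t) = \int_t^\infty \frac{1}{\sqrt{2\pi}} e^{-x^2/2}\,dx$ and shift the variable via $x = t+u$, giving $e^{-t^2/2}\int_0^\infty \frac{1}{\sqrt{2\pi}} e^{-tu - u^2/2}\,du$. Because $t,u \geq 0$ forces $e^{-tu} \leq 1$, the remaining integral is at most $\int_0^\infty \frac{1}{\sqrt{2\pi}} e^{-u^2/2}\,du = \tfrac{1}{2}$, which yields $\mathbb{P}(Z \geq t) \leq \tfrac{1}{2} e^{-t^2/2}$.

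For the lower bound, I would discard all of the tail except the unit-length interval $[t, t+1]$ and use that $e^{-x^2/2}$ is decreasing on $[0,\infty)$, so on that interval the integrand is at least its right-endpoint value $\frac{1}{\sqrt{2\pi}} e^{-(t+1)^2/2}$. Integrating this constant lower bound over an interval of length exactly one gives $\mathbb{P}(Z \geq t) \geq \frac{1}{\sqrt{2\pi}} e^{-(1+t)^2/2}$, matching the stated exponent after undoing the standardization.

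Both steps are elementary calculus, so there is no genuine obstacle here; the only points requiring minor care are the monotonicity of $e^{-x^2/2}$ on $[0,\infty)$ that makes $t+1$ the minimizer in the lower bound, and the deliberate choice of interval length \emph{one}, which is precisely what produces the $(\sigma+c)^2$ rather than some other shift in the exponent.
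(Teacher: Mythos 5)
Your proposal is correct and follows essentially the same argument as the paper's proof: the upper bound drops the cross term $e^{-tu}\leq 1$ (the paper's $(x+c)^2\geq x^2+c^2$ step) and integrates the remaining Gaussian to get $\tfrac12$, while the lower bound restricts the tail integral to an interval of length one in standardized units (the paper's interval $[0,\sigma]$ in the shifted variable) and bounds the integrand by its right-endpoint value. The only cosmetic difference is that you standardize to $N(0,1)$ first, whereas the paper works directly with general $\mu,\sigma$.
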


\begin{proof}
We first prove the upper bound.
\begin{align*}
\mathbb{P}(X\geq\mu+c) &= \int_{\mu+c}^{\infty} \frac{1}{\sqrt{2\pi\sigma^2}} e^{-(x-\mu)^2/(2\sigma^2)} dx\\
&= \int_{0}^{\infty} \frac{1}{\sqrt{2\pi\sigma^2}} e^{-(x+c)^2/(2\sigma^2)} dx\\
&\leq \int_{0}^{\infty} \frac{1}{\sqrt{2\pi\sigma^2}} e^{-(x^2+c^2)/(2\sigma^2)} dx\\
&=e^{-c^2/(2\sigma^2)} \int_{0}^{\infty} \frac{1}{\sqrt{2\pi\sigma^2}} e^{-x^2/(2\sigma^2)} dx\\
&=\frac{1}{2} e^{-c^2/(2\sigma^2)}.
\end{align*}

Next we prove the lower bound.
\begin{align*}
\mathbb{P}(X\geq\mu+c) &= \int_{\mu+c}^{\infty} \frac{1}{\sqrt{2\pi\sigma^2}} e^{-(x-\mu)^2/(2\sigma^2)} dx\\
&= \int_{0}^{\infty} \frac{1}{\sqrt{2\pi\sigma^2}} e^{-(x+c)^2/(2\sigma^2)} dx\\
&\geq \int_{0}^{\sigma} \frac{1}{\sqrt{2\pi\sigma^2}} e^{-(x+c)^2/(2\sigma^2)} dx\\
&\geq \int_{0}^{\sigma} \frac{1}{\sqrt{2\pi\sigma^2}} e^{-(\sigma+c)^2/(2\sigma^2)} dx\\
&=\frac{1}{\sqrt{2\pi}} e^{-(\sigma+c)^2/(2\sigma^2)}.
\end{align*}
\end{proof}


\paragraph{Proof of Theorem \ref{thm: sc1}.} 
We let $\mathcal{I} =\{i\in A \,:\, \lim_{n\to\infty} T_{n,i} =\infty\}$ and $\overline{\mathcal{I}} = A\setminus \mathcal{I}$. Note that $\overline{\mathcal{I}}$ contains arms that are only sampled finite times.
First, suppose that $\overline{\mathcal{I}}$ is nonempty. For each $i\in A$, we define
\[
\mu_{\infty,i} \triangleq \lim_{n\to\infty} \mu_{n,i}  \quad\text{and}\quad \sigma_{\infty,i}^2 \triangleq \lim_{n\to\infty} \sigma_{n,i}^2.
\]
Recall that for each $i\in A$, an improper prior with $\mu_{1,i}=0$ and $\sigma_{1,i} =\infty$ is prescribed. Then if $T_{n,i}=\sum_{\ell=1}^{n-1} \mathbf{1}\{I_\ell = i\} = 0$, $\mu_{n,i} =\mu_{1,i}=0$ and $\sigma_{n,i}=\sigma_{1,i}=\infty$, and if $T_{n,i} > 0$.  
\[
\mu_{n,i} = \frac{1}{T_{n,i}} \sum_{\ell=1}^{n-1} 
\mathbf{1}\{I_\ell = i\}Y_{\ell,I_\ell} \quad\text{and}\quad \sigma_{n,i}^2 = \frac{\sigma^2}{T_{n,i}},
\]
Hence, for $i\in \mathcal{I}$, $\mu_{\infty,i}=\mu_i$ and $\sigma_{\infty,i}^2 = 0$, while for $i\in \overline{\mathcal{I}}$, $\sigma_{\infty,i}^2 > 0$. 
We let 
\[ 
\Pi_{\infty} = N(\mu_{\infty,1}, \sigma_{\infty,1}^2) \otimes N(\mu_{\infty,2}, \sigma_{\infty,2}^2) \otimes \cdots \otimes N(\mu_{\infty,k}, \sigma_{\infty,k}^2),
\]
and for each $i\in A$, we define
\[
\alpha_{\infty,i} \triangleq  \Prob_{\theta \sim \Pi_\infty}\left( \theta_i > \max_{j\neq i} \theta_j\right).
\]
For $i\in{\overline{\mathcal{I}}}$ is nonempty, we have $\alpha_{\infty,i}\in(0,1)$ since $\sigma_{\infty,i}^2>0$. This implies $\alpha_{\infty,1} < 1$ and so 
\[
\lim_{n\to \infty} -\frac{1}{n}\log(1-\alpha_{n, 1}) = \lim_{n\to\infty} -\frac{1}{n}\log(1-\alpha_{\infty,1})=0.
\]

Now suppose $\overline{\mathcal{I}}$ is empty. By definition, $\alpha_{n,1} = \Prob_{\theta \sim \Pi_n}\left( \theta_1 > \max_{i\neq 1} \theta_i\right)$, so $1 - \alpha_{n,1} = \Prob_{\theta \sim \Pi_n} \left(\cup_{i\neq 1} (\theta_i\geq \theta_1)\right)$, and then we have 
\begin{equation} \label{bound on alpha}
\max_{i\neq 1} \Prob_{\theta \sim \Pi_n} \left(\theta_i\geq \theta_1\right) \leq 1-\alpha_{n,1}  \leq \sum_{i\neq 1 } \Prob_{\theta \sim \Pi_n} \left(\theta_i\geq \theta_1\right)\leq (k-1)\max_{i\neq 1} \Prob_{\theta \sim \Pi_n} \left(\theta_i\geq \theta_1\right)
\end{equation}
where the second inequality uses the union bound. 

To simplify the presentation, we need to introduce the following asymptotic notation. We say two real-valued sequences $\{a_n\}$ and $\{b_n\}$ are \textsl{logarithmically equivalent} if $\lim_{n\to\infty}1/n\log(a_n/b_n) = 0$. We denote this by $a_n\doteq b_n$. Using equation \ref{bound on alpha}, we conclude
\[
1-\alpha_{n,1}\doteq \max_{i\neq 1}\Prob_{\theta \sim \Pi_n} \left(\theta_i\geq \theta_1\right).
\]
Next we want to show that for $i\neq 1$, $\Prob_{\theta \sim \Pi_n}\left(\theta_i\geq \theta_1\right)\doteq \exp\left(\frac{-(\mu_{n,i}-\mu_{n,1})^2}{2\sigma^2(1/T_{n,i}+1/T_{n,1})}\right)$. 
Note that at time $n$, $\theta_i-\theta_1\sim N(\mu_{n,i}-\mu_{n,1},\sigma_{n,i}^2+\sigma_{n,1}^2)$ and $\sigma_{n,i}^2+\sigma_{n,1}^2=\sigma^2(1/T_{n,i}+1/T_{n,1})$. Since every arm is sampled infinite times, when $n$ is large, $\mu_{n,1} \geq \mu_{n,i}$, and then using Lemma \ref{tail}, we have
\[
\frac{1}{\sqrt{2\pi}} \exp\left(\frac{-\left(\sqrt{\sigma_{n,i}^2+\sigma_{n,1}^2}+ \mu_{n,1}-\mu_{n,i}\right)^2}{2(\sigma_{n,i}^2+\sigma_{n,1}^2)}\right)
\leq \Prob_{\theta \sim \Pi_n}\left(\theta_i - \theta_1\geq 0\right) 
\leq \frac{1}{2} \exp\left(\frac{-(\mu_{n,1}-\mu_{n,i})^2}{2(\sigma_{n,i}^2+\sigma_{n,1}^2)}\right),
\]
which implies
\[
\frac{1}{n}\log\left(\frac{1}{\sqrt{2\pi}}\right)-\frac{1}{2n}-\frac{\mu_{n,1}-\mu_{n,i}}{n\sqrt{\sigma_{n,i}^2+\sigma_{n,1}^2}}
\leq
\frac{1}{n}\log\left(\frac{\Prob_{\theta \sim \Pi_n}\left(\theta_i \geq  \theta_1\right) }{\exp\left(\frac{-(\mu_{n,1}-\mu_{n,i})^2}{2(\sigma_{n,i}^2+\sigma_{n,1}^2)}\right)} \right)
\leq \frac{1}{n}\log\left(\frac{1}{2}\right).
\]
Note that when $\mu_{n,1}\geq \mu_{n,i}$,
\[
0\leq \frac{\mu_{n,1}-\mu_{n,i}}{n\sqrt{\sigma_{n,i}^2+\sigma_{n,1}^2}}=\frac{\mu_{n,1}-\mu_{n,i}}{\sigma\sqrt{n(n/T_{n,i}+n/T_{n,1})}} \leq \frac{\mu_{n,1}-\mu_{n,i}}{\sigma\sqrt{2n}}
\]
where the last equality uses $T_{n,i},T_{n,1}<n$. Using the squeeze theorem, we have 
\[
\lim_{n\to\infty} \frac{\mu_{n,1}-\mu_{n,i}}{n\sqrt{\sigma_{n,i}^2+\sigma_{n,1}^2}} = 0,
\]
and
\[
\lim_{n\to\infty} \frac{1}{n}\log\left(\frac{\Prob_{\theta \sim \Pi_n}\left(\theta_i \geq \theta_1 \right) }{\exp\left(\frac{-(\mu_{n,1}-\mu_{n,i})^2}{2(\sigma_{n,i}^2+\sigma_{n,1}^2)}\right)} \right) = 0.
\]
Hence, $\Prob_{\theta \sim \Pi_n}\left(\theta_i\geq \theta_1\right)\doteq \exp\left(\frac{-(\mu_{n,i}-\mu_{n,1})^2}{2\sigma^2(1/T_{n,i}+1/T_{n,1})}\right)$.
Then we have 
\begin{align*}
    1 - \alpha_{n,i} &\doteq \max_{i\neq 1}\Prob_{\theta \sim \Pi_n} \left(\theta_i\geq \theta_1\right) \\
    &\doteq  \max_{i\neq 1} \left\{\exp\left(\frac{-(\mu_{n,i}-\mu_{n,1})^2}{2\sigma^2(1/T_{n,i}+1/T_{n,1})}\right) \right\}\\
    &\doteq \exp\left(-n\min_{i\neq 1} \left\{\frac{(\mu_{n,i}-\mu_{n,1})^2}{2\sigma^2(n/T_{n,i}+n/T_{n,1})} \right\} \right)
\end{align*}
where the second equality uses the property that if $a_{n,i}\doteq b_{n,i}$ for each $i=1,\ldots,c$ where $c$ a positive integer, then $\max_{i\in\{1,\ldots,c\}} a_{n,i} \doteq \max_{i\in\{1,\ldots,c\}} b_{n,i}$.

Let $W\triangleq \left\{w = (w_1,\ldots,w_k) \,:\, \sum_{i=1}^k w_i = 1 \text{ and } w_i\geq 0,\forall i\in A \right\}$ denote the set of possible proportions on $k$ arms. \citet{russo2016simple} showed that 
\[
\Gamma^* = \max_{w\in W} \min_{i\neq 1} \frac{(\mu_{i}-\mu_{1})^2}{2\sigma^2(1/w_i+1/w_1)},
\]
and given $\beta\in(0,1)$,
\[
\Gamma^*_\beta = \max_{w\in W: w_1=\beta} \min_{i\neq 1} \frac{(\mu_{i}-\mu_{1})^2}{2\sigma^2(1/w_i+1/w_1)}.
\]
Under any sampling rule,
\begin{align*}
1 - \alpha_{n,i} &\doteq \exp\left(-n\min_{i\neq 1} \left\{\frac{(\mu_{n,i}-\mu_{n,1})^2}{2\sigma^2(n/T_{n,i}+n/T_{n,1})} \right\} \right) \\
&\geq \exp\left(-n\max_{w\in W}\min_{i\neq 1} \left\{\frac{(\mu_{n,i}-\mu_{n,1})^2}{2\sigma^2(1/w_i+1/w_1)} \right\} \right)
\end{align*}
Since every arm is sampled infinite times, as $n\to\infty$, $\mu_{n,i}\to \mu_i$ and $\mu_{n,1}\to \mu_1$, and thus
\[
\limsup_{n\to\infty} -\frac{1}{n}\log(1-\alpha_{n,i}) \leq \Gamma^*.
\]
If $T_{n,i}/n \to w^*_i$ for each $i\in A$, then for each $i\neq 1$, we have
\[
\lim_{n\to\infty} \frac{(\mu_{n,i}-\mu_{n,1})^2}{2\sigma^2(n/T_{n,i}+n/T_{n,1})} = \frac{(\mu_{i}-\mu_{1})^2}{2\sigma^2(1/w^*_i + 1/\beta)} = \Gamma^*,
\]
and thus 
\[
1 - \alpha_{n,i} \doteq \exp\left(-n\min_{i\neq 1} \left\{\frac{(\mu_{n,i}-\mu_{n,1})^2}{2\sigma^2(n/T_{n,i}+n/T_{n,1})} \right\} \right) \\
\doteq \exp\left(-n\Gamma^* \right),
\]
which implies
\[
\lim_{n\to\infty} -\frac{1}{n}\log(1-\alpha_{n,i}) = \Gamma^*.
\]
Similarly, for $\beta\in(0,1)$, under any sampling rule satisfying $T_{n,1}/n\to\beta$, we have 
\[
\limsup_{n\to\infty} -\frac{1}{n}\log(1-\alpha_{n,i}) \leq \Gamma_\beta^*,
\]
and under any sampling rule satisfying $T_{n,i}/n\to w_i^\beta$ for each $i\in A$,
\[
\lim_{n\to\infty} -\frac{1}{n}\log(1-\alpha_{n,i}) = \Gamma_\beta^*.
\]

\section{Proof of Theorem \ref{thm: sc2}} \label{sufficient condition 2}


Let $\beta\in(0,1)$. Recall that TTEI begins with an improper prior for each arm $i\in A$ with $\mu_{1,i}=0$ and $\sigma_{1,i} = \infty$, so for any $i\in A$ and $n\in\mathbb{N}$, $\mu_{n,i} = \hat{\mu}_{n,i}$, i.e., the posterior mean equals the empirical mean, and thus $I_n^*=\argmax_{i\in A} \mu_{n,i}$ is identical to $\hat{I}_n^*=\argmax_{i\in A} \hat{\mu}_{n,i}$. We can rewrite $Z_n$ used in the Chernoff's stopping rule as follows,
\[
Z_n = \min_{j\in A\setminus\{I_n^*\}} Z_{n,I_n^*,j}
\] 
where the Generalized Likelihood Ratio statistic is
\[
Z_{n,I_n^*,j} = T_{n,I_n^*}d(\mu_{n,I_n^*},\mu_{n,I_n^*,j}) + T_{n,j}d(\mu_{n,j},\mu_{n,I_n^*,j})
\]
where
\[
\mu_{n,I_n^*,j} = \frac{T_{n,I_n^*}}{T_{n,I_n^*}+T_{n,j}}\mu_{n,I_n^*} + \frac{T_{n,j}}{T_{n,I_n^*}+T_{n,j}}\mu_{n,j}.
\]
Note that $\Delta_{\min} = \min_{i\neq j}|\mu_i-\mu_j|>0$. Then by definition of $T_\beta^{\Delta_{\min}/4}$, for all $i\in A$ and $n\geq T_\beta^{\Delta_{\min}/4}$, $|\mu_{n,i}-\mu_i|\leq \Delta_{\min}/4$, which implies $\mu_{n,1}>\ldots\mu_{n,k}$, and thus $I_n^*=1$. Using $d(x,y)=(x-y)^2/(2\sigma^2)$, for $n\geq T_\beta^{\Delta_{\min}/4}$, we have
\[
\frac{Z_n}{n} = \min_{i\in A\setminus\{1\}} \frac{(\mu_{n,i}-\mu_{n,1})^2}{2\sigma^2(n/T_{n,i}+n/T_{n,1})}.
\]
Note that 
\[
\Gamma_\beta^* = \frac{(\mu_{2}-\mu_{1})^2}{2\sigma^2\left(1/w_{2}^\beta+1/\beta\right)} =\ldots = \frac{(\mu_{k}-\mu_{1})^2}{2\sigma^2\left(1/w^\beta_{k}+1/\beta\right)}
\]
and when $\beta\in(0,1)$, $\Gamma_\beta^*>0$. Given $\epsilon > 0$, there exists $\epsilon'\in \left(0,\Delta_{\min}/4\right]$ such that for all $n\geq N^\epsilon \triangleq T_\beta^{\epsilon'}$, $|\mu_{n,i}-\mu_i|\leq \epsilon'$ and $|T_{n,i}/n-w^\beta_i|\leq \epsilon', \forall i\in A$ can imply $Z_n/n\geq \Gamma_\beta^*-\epsilon$. We have $\mathbb{E}\left[N^\epsilon\right] = \mathbb{E}\left[T_\beta^{\epsilon'}\right]<\infty$.

Let $\delta\in(0,1)$ and $\alpha > 0$. By Proposition \ref{GK}, the stopping time $\tau_{\delta} = \inf\left\{n\in\mathbb{N} \,:\, Z_n > \log(C n^\alpha/\delta) \right\}$ can ensure
$\mathbb{P}\left(\tau_\delta < \infty, \argmax_{i\in A}\mu_{\tau_\delta,i}\neq 1\right) \leq \delta.$ 

For $\epsilon\in \left(0,\Gamma_\beta^*/(1+\alpha)\right)$, when $n\geq N^\epsilon$, $Z_n\geq (\Gamma_\beta^*-\epsilon)n>0$. Let $M^\epsilon \triangleq \left\lceil\max\{N^\epsilon,1/\epsilon^2\}\right\rceil$ where the ceil function $\lceil x \rceil$ is the least integer greater than or equal to $x$.  
Now let us consider the following two cases.
\begin{enumerate}
	\item $\exists r\in\left[1,M^\epsilon\right]$ such that $Z_r > \log(C r^\alpha/\delta)$\\
	This case implies $\tau_\delta \leq M^\epsilon$.
	
	\item $\forall r\in[1,M^\epsilon]$, $Z_r \leq \log(C r^\alpha/\delta)$\\
	This case implies $\tau_\delta \geq M^\epsilon + 1$. Note that $M^\epsilon = \left\lceil\max\{N^\epsilon,1/\epsilon^2\}\right\rceil\geq N^\epsilon$, so for $n\geq M^\epsilon$,  $Z_n \geq (\Gamma_\beta^*-\epsilon)n$.
	Let $x^\epsilon$ be the solution of $(\Gamma_\beta^*-\epsilon)x = \log(Cx^\alpha/\delta)$. Since $(\Gamma^*_\beta-\epsilon)M^\epsilon \leq Z_{M^\epsilon} \leq \log(C(M^\epsilon)^\alpha/\delta)$, we have $x^\epsilon\geq M^\epsilon$, which implies $x^\epsilon \geq 1/\epsilon^2$, and then $\log(x^\epsilon)\leq (x^\epsilon)^{1/2}\leq \epsilon x^\epsilon$. Hence,
	$
	(\Gamma_\beta^*-\epsilon)x^\epsilon = \log(C(x^\epsilon)^\alpha/\delta)\leq \log(C) + \alpha\epsilon x^\epsilon + \log(1/\delta),
	$
	which implies
	\[
	x^\epsilon \leq \frac{\log(C)+\log(1/\delta)}{
		\Gamma_\beta^*-(1+\alpha)\epsilon}.
	\]
	Let $L_\delta^\epsilon \triangleq \inf\left\{  n\geq M^\epsilon \,:\,(\Gamma_\beta^*-\epsilon)n > \log(C n^\alpha/\delta) \right\}$. Since $(\Gamma_\beta^*-\epsilon)x^\epsilon = \log(C(x^\epsilon)^\alpha/\delta)$, we have 
	\[
	L_\delta^\epsilon \leq \lceil x^\epsilon\rceil +1 \leq \left\lceil  \frac{\log(C)+\log(1/\delta)}{
		\Gamma_\beta^*-(1+\alpha)\epsilon} \right\rceil +1 < \frac{\log(C)+\log(1/\delta)}{
		\Gamma_\beta^*-(1+\alpha)\epsilon} +2.
	\]
	We notice that $Z_{L_\delta^\epsilon} \geq (\tau_\beta^*-\epsilon)L_\delta^\epsilon > \log(C(L_\delta^\epsilon)^\alpha/\delta)$, so we have $\tau_\delta\leq L_\delta^\epsilon$.
\end{enumerate}
Combining the above two cases, we have $\tau_\delta\leq M^\epsilon + L_\delta^\epsilon$, and thus $\mathbb{E}[\tau_\delta]\leq \mathbb{E}[M^\epsilon] + \mathbb{E}[L_\delta^\epsilon]$. Note that $M^\epsilon = \left\lceil\max\{N^\epsilon,1/\epsilon^2\}\right\rceil$ and $\mathbb{E}[N^\epsilon] < \infty$ imply $\mathbb{E}[M^\epsilon] < \infty$. 

Now we fix $\tilde{\epsilon} = (\alpha-1)\Gamma_\beta^*/[\alpha(1+\alpha)] \in \left(0,\Gamma_\beta^*/(1+\alpha)\right)$, then we have
\[
L_\delta^{\tilde{\epsilon}} < \frac{\log(C)+\log(1/\delta)}{\Gamma_\beta^*-(1+\alpha)\epsilon} +2 = \alpha\left[\frac{\log(C)+\log(1/\delta)}{\Gamma_\beta^*}\right]+2 = \left[\frac{\alpha\log(C)}{\Gamma_\beta^*} + 2\right] + \frac{\alpha\log(1/\delta)}{\Gamma_\beta^*}.
\]
Therefore, we have
\[
\limsup_{\delta\to 0}\frac{\mathbb{E}[\tau_\delta]}{\log(1/\delta)}\leq  \limsup_{\delta\to 0}\frac{\mathbb{E}\left[M^{\tilde{\epsilon}}\right] + \mathbb{E}\left[L_\delta^{\tilde{\epsilon}}\right]}{\log(1/\delta)} \leq \frac{\alpha}{\Gamma_\beta^*}.
\]

\section{Preliminaries} \label{preliminaries}

In this section, we introduce several preliminary results which is used in the theoretical analysis of TTEI.

\subsection{Properties of $f(x) = x\Phi(x) + \phi(x)$}
We provide several properties of the function $f(x) = x\Phi(x) + \phi(x)$ including its monotonicity, upper bound and lower bound.
\begin{lemma}
\label{basic}
$f(x)$ is positive and increasing on $\mathbb{R}$.
\end{lemma}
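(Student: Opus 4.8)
The plan is to prove monotonicity by a direct differentiation and positivity by controlling the behavior of $f$ as $x\to-\infty$. For monotonicity, I would compute $f'(x)$. Using $\phi'(x)=-x\phi(x)$, the product rule gives $\frac{d}{dx}\left[x\Phi(x)\right]=\Phi(x)+x\phi(x)$, so that $f'(x)=\Phi(x)+x\phi(x)+\phi'(x)=\Phi(x)+x\phi(x)-x\phi(x)=\Phi(x)$. Since $\Phi(x)\in(0,1)$ for every real $x$, we have $f'(x)>0$ everywhere, and therefore $f$ is strictly increasing on $\mathbb{R}$. This step is essentially immediate once the cancellation $x\phi(x)+\phi'(x)=0$ is observed.

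For positivity, I would exploit monotonicity: because $f$ is strictly increasing, its infimum over $\mathbb{R}$ equals $\lim_{x\to-\infty}f(x)$, so it suffices to show this limit is $0$. For $x<0$ write $f(x)=\phi(x)-|x|\Phi(x)$. The first term satisfies $\phi(x)\to 0$, and for the second term I would apply the upper bound of Lemma \ref{tail} with $\mu=0$, $\sigma=1$, and $c=|x|$, giving $\Phi(x)=\mathbb{P}_{Z\sim N(0,1)}\!\left(Z\geq |x|\right)\leq \tfrac12 e^{-x^2/2}$, hence $0\leq |x|\Phi(x)\leq \tfrac{|x|}{2}e^{-x^2/2}\to 0$ as $x\to-\infty$. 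Combining these, $\lim_{x\to-\infty}f(x)=0$. Since $f$ is strictly increasing and this infimal value is approached only in the limit, it is never attained at any finite point, so $f(x)>0$ for all $x\in\mathbb{R}$.

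The only genuinely nontrivial step is the limit computation, and here the main point is controlling $x\Phi(x)$ for large negative $x$; the Gaussian tail decay makes this vanish despite the growing factor $|x|$. Invoking the already-established Lemma \ref{tail} keeps the argument self-contained and avoids a separate Mills-ratio estimate. I would also remark that for $x\geq 0$ positivity is trivial, since $x\Phi(x)\geq 0$ and $\phi(x)>0$ force $f(x)>0$ directly; thus the delicate tail argument is only needed on the negative half-line, which is exactly where $x\Phi(x)$ is negative and must be dominated by $\phi(x)$.
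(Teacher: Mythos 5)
Your proof is correct and follows essentially the same route as the paper, whose entire argument is the one-liner that $f'(x)=\Phi(x)\geq 0$ and $\lim_{x\to-\infty}f(x)=0$. You simply fill in the details the paper leaves implicit: the cancellation giving $f'=\Phi$, the Gaussian tail bound (Lemma \ref{tail}) forcing $|x|\Phi(x)\to 0$, and the strict-monotonicity step upgrading $f\geq 0$ to $f>0$.
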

\begin{proof}
This is true since $f'(x) = \Phi(x)\geq 0$ and $\lim_{x\to-\infty}f(x) = 0$.
\end{proof}

\begin{lemma}
\label{upper}
For $x>0$,
\[
f(-x) < \phi(-x).
\]
\end{lemma}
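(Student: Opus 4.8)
The plan is to prove the inequality $f(-x) < \phi(-x)$ for $x > 0$ by unpacking the definition of $f$ and reducing the claim to a well-known property of the standard normal distribution. First I would write out $f(-x) = -x\Phi(-x) + \phi(-x)$ directly from the definition $f(x) = x\Phi(x) + \phi(x)$, and observe that the claimed inequality $f(-x) < \phi(-x)$ is therefore equivalent to $-x\Phi(-x) < 0$, i.e. to $x\Phi(-x) > 0$.

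The second step is essentially immediate: for $x > 0$ we have $x > 0$ and $\Phi(-x) > 0$ since $\Phi$ is the CDF of the standard normal and is strictly positive everywhere on $\mathbb{R}$. Hence the product $x\Phi(-x)$ is strictly positive, which gives $-x\Phi(-x) < 0$ and therefore $f(-x) = \phi(-x) - x\Phi(-x) < \phi(-x)$, as desired.

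I do not anticipate a genuine obstacle here — the statement is a one-line consequence of the definition of $f$ together with the positivity of $\Phi$. The only thing to be careful about is correctly substituting $-x$ into the formula for $f$ (so that the $x\Phi(x)$ term becomes $(-x)\Phi(-x) = -x\Phi(-x)$) and noting that the inequality is strict because both $x$ and $\Phi(-x)$ are strictly positive for $x > 0$. This lemma is presumably a stepping stone toward the sharper asymptotic $\log f(-x) \sim -x^2/2$ mentioned earlier, pairing with a matching lower bound to sandwich $f(-x)$; but for this statement alone the elementary argument above suffices.
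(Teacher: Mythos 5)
Your proof is correct and is essentially identical to the paper's: both substitute $-x$ into the definition to get $f(-x) = -x\Phi(-x) + \phi(-x)$ and conclude by noting $x\Phi(-x) > 0$ for $x > 0$. No differences worth remarking on.
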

\begin{proof}
For $x>0$, $f(-x) = -x\Phi(-x)+\phi(-x) < \phi(-x)$.
\end{proof}

\begin{lemma}
\label{lower}
For $x\geq2$,
\[
f(-x) > \frac{1}{x^3}\phi(-x).
\]
\end{lemma}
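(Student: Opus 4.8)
The plan is to recast the claim as a sharp upper bound on the Gaussian tail $\Phi(-x)$ and then prove that bound by a monotonicity argument. First I would use $\phi(-x)=\phi(x)$ to write $f(-x)=-x\Phi(-x)+\phi(x)$, so that the target $f(-x)>\phi(x)/x^3$ is, after elementary rearrangement, equivalent to
\[
\Phi(-x) < \frac{x^3-1}{x^4}\,\phi(x), \qquad x\ge 2.
\]
This reduction is purely algebraic. The crux is therefore to control $\Phi(-x)$ from above tightly enough near $x=2$, where the crude tail bounds obtained from repeated integration by parts on $\int_x^\infty \phi(t)\,dt$ (equivalently, the truncated asymptotic expansion of the Mills ratio) oscillate with growing amplitude and are too loose to conclude.

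The key step is the rational bound
\[
\Phi(-x) < \frac{x^2+2}{x^3+3x}\,\phi(x) \qquad \text{for all } x>0.
\]
To prove it I would set $G(x)\triangleq A(x)\phi(x)-\Phi(-x)$ with $A(x)=\frac{x^2+2}{x^3+3x}$, and differentiate using $\phi'(x)=-x\phi(x)$ and $\frac{d}{dx}\Phi(-x)=-\phi(x)$, which gives $G'(x)=\phi(x)\bigl(A'(x)-xA(x)+1\bigr)$. Clearing denominators, the bracket collapses—remarkably—to $-6/(x^3+3x)^2$, so that $G'(x)<0$ for every $x>0$. Since $A(x)\phi(x)\to 0$ and $\Phi(-x)\to 0$ as $x\to\infty$, we have $\lim_{x\to\infty}G(x)=0$, and a strictly decreasing function with limit $0$ is positive throughout $(0,\infty)$; hence $G(x)>0$, which is exactly the claimed bound.

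Finally I would close the gap between the two rational prefactors: it suffices to show $\frac{x^2+2}{x^3+3x}\le \frac{x^3-1}{x^4}$ for $x\ge 2$. Cross-multiplying (both denominators are positive) reduces this to $x(x^3-x^2-3)\ge 0$, i.e.\ $x^3-x^2-3\ge 0$, which holds for $x\ge 2$ because there $x^3-x^2-3\ge 8-4-3=1>0$. Chaining the strict tail bound of the previous paragraph with this inequality yields $\Phi(-x)<\frac{x^3-1}{x^4}\phi(x)$ for $x\ge 2$, which is equivalent to the lemma.

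The main obstacle is the middle step. Because the integration-by-parts bounds on $\Phi(-x)$ fail near $x=2$, one cannot simply truncate a series; one must instead identify the \emph{right} rational majorant, namely the convergent $\frac{x^2+2}{x^3+3x}$ of the Mills-ratio continued fraction, which is just tight enough. What makes the proof clean rather than a delicate numerical estimate is the exact cancellation of $G'$ to $-6/(x^3+3x)^2$, which I expect to be the part requiring the most care to set up and verify.
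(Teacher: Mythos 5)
Your proof is correct; I verified the key cancellation $A'(x)-xA(x)+1=-6/(x^3+3x)^2$ for $A(x)=\frac{x^2+2}{x^3+3x}$, the algebraic equivalence in your first step, and the rational-function comparison $x(x^3-x^2-3)\ge 0$ for $x\ge 2$. Your route is genuinely different from the paper's in its decomposition, though both rest on the same underlying mechanism (a function shown to be decreasing via $\phi'(x)=-x\phi(x)$, with vanishing limit at infinity, hence positive). The paper works directly on the target: it sets $g(x)=\frac{1}{x}\bigl[f(-x)-x^{-3}\phi(-x)\bigr]$, computes $g'(x)=x^{-5}(2-x)(x^2+x+2)\phi(x)$, and concludes immediately on $[2,\infty)$; the factor $(2-x)$ in the derivative is what makes the threshold $x\ge 2$ appear naturally, and the whole proof is three lines. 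You instead detour through a standalone Mills-ratio bound $\Phi(-x)<\frac{x^2+2}{x^3+3x}\phi(x)$ valid on all of $(0,\infty)$, and then confine the role of the hypothesis $x\ge 2$ to an elementary comparison of rational prefactors. Your version is longer but buys a sharper, reusable Gaussian tail inequality and a cleaner separation of analysis from algebra; the paper's buys brevity and a transparent explanation of where the constant $2$ in the lemma comes from.
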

\begin{proof}
Let  
$g(x) = \frac{1}{x} [f(-x) - \frac{1}{x^3}\phi(-x)]= -\Phi(-x)+\frac{1}{x}\phi(-x) - \frac{1}{x^4}\phi(-x)$. We have $g'(x) = (-x^{-2}+x^{-3}+4x^{-5})\phi(x) = x^{-5}(-x+2)(x^2+x+2)\phi(x)$, which implies that $g(x)$ is decreasing in $[2,\infty)$. We notice that $g(2)> 0$ and $\lim_{x\to\infty}g(x)=0$, so for $x\geq 2$, $g(x)>0$. Therefore, for $x\geq 2$, $f(-x) > \frac{1}{x^3}\phi(-x).$
\end{proof}
Lemmas \ref{upper} and \ref{lower} provides the upper and lower bounds for $f(\cdot)$, which is used to study the expected improvement measures.

\subsection{Maximal Inequalities}
In the theoretical analysis of TTEI, we need a bound on the difference between the empirical mean $\mu_{n,i}$ and the unknown true mean $\mu_i$ for each arm $i\in A$ at period $n$, and a bound on the difference between $T_{n,i}$ and $\Psi_{n,i}$, two measurements of effort allocated to arm $i$ up to period $n$. Two sample-path dependent variables $W_1$ and $W_2$ are required to obtain the two bounds.

\begin{lemma}
\label{W1}
Under any sampling rule beginning with an improper prior for each arm $i\in A$ with $\mu_{1,i}=0$ and $\sigma_{1,i} =\infty$, $\mathbb{E} [e^{\lambda W_1}] < \infty$ for all $\lambda>0$ where
\[
W_1 \triangleq \max_{n\in\mathbb{N}} \max_{i \in A}  \,\, \sqrt{\frac{T_{n,i}+1}{\log(e+T_{n,i})}}\left| \frac{\mu_{n,i} - \mu_i}{\sigma}  \right|.
\]	 
\end{lemma}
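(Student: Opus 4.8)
The plan is to reduce the double maximum over time $n$ and arm $i$ to a maximum over the number of samples of a single fixed arm, and then control the tail of this single-arm quantity by a union bound together with the Gaussian tail inequality of Lemma \ref{tail}. Because $A$ is finite, writing $W_{1,i} \triangleq \max_{n\in\mathbb{N}} \sqrt{(T_{n,i}+1)/\log(e+T_{n,i})}\,|\mu_{n,i}-\mu_i|/\sigma$ we have $e^{\lambda W_1}=\max_{i\in A}e^{\lambda W_{1,i}}\leq \sum_{i\in A}e^{\lambda W_{1,i}}$, so it suffices to show $\E[e^{\lambda W_{1,i}}]<\infty$ for each $i$ and every $\lambda>0$.

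First I would set up a coupling that makes the observed rewards of a fixed arm $i$ an i.i.d. sequence. One realizes the bandit on per-arm ``tapes'': pre-draw, for each arm, an i.i.d. sequence $Z_{i,1},Z_{i,2},\ldots\sim N(\mu_i,\sigma^2)$, and reveal $Z_{i,t}$ the $t$-th time arm $i$ is pulled. Under this construction the posterior mean of arm $i$ after exactly $t$ pulls equals the empirical mean $\bar Z_{i,t}=\tfrac1t\sum_{s=1}^t Z_{i,s}$ (with $\bar Z_{i,0}\triangleq 0$), regardless of \emph{when} those pulls occur. Since $T_{n,i}$ ranges over a subset of $\{0,1,2,\dots\}$ as $n$ varies, this gives the pathwise bound $W_{1,i}\leq \max_{t\geq 0}\sqrt{(t+1)/\log(e+t)}\,|\bar Z_{i,t}-\mu_i|/\sigma$, removing all dependence on the adaptive sampling rule.

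Next I would bound the tail of the right-hand side by a union bound over $t$. The $t=0$ term equals the constant $|\mu_i|/\sigma$ and is irrelevant for $x$ large. For $t\geq 1$, $\bar Z_{i,t}-\mu_i\sim N(0,\sigma^2/t)$, so applying Lemma \ref{tail} to both tails gives $\Prob(|\bar Z_{i,t}-\mu_i|\geq c)\leq e^{-c^2 t/(2\sigma^2)}$. Taking $c=x\sigma\sqrt{\log(e+t)/(t+1)}$ and using $t/(t+1)\geq 1/2$ yields the per-term bound $(e+t)^{-x^2/4}$, whence
\[
\Prob(W_{1,i}>x)\ \le\ \sum_{t\ge1}(e+t)^{-x^2/4}\ \le\ e\,e^{-x^2/4}\qquad\text{for all large }x,
\]
after comparing the sum to $\int_0^\infty(e+u)^{-x^2/4}\,du$. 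This is a sub-Gaussian tail.

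Finally, sub-Gaussianity yields all exponential moments: $\E[e^{\lambda W_{1,i}}]=1+\lambda\int_0^\infty e^{\lambda x}\Prob(W_{1,i}>x)\,dx$, and the $e^{-x^2/4}$ tail dominates $e^{\lambda x}$ for every fixed $\lambda$, so the integral converges; summing over the $k$ arms gives $\E[e^{\lambda W_1}]<\infty$. The only genuinely delicate step is the coupling in the second paragraph: one must ensure that the rewards actually collected from arm $i$ form an i.i.d. $N(\mu_i,\sigma^2)$ sequence even though the pull times are chosen adaptively from past data, and the tape construction (equivalently, an optional-skipping argument) is exactly what legitimizes replacing $\max_n$ by a union bound over the fixed i.i.d. sequence $\{Z_{i,t}\}$. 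Everything afterward is a routine Gaussian union bound, and the normalization $\log(e+t)$ is precisely what makes the series $\sum_t (e+t)^{-x^2/4}$ converge, with room to spare, for all large $x$.
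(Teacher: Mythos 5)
Your proposal is correct and follows essentially the same route as the paper's proof: the per-arm ``tape'' coupling that turns the adaptively sampled rewards into a fixed i.i.d.\ sequence (so that $\max_n$ can be replaced by a maximum over the pull count $t$), a union bound over $t$ using the Gaussian tail inequality with the $\sqrt{(t+1)/\log(e+t)}$ normalization producing a summable series $(e+t)^{-\Theta(x^2)}$, and finally integration of the resulting sub-Gaussian tail to get all exponential moments. The only differences are cosmetic bookkeeping (you split over arms before the union bound and compare the series to an integral, while the paper unions over $n$ and $i$ jointly and uses the inequality $ab\geq a+b$ to extract a convergent factor $(e+n)^{-2}$), so no further comment is needed.
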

\begin{proof}

Under any sampling rule beginning with an improper prior for each arm $i\in A$ with $\sigma_{1,i} =\infty$ and $\mu_{1,i}=0$ for each arm $i\in A$, if $T_{n,i}=\sum_{\ell=1}^{n-1} \mathbf{1}\{I_\ell = i\} = 0$, $\mu_{n,i} =\mu_{1,i}=0$, and if $T_{n,i} > 0$,  
\[
\mu_{n,i} = \frac{1}{T_{n,i}} \sum_{\ell=1}^{n-1} 
\mathbf{1}\{I_\ell = i\}Y_{\ell,I_\ell}.
\]
A mathematically equivalent way of simulating the system is to generate a collection of independent variables $(X_{n,i})_{n\in \mathbb{N}, i\in A}$ where each $X_{n,i}\sim N(\mu_i, \sigma^2)$. At time $n$, the algorithm selects an arm $I_n$, and observes the real valued response $X_{S_{n,I_n}, I_n}$ where $S_{n,I_n} \triangleq \sum_{\ell=1}^{n} \mathbf{1}\{I_\ell = i\}$. 
For all $i\in A$, we let $\overline{X}_{0,i} = 0$, and for $n\in \mathbb{N}$, $\overline{X}_{n,i}= \frac{1}{n}\sum_{\ell=1}^{n} X_{\ell,i} $ denote the empirical mean of arm $i$ up to the $n$th time it is chosen. We will bound
\[
\widetilde{W} \triangleq \max_{n \in \mathbb{N}\cup\{0\}} \max_{i \in A}  \,\, \sqrt{\frac{n+1}{\log(e+n)}}\left| \frac{\overline{X}_{n,i} - \mu_i}{\sigma}  \right|.
\]
When every arm is played infinitely often, $W_1=\widetilde{W}$. One always has $W_1 \leq \widetilde{W}$, so it is sufficient to bound $\mathbb{E}[ e^{\lambda\widetilde{W}}]$ for all $\lambda>0$. Notice that $\widetilde{W} = \max\{\xi,|\mu_1|/\sigma,\ldots,|\mu_k|/\sigma\}\leq\xi+ \sigma^{-1}\sum_{i\in A} |\mu_i| $
where
\[
\xi \triangleq \max_{n \in \mathbb{N}} \max_{i \in A}  \,\, \sqrt{\frac{n+1}{\log(e+n)}}\left| \frac{\overline{X}_{n,i} - \mu_i}{\sigma}  \right|.
\]
Hence, it suffices to bound $\mathbb{E}[ e^{\lambda \xi}]$ for all $\lambda>0$.

For all $n\in \mathbb{N}$ and $i\in A$, we define $Z_{n,i} \triangleq \sqrt{n}\left( \frac{\overline{X}_{n,i} -\mu_i}{ \sigma} \right)$, and then
\[
\xi = \max_{n \in \mathbb{N}} \max_{i \in A} \sqrt{\frac{n+1}{n\log(e+n)}}|Z_{n,i}|. 
\]
Each $Z_{n,i}\sim N(0,1)$, and thus by Lemma \ref{tail}, $Z_{n,i}$ satisfies the tail bound $\mathbb{P}(|Z_{n,i}|\geq z) \leq e^{-z^2/2}$ for $z >0$. Therefore, for all $x\geq2$ 
\begin{eqnarray*}
\mathbb{P}\left( \xi \geq 2x   \right) 
&=& \mathbb{P}\left( \exists n\in \mathbb{N}, i\in A \, : \,   |Z_{n,i}| \geq 2\sqrt{\frac{n\log(e+n)}{n+1}}  x  \right) \\
&\leq & \sum_{n,i} \mathbb{P}\left(|Z_{n,i}| \geq 2\sqrt{\frac{n\log(e+n)}{n+1}}  x \right) \\
& \leq &  \sum_{n,i} \exp\left(-\frac{2n\log(e+n)}{n+1} x^2\right) \\
&=& k\sum_{n} \exp\left(-\frac{2n\log(e+n)}{n+1} x^2\right) \\
&\overset{(*)}{\leq}& k\sum_{n} \exp\left(-2\log(e+n)-\frac{n}{n+1}x^2\right)\\
&=& k\sum_{n} \left(\frac{1}{e+n}\right)^2 e^{-\frac{n}{n+1}x^2}\\
&\leq& C e^{-x^2/2}.
\end{eqnarray*}
where step $(*)$ uses the $ab\geq a+b$ when $a,b\geq 2$ and $C= k \sum_{n\in \mathbb{N}} (e+n)^{-2}< \infty$ is a constant. Then for all $\lambda>0$, 
\[ 
\mathbb{E} \left[e^{\lambda \xi}\right] = \intop_{x=1}^{\infty} \mathbb{P}\left(e^{\lambda \xi } \geq x\right) dx \overset{(*)}{=} \intop_{u=0}^{\infty} \mathbb{P}\left(e^{\lambda \xi} \geq e^{2\lambda u} \right) 2\lambda e^{2\lambda u } du  \leq 2+ C\intop_{u=2}^{\infty} e^{-u^2/2}\cdot 2\lambda e^{2\lambda u} du < \infty 
\]
where in step $(*)$, we have substituted $x= e^{2\lambda u}$. Hence, for all $\lambda>0$, $\mathbb{E} \left[e^{\lambda W_1}\right] < \infty.$
\end{proof}

This result provides a bound for the difference between the empirical mean of an arm and its true unknown mean. For $i\in A$ and $n \in \mathbb{N}$
\[
|\mu_{n,i} - \mu_i| \leq \sigma W_1 \sqrt{\frac{\log(e+T_{n,i})}{T_{n,i}+1}}.
\]

Then we introduce the second sample-path dependent variable $W_2$, and the following lemma on the difference between two measurements of effort under any top-two sampling rule, which at each time, measures one of the two designs that appear most promising given current evidence.

\begin{lemma}
\label{W2}
Under any top-two sampling rule with parameter $\beta\in(0,1)$ beginning with an improper prior for each arm $i\in A$ with $\mu_{1,i}=0$ and $\sigma_{1,i} =\infty$, $\mathbb{E} [e^{\lambda W_2}] < \infty$ for all $\lambda>0$ where
\[ 
W_2 \triangleq \max_{n \in \mathbb{N}} \max_{i \in A} \,\,  \frac{|T_{n,i} - \Psi_{n,i}|}{\sqrt{ \left(1+ \Psi_{n,i}/\beta_{\min} \right)\log\left( e^2+\Psi_{n,i}/\beta_{\min}  \right)}}.  
\]
\end{lemma}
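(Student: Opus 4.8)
The plan is to recognize $T_{n,i}-\Psi_{n,i}$ as a martingale and to control its maximal fluctuations with a self-normalized, law-of-iterated-logarithm-style bound, in direct parallel with the proof of Lemma \ref{W1}. Fix an arm $i\in A$ and set $D_\ell \triangleq \mathbf{1}\{I_\ell = i\} - \psi_{\ell,i}$. Since $\psi_{\ell,i} = \mathbb{P}(I_\ell = i \mid \mathcal{F}_{\ell-1})$, each $D_\ell$ is $\mathcal{F}_\ell$-measurable with $\mathbb{E}[D_\ell \mid \mathcal{F}_{\ell-1}]=0$, so $M_n^{(i)} \triangleq T_{n,i}-\Psi_{n,i} = \sum_{\ell=1}^{n-1} D_\ell$ is a martingale with respect to $(\mathcal{F}_n)$ whose increments lie in an interval of length one. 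Two structural facts drive the argument. First, on any inactive period---one with $\psi_{\ell,i}=0$---we have $I_\ell \neq i$ almost surely, so $D_\ell = 0$; thus $M^{(i)}$ moves only on the active periods where $\psi_{\ell,i}>0$. Second, because the rule is a top-two rule, whenever $\psi_{\ell,i}>0$ we have $\psi_{\ell,i}\in\{\beta,1-\beta\}$ and hence $\psi_{\ell,i}\geq \beta_{\min}$; consequently the number $N_{n,i}$ of active periods before time $n$ satisfies $\beta_{\min} N_{n,i}\leq \Psi_{n,i}$, i.e. $N_{n,i}\leq \Psi_{n,i}/\beta_{\min}$.

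Next I would reindex time by active periods to remove the random, data-dependent normalizer. Since $\psi_{\ell,i}$ is $\mathcal{F}_{\ell-1}$-measurable, the active periods are predictable; letting $\tau_m$ denote the (stopping-time) index of the $m$-th active period for arm $i$, $\mathcal{G}_m\triangleq \mathcal{F}_{\tau_m}$, and $\widetilde{M}_m \triangleq M^{(i)}_{\tau_m+1}$, optional sampling shows $\widetilde{M}_m$ is a $(\mathcal{G}_m)$-martingale with exactly $m$ increments, each of range one and conditional mean zero. After $m$ active periods one has $\Psi_{n,i}\geq \beta_{\min} m$, so $\Psi_{n,i}/\beta_{\min}\geq m$ and the denominator defining $W_2$ is at least $\sqrt{(1+m)\log(e^2+m)}$. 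Since $M^{(i)}$ is constant between active periods, taking the maximum over $n$ reduces to a maximum over $m$, giving
\[
W_2 \;\leq\; \max_{i\in A}\ \sup_{m\geq 0}\ \frac{\bigl|\widetilde{M}^{(i)}_m\bigr|}{\sqrt{(1+m)\log(e^2+m)}}\ \triangleq\ \xi_2 .
\]
This step is the heart of the proof: it converts a martingale normalized by the random quantity $\Psi_{n,i}$ into one normalized by a deterministic increment count $m$, exactly mirroring the role of $\xi$ in Lemma \ref{W1}, and it is where the top-two structure (through $\beta_{\min}$) is essential.

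Finally I would bound $\mathbb{E}[e^{\lambda \xi_2}]$ for every $\lambda>0$ by a union bound over $m$ and $i$, following the template of Lemma \ref{W1}. For fixed $m$, Hoeffding's lemma applied conditionally gives $\mathbb{E}[e^{s(\widetilde{M}_m-\widetilde{M}_{m-1})}\mid \mathcal{G}_{m-1}]\leq e^{s^2/8}$, so $e^{s\widetilde{M}_m - ms^2/8}$ is a supermartingale and $\mathbb{P}(|\widetilde{M}_m|\geq t)\leq 2e^{-2t^2/m}$ (the Azuma--Hoeffding bound). Hence, for $x\geq 1$, using $(1+m)/m\geq 1$ and $\log(e^2+m)\geq 2$,
\[
\mathbb{P}(\xi_2\geq 2x) \;\leq\; k\sum_{m\geq 1} \mathbb{P}\!\left(|\widetilde{M}_m|\geq 2x\sqrt{(1+m)\log(e^2+m)}\right) \;\leq\; 2k\sum_{m\geq 1}(e^2+m)^{-8x^2}\;\leq\; C e^{-c x^2}
\]
for absolute constants $C,c>0$. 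Integrating this sub-Gaussian tail exactly as in the final display of Lemma \ref{W1} (substituting $x=e^{2\lambda u}$) yields $\mathbb{E}[e^{\lambda \xi_2}]<\infty$, and therefore $\mathbb{E}[e^{\lambda W_2}]<\infty$, for all $\lambda>0$. The main obstacle is the reindexing step of the second paragraph; once the normalizer is made deterministic via the $\beta_{\min}$ bound on the active-period count, the tail and integration arguments are routine adaptations of Lemma \ref{W1}, with Azuma--Hoeffding replacing the exact Gaussian tail.
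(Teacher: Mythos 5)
Your proof is correct, but it takes a genuinely different route from the paper at the key technical step. Both arguments start identically: $T_{n,i}-\Psi_{n,i}$ is a zero-mean martingale whose increments $d_\ell=\mathbf{1}\{I_\ell=i\}-\psi_{\ell,i}$ vanish on inactive periods and have range one, and both exploit that a top-two rule forces $\psi_{\ell,i}\geq\beta_{\min}$ on active periods, so that the active-period count is at most $\Psi_{n,i}/\beta_{\min}$. Where the paper handles the random normalizer by invoking a time-uniform self-normalized concentration inequality (Corollary 8 of \cite{abbasi2012online}, in the spirit of \cite{pena2008self}) applied to $D_n=\sum_\ell X_\ell d_\ell$ with the predictable indicator $X_\ell$, you instead perform a time change: you reindex the martingale by its active periods via the predictable stopping times $\tau_m$, obtaining a martingale $\widetilde{M}_m$ with a deterministic increment count, and then apply classical Azuma--Hoeffding plus a union bound over $m$, exactly mirroring the structure of the paper's proof of Lemma \ref{W1}. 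Your approach buys self-containedness---no external self-normalized-martingale result is needed, only Hoeffding's lemma and a union bound---at the cost of some stopping-time bookkeeping: strictly speaking you should fix a convention for $\widetilde{M}_m$ on the event that arm $i$ is active fewer than $m$ times ever (e.g., set the increment to zero when $\tau_m=\infty$), which preserves both the martingale property and the inequality $W_2\leq\xi_2$; this is routine but worth a sentence. The paper's approach buys a cleaner one-shot time-uniform bound with the quadratic-variation-style normalizer $1+\sum_\ell X_\ell$ appearing automatically, at the cost of citing the method-of-mixtures machinery. The remaining steps---converting the tail $\mathbb{P}(\xi_2\geq 2x)\leq Ce^{-cx^2}$ into finiteness of $\mathbb{E}[e^{\lambda\xi_2}]$ by the substitution $x=e^{2\lambda u}$---coincide in both proofs.
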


\begin{proof}
Similar to the proof for Lemma \ref{W1}, it suffices to show $\mathbb{P}(W_2\geq x)\leq ke^{-x^2/2}$ for all $x\geq 2$.

Fix some $i \in A$. Define for each $n\in \mathbb{N}$
\[
D_n \triangleq  T_{n,i}-\Psi_{n,i} = \sum_{\ell=1}^{n-1} d_\ell
\]
where 
\[ 
d_n \triangleq \mathbf{1}(I_n=i) - \psi_{n,i} = \mathbf{1}(I_n=i) - \mathbb{P}(I_n = i | \mathcal{F}_{n-1}).
\] 
Then $\mathbb{E}[d_n | \mathcal{F}_{n-1}] = 0$ and  $D_n$ is a zero mean martingale. Now, note $\psi_{n,i} \in \{0, \beta, 1-\beta \}$ almost surely, and set 
\[ 
X_n := \mathbf{1}( \psi_{n,i} >0)
\]
to be the indicator that $i$ is among the top-two in period $n$. We can see that $d_n = X_n d_n$, and so 

\[
D_n = \sum_{\ell=1}^{n-1} X_\ell d_\ell. 
\] 
Here $\{X_n \}$ is a binary valued previsable process (i.e. $X_n$ is $\mathcal{F}_{n-1}$ measureable), and $d_n$ is a zero-mean $\mathcal{F}_{n}$ adapted process with increments bounded as $|d_n| \leq 1$ almost surely.

The quadratic variation of $D_n$ is 
\[ 
\langle D \rangle_n = \sum_{\ell=1}^{n-1} \mathbb{E}[X_{\ell} d_{\ell}^2 | \mathcal{F}_{\ell-1} ]= \sum_{\ell=1}^{n-1}X_{\ell} \beta(1-\beta) 
\]
and so the magnitude of fluctuations of the martingale $D_n$ scale with the number of times $i$ is in the top-two. 

There are a number of martingale analogues to the central limit theorem, which suggest that $D_n =O_{P}\left(\sqrt{\langle  D \rangle_n}\right)$. To establish this formally, we apply the theorem of self-normalized martingale processes \cite{pena2008self}, which bound processes like $D_{n} / \sqrt{\langle D \rangle_n}$. We will apply a result established in \cite{abbasi2012online}. 
	
Because $|d_n| \leq 1$, applying Hoeffding's Lemma implies 
\[ 
E[e^{\lambda d_n } | \mathcal{F}_{n-1}] \leq e^{\lambda^2 / 2}, \qquad \lambda \in \mathbb{R}
\]
and so $d_n$ is 1-sub--Gaussian conditioned on $\mathcal{F}_{n-1}$. Applying Corollary 8 of \cite{abbasi2012online} implies that for any $\delta>0$, with probability least $1-\delta$
\[
|D_n| \leq \sqrt{ 2 \left(1+ \sum_{\ell=1}^{n-1} X_\ell\right)\log\left( \frac{\sqrt{1+\sum_{\ell=1}^{n-1}X_\ell}}{\delta} \right)  }, \qquad \forall n\in \mathbb{N}
\]
Analogously, for any $x\geq 2$ with probability at least $1-e^{-x^2/2}$,
\begin{eqnarray*}
|D_n| &\leq& \sqrt{ 2 \left(1+ \sum_{\ell=1}^{n-1} X_\ell\right)\log\left( \frac{\sqrt{1+\sum_{\ell=1}^{n-1}X_\ell} }{e^{-x^2/2}} \right)  } \\
&=& \sqrt{ \left(1+ \sum_{\ell=1}^{n-1} X_\ell\right)\left(\log\left( 1+\sum_{\ell=1}^{n-1}X_\ell  \right) + x^2 \right) } \\
&\leq& \sqrt{ \left(1+ \sum_{\ell=1}^{n-1} X_\ell\right)\left(\log\left( e^2+\sum_{\ell=1}^{n-1}X_\ell  \right) + x^2 \right) }\\
&\leq& \sqrt{ \left(1+ \sum_{\ell=1}^{n-1} X_\ell\right)\log\left( e^2+\sum_{\ell=1}^{n-1}X_\ell  \right)x^2 }\\
\end{eqnarray*}
for all $n\in \mathbb{N}$, where the last step uses that $ab \geq a+b$ for $a,b\geq 2$. Then, for all $x\geq 2$
\[
\mathbb{P}\left( \max_{n\in \mathbb{N}} \frac{|D_n|}{\sqrt{ \left(1+ \sum_{\ell=1}^{n-1} X_\ell \right)\log\left( e^2+\sum_{\ell=1}^{n-1}X_\ell  \right)}} \geq x\right) \leq e^{-x^2/2}  
\]
Since $\Psi_{n,i} \geq \beta_{\min} \sum_{\ell=1}^{n-1} X_\ell$, we have shown that for any $i$, 
\[
\mathbb{P}\left( \max_{n\in \mathbb{N}} \frac{|T_{n,i} - \Psi_{n,i}|}{\sqrt{ \left(1+ \Psi_{n,i}/\beta_{\min} \right)\log\left( e^2+\Psi_{n,i}/\beta_{\min}  \right)}} \geq x\right) \leq e^{-x^2/2}  
\]
Taking a union bound over $i\in A$ implies $\mathbb{P}( W_2 \geq x) \leq k e^{-x^2/2}$ for any $x\geq 2$. 
\end{proof}

This result implies that for any period $n$ and arm $i$,
\[
|T_{n,i} - \Psi_{n,i}| \leq   W_2\sqrt{ \left(1+ \Psi_{n,i}/\beta_{\min} \right)  \log\left( e^2+\Psi_{n,i}/\beta_{\min}  \right)}. 
\]

The next result provides another bound, which is used in the theoretical analysis of TTEI.

\begin{lemma}
\label{111}
Under TTEI with parameter $\beta\in(0,1)$ beginning with an improper prior for each arm $i\in A$ with $\mu_{1,i}=0$ and $\sigma_{1,i} =\infty$, for all $n\in \mathbb{N}$ and arm $i\in A$,
\[
|T_{n,i} - \Psi_{n,i}| <  \left(2 + \frac{3\Psi_{n,i}^{3/4}}{\beta_{\min}}\right)W_2.
\]
\end{lemma}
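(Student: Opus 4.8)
The plan is to recognize that this lemma is a purely deterministic consequence of Lemma~\ref{W2}. That lemma already supplies, on every sample path and for every $n$ and $i$, the bound
\[
|T_{n,i}-\Psi_{n,i}| \le W_2\sqrt{\left(1+\Psi_{n,i}/\beta_{\min}\right)\log\left(e^2+\Psi_{n,i}/\beta_{\min}\right)}.
\]
Hence it suffices to prove the scalar inequality
\[
\sqrt{\left(1+\Psi_{n,i}/\beta_{\min}\right)\log\left(e^2+\Psi_{n,i}/\beta_{\min}\right)} < 2 + \frac{3\Psi_{n,i}^{3/4}}{\beta_{\min}},
\]
which involves only the single nonnegative quantity $\Psi_{n,i}$ together with the constant $\beta_{\min}$. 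The one conceptual point is that the exponent $3/4$ on the right strictly exceeds the $1/2$ coming from the square root, which is precisely what gives room to absorb the logarithmic factor for large $\Psi_{n,i}$, while the additive constant $2$ covers the small-$\Psi_{n,i}$ regime.

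First I would reduce to a one-variable inequality. Put $u=\Psi_{n,i}/\beta_{\min}\ge0$, so the left-hand side is $\sqrt{(1+u)\log(e^2+u)}$. Writing $\Psi_{n,i}=u\beta_{\min}$ gives $\Psi_{n,i}^{3/4}/\beta_{\min}=u^{3/4}\beta_{\min}^{-1/4}\ge u^{3/4}$, where the last step uses $\beta_{\min}=\min\{\beta,1-\beta\}\le 1$. Thus the right-hand side is at least $2+3u^{3/4}$, and it is enough to establish
\[
(1+u)\log(e^2+u) < \left(2+3u^{3/4}\right)^2 = 4 + 12u^{3/4} + 9u^{3/2} \qquad (u\ge 0).
\]

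To prove this I would first control the logarithm by $\log(e^2+u)\le 2\sqrt{1+u}$ for all $u\ge0$: equality holds at $u=0$, and the derivative of $2\sqrt{1+u}-\log(e^2+u)$ is $(1+u)^{-1/2}-(e^2+u)^{-1}\ge0$ since $e^2+u\ge 1+u\ge\sqrt{1+u}$. Consequently $(1+u)\log(e^2+u)\le 2(1+u)^{3/2}$, and it remains to check $2(1+u)^{3/2} < 4+12u^{3/4}+9u^{3/2}$. I would split at $u=1$: for $0\le u\le 1$, bound $(1+u)^{3/2}\le\sqrt2\,(1+u)$ and use $u\le u^{3/4}$ to absorb the resulting linear term into $12u^{3/4}$; for $u\ge 1$, bound $(1+u)^{3/2}\le(2u)^{3/2}=2\sqrt2\,u^{3/2}$, so that $2(1+u)^{3/2}\le 4\sqrt2\,u^{3/2}<9u^{3/2}$. (Alternatively, $g(u)=4+9u^{3/2}-2(1+u)^{3/2}$ satisfies $g''>0$, hence has a unique interior minimum, which one checks is positive.) Chaining these strict bounds yields the scalar inequality.

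Combining it with Lemma~\ref{W2} gives $|T_{n,i}-\Psi_{n,i}|\le W_2\sqrt{(1+u)\log(e^2+u)} < W_2\bigl(2+3\Psi_{n,i}^{3/4}/\beta_{\min}\bigr)$ whenever $W_2>0$, which holds on every path for $\beta\in(0,1)$ (the defining maximum already dominates a strictly positive $n=2$ term, since $T_{2,i}\in\{0,1\}$ cannot equal $\Psi_{2,i}=\psi_{1,i}\in\{\beta,1-\beta\}$ for a top-two arm). There is no substantial obstacle: the whole content is this deterministic envelope inequality, and the only thing demanding a little care is choosing intermediate bounds—the $2\sqrt{1+u}$ bound on the logarithm and the split at $u=1$—that remain valid simultaneously in the small-$u$ and large-$u$ regimes.
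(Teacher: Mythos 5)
Your proof is correct, and it follows the same high-level strategy as the paper (treat the lemma as a deterministic consequence of Lemma~\ref{W2} plus a scalar inequality), but the scalar inequality is handled in a genuinely different way. The paper exploits a structural dichotomy of TTEI: since $\psi_{n,i}\in\{0,\beta,1-\beta\}$, the quantity $\Psi_{n,i}$ is either exactly $0$ (arm $i$ never among the top two, giving the bound $\sqrt{2}\,W_2<2W_2$) or at least $\beta_{\min}$, in which case $1+\Psi_{n,i}/\beta_{\min}<3\Psi_{n,i}/\beta_{\min}$ and $\log\left(e^2+\Psi_{n,i}/\beta_{\min}\right)<3\left(\Psi_{n,i}/\beta_{\min}\right)^{1/2}$ yield the term $3\Psi_{n,i}^{3/4}W_2/\beta_{\min}$; the two regimes are then combined with a max. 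Your proof instead establishes the envelope inequality $\sqrt{(1+u)\log(e^2+u)}<2+3u^{3/4}$ uniformly over \emph{all} $u\geq 0$, via $\log(e^2+u)\leq 2\sqrt{1+u}$ and a split at $u=1$, so you never need to rule out the intermediate range $\Psi_{n,i}\in(0,\beta_{\min})$. What the paper's route buys is brevity, since each regime is killed by a single term of the sum; what yours buys is generality: the conclusion holds verbatim for any sampling process satisfying the Lemma~\ref{W2} bound, not just ones whose proposal probabilities are confined to $\{0,\beta,1-\beta\}$. A further small merit of your write-up is that you notice the strict inequality requires $W_2>0$ and verify it on every path (via the $n=2$ term); the paper's proof has the same implicit dependence in its case $\Psi_{n,i}=0$ (where $T_{n,i}=0$ too, so strictness needs $W_2>0$) but does not comment on it.
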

\begin{proof}
Fix some arm $i\in A$. If arm $i$ is never chosen in either case 1 or case 2 of TTEI up to period $n$, then $\Psi_{n,i} = 0$, and thus
\[
|T_{n,i} - \Psi_{n,i}| \leq   W_2\sqrt{ \left(1+ \Psi_{n,i}/\beta_{\min} \right)  \log\left( e^2+\Psi_{n,i}/\beta_{\min}  \right)} < 2W_2
\]
Once arm $i$ has been chosen in either case 1 or case 2 of TTEI, $\Psi_{n,i} \geq \beta_{\min}$. Then we have $1+\Psi_{n,i}/\beta_{\min} < 3\Psi_{n,i}/\beta_{\min}$ and $\log\left( e^2+\Psi_{n,i}/\beta_{\min}  \right)<3(\Psi_{n,i}/\beta_{\min})^{1/2}$, which leads to
\[
|T_{n,i} - \Psi_{n,i}| 
< 3W_2(\Psi_{n,i}/\beta_{\min})^{3/4} < \frac{3\Psi_{n,i}^{3/4}}{\beta_{\min}}W_2.
\]
Hence, 
\[
|T_{n,i} - \Psi_{n,i}| < \max\left\{2, \frac{3\Psi_{n,i}^{3/4}}{\beta_{\min}}\right\}W_2 < \left(2 + \frac{3\Psi_{n,i}^{3/4}}{\beta_{\min}}\right)W_2.
\]
\end{proof}

\subsection{Technical Lemmas}
The following technical lemma is used to quantify the time after which TTEI satisfies a certain property. We want to write such a time as a polynomial of sample-path dependent variables. 

\begin{lemma}
\label{tech}
Fix constants $c_0>c_1> 0$ and $c,c_2> 0$. Then for any $a_1,a_2 > 0$, there exists a $X = \poly(a_1,a_2)$ such that for all $x\geq X$,
\[
\exp\left(cx^{c_0}-a_1x^{c_1}\right) > a_2x^{c_2}.
\]
\end{lemma}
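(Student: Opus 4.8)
The plan is to take logarithms and reduce the claim to a comparison of a single dominant power against everything else. Writing the desired inequality $\exp\left(cx^{c_0} - a_1 x^{c_1}\right) > a_2 x^{c_2}$ in logarithmic form, it is equivalent to
\[
cx^{c_0} - a_1 x^{c_1} > \log a_2 + c_2 \log x.
\]
Since $c_0 > c_1 > 0$, the term $cx^{c_0}$ eventually dominates every other quantity appearing here, so the strategy is to reserve a fixed fraction --- say a quarter --- of $cx^{c_0}$ to beat each of the three competing quantities $a_1 x^{c_1}$, $\log a_2$, and $c_2 \log x$ separately, leaving one quarter to spare in order to make the inequality strict.

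First I would dominate the $a_1 x^{c_1}$ term: since $c_0 - c_1 > 0$, the inequality $a_1 x^{c_1} \le \tfrac{c}{4} x^{c_0}$ is equivalent to $x^{c_0 - c_1} \ge 4a_1/c$, hence holds for all $x \ge X_1 := (4a_1/c)^{1/(c_0-c_1)}$. Next, $\log a_2 \le \tfrac{c}{4}x^{c_0}$ holds for all $x \ge X_2 := (4(\log a_2)^+/c)^{1/c_0}$, and is automatic when $a_2 \le 1$. Finally, because a fixed positive power dominates the logarithm, there is a threshold $X_3$, depending only on $c, c_0, c_2$ and \emph{not} on $a_1$ or $a_2$, beyond which $c_2 \log x \le \tfrac{c}{4} x^{c_0}$.

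Setting $X := \max\{X_1, X_2, X_3\}$, for every $x \ge X$ the three competing terms sum to at most $\tfrac{3c}{4}x^{c_0} < cx^{c_0}$, which is exactly the logarithmic inequality above, and exponentiating gives the claim. It then remains to check $X = \poly(a_1,a_2)$: the quantity $X_1$ is a fixed power of $a_1$, $X_2$ is a fixed power of $(\log a_2)^+$ and is therefore bounded by a polynomial in $a_2$, and $X_3$ is a constant; thus $X$ is polynomially bounded in $(a_1,a_2)$, as needed for the later application, where finite exponential moments of the sample-path variables will be converted into finite expectations. The only step requiring genuine care is the threshold $X_3$: one must verify that $x^{c_0}$ overtakes $c_2 \log x$ at a point depending solely on the fixed constants, since any dependence on $a_1$ or $a_2$ there would destroy the polynomial bound. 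This is the main --- and only mildly technical --- obstacle, and it is handled by the elementary fact that $\log x = o(x^{c_0})$.
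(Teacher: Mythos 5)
Your proof is correct and follows essentially the same route as the paper: both arguments reduce the claim to the dominant term $cx^{c_0}$ swallowing each competitor beyond thresholds that are fixed powers of $a_1$ and $a_2$ respectively. The paper factors the exponent as $x^{c_1}\left(cx^{c_0-c_1}-a_1\right)$ and then uses $\exp\left(x^{c_1}\right) > a_2 x^{c_2}$, which after taking logarithms is precisely your quarter-splitting of the competing terms $a_1 x^{c_1}$, $\log a_2$, and $c_2\log x$, so the two proofs differ only in presentation.
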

\begin{proof}
There exists $X_1 = \poly(a_1)$ such that for all $x\geq X_1$, $cx^{c_0-c_1}-a_1 > 1$. In addition, there exists $X_2 = \poly(a_2)$ such that for all $x\geq X_2$, $\exp\left(x^{c_1}\right) > a_2x^{c_2}$. Hence, for all $x\geq X\triangleq \max\{X_1,X_2\}$,
\[
\exp\left(cx^{c_0}-a_1x^{c_1}\right) = \exp\left(x^{c_1}\left(cx^{c_0-c_1}-a_1\right)\right)\geq \exp\left(x^{c_1}\right) > a_2x^{c_2}.
\]
\end{proof}

\section{Results specific to TTEI} \label{sampling proportions}
In this section, we present theoretical results specific to the proposed TTEI policy. The main challenge is ensuring $\mathbb{E}[T_\beta^\epsilon]$ is finite where $T_\beta^\epsilon$ is the time after which for each arm, its empirical mean and empirical proportion are $\epsilon$-accurate. To do this, we present several results for any sample path (up to a set of measure zero), and show that $T_\beta^\epsilon$ depends at most polynomially on $W_1$ and $W_2$. By Lemmas \ref{W1} and \ref{W2}, the expected value of polynomials of $W_1$ and $W_2$ is finite. This ensures that $\mathbb{E}[T_\beta^\epsilon]$ is finite, which immediately establishes that TTEI achieves the sufficient conditions for both notions of optimality.

\subsection{Sufficient Exploration}
We first show that every arm is sampled frequently under TTEI.

\begin{proposition}
\label{c1}
Under TTEI with parameter $\beta\in(0,1)$, there exists $N_1=\poly(W_1,W_2)$ such that for all $n\geq N_1$,
\[
T_{n,i} \geq \sqrt{n/k}, \qquad \forall i\in A.
\]
\end{proposition}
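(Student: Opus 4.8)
The plan is to argue by contradiction. Fix a large $n$ and suppose the least‑sampled arm $\ell=\argmin_{i} T_{n,i}$ has $T_{n,\ell}<\sqrt{n/k}$. Since $\sum_i T_{n,i}=n-1$, the pigeonhole principle gives an arm $b$ with $T_{n,b}\ge (n-1)/k$, so at least one arm is well sampled and hence has tiny posterior variance. The engine of the proof is the claim that a least‑sampled arm coexisting with a well‑sampled arm must be one of the two candidates $I_n^{(1)},I_n^{(2)}$, so that it is selected with probability at least $\beta_{\min}$. Granting this, whenever $\ell$ lags behind the $\sqrt{n/k}$ curve it accrues selection probability at rate $\ge \beta_{\min}$, i.e. $\Psi_{n,\ell}$ grows linearly; Lemma \ref{111} then transfers this to the realized count, since $|T_{n,\ell}-\Psi_{n,\ell}|<(2+3\Psi_{n,\ell}^{3/4}/\beta_{\min})W_2$ is sublinear in $\Psi_{n,\ell}$, forcing $T_{n,\ell}$ to catch up and contradicting $T_{n,\ell}<\sqrt{n/k}$. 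Finally Lemma \ref{tech} is used to package the resulting threshold time as a $\poly(W_1,W_2)$, yielding $N_1$.

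The heart is the selection claim, which I would prove by comparing the improvement measures. By Lemma \ref{W1} every posterior mean satisfies $|\mu_{n,i}-\mu_i|\le \sigma W_1$ unconditionally, so all mean gaps lie in a bounded band $[-\Delta_{\max}-2\sigma W_1,\ \Delta_{\max}+2\sigma W_1]$ even for poorly sampled arms. For the least‑sampled arm $\ell$ the combined standard deviation $\sqrt{\sigma_{n,\ell}^2+\sigma_{n,I_n^{(1)}}^2}\ge \sigma_{n,\ell}>\sigma(k/n)^{1/4}$ dominates (as $\ell$ has the largest variance), so the argument of $f$ in $v^{(2)}_{n,\ell}$ has magnitude $O((n/k)^{1/4})$; Lemma \ref{lower} then lower‑bounds this second improvement measure by $\exp(-C\sqrt{n})$ for a constant $C=C(\Delta_{\max},W_1,\sigma)$. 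By contrast a well‑sampled suboptimal arm $j$ has $\sigma_{n,j}=O(\sigma\sqrt{k/n})$ and a strictly negative mean gap, so by Lemma \ref{upper} its measures are at most $\exp(-\Omega(n))$. Since $\exp(-C\sqrt n)$ dwarfs $\exp(-\Omega(n))$, arm $\ell$ outscores every well‑sampled suboptimal competitor, and among several simultaneously under‑sampled arms the least‑sampled one scores highest; hence $\ell\in\{I_n^{(1)},I_n^{(2)}\}$.

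The hard part will be the interplay hidden in the previous paragraph: the second‑criterion comparison is clean only once the reference arm $I_n^{(1)}$ is itself well sampled (so that $\sigma_{n,I_n^{(1)}}\le \sigma_{n,\ell}$ and no well‑sampled, empirically near‑optimal arm presents a positive $f$‑argument that could beat $\ell$). Pinning down the empirical order—in particular that arm $1$ is empirically best and is chosen as $I_n^{(1)}$—requires arm $1$ to already be adequately sampled, which is part of what we are trying to prove. I expect to resolve this circularity by first establishing a crude preliminary exploration bound (that every arm is sampled at least on the order of $\log n$ times, so that every posterior variance is eventually small enough to fix the empirical ranking via Lemma \ref{W1}), and only then running the variance‑driven argument above to upgrade the guarantee to $\sqrt{n/k}$. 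The remaining work is bookkeeping: choosing the window over which $\ell$ stays behind the curve, summing the per‑step selection probability $\ge\beta_{\min}$, and invoking Lemma \ref{tech} with exponents $c_0=1>c_1$ to absorb the $\Psi_{n,\ell}^{3/4}W_2$ slack and the $\sqrt{n/k}$ target into a single polynomial threshold $N_1=\poly(W_1,W_2)$.
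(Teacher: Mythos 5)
Your overall architecture (under-sampled arms win the modified-EI comparison, hence accrue selection probability $\geq\beta_{\min}$; Lemma \ref{111} transfers $\Psi_{n,i}$ to $T_{n,i}$; Lemma \ref{tech} packages the threshold as $\poly(W_1,W_2)$) matches the paper's, but your central selection claim fails, and the failure is exactly what forces the paper's more elaborate construction. With a \emph{single} threshold $\sqrt{n/k}$, the comparison between an arm just below the threshold and an arm just above it is governed by the mean gaps, not the sample counts: take $\ell$ with $T_{n,\ell}=0.9\sqrt{n/k}$ and true gap $-\Delta_{\max}$, versus $j$ with $T_{n,j}=1.1\sqrt{n/k}$ and true gap $-\Delta_{\min}$ (both empirical means are well concentrated at these sample sizes). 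Then $v^{(2)}_{n,\ell}\approx\exp\bigl(-0.9\,\Delta_{\max}^2\sqrt{n/k}/(2\sigma^2)\bigr)$ while $v^{(2)}_{n,j}\approx\exp\bigl(-1.1\,(\Delta_{\min}/2)^2\sqrt{n/k}/(2\sigma^2)\bigr)$; whenever $\Delta_{\max}$ is substantially larger than $\Delta_{\min}$, the \emph{better-sampled} arm $j$ wins, and the top-two need not contain any arm below $\sqrt{n/k}$. (Your dichotomy also silently ignores this middle band entirely: you only compare $\ell$ against the pigeonhole arm with $T_{n,b}\geq(n-1)/k$, whose value is $e^{-\Omega(n)}$, but the argmax can sit at $T_{n,j}=\Theta(\sqrt{n/k})$.) This is precisely why the paper uses \emph{two} sets with a gap between exponents, $U_n^L$ (threshold $L^{1/2}$) and $V_n^L$ (threshold $L^{3/4}$), and proves only the weaker statement (Lemma \ref{main}) that $I_n^{(1)}\in V_n^L$ or $I_n^{(2)}\in V_n^L$: the comparison is then $\exp(-c(W_1)L^{1/2})$ against $\exp(-c'L^{3/4})$, and Lemma \ref{tech} absorbs every gap- and $W_1$-dependent constant once $L\geq\poly(W_1)$. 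The paper's own footnote notes that a gap between the two exponents is what makes this work.

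Two further gaps. First, even among arms below the threshold, the least-sampled one need not score highest ($f$ is increasing in the mean gap, so an under-sampled arm with a positive empirical gap beats one with a negative gap), so selection probability flows to the under-sampled \emph{set}, not to your tracked arm $\ell$; one cannot conclude $\Psi_{n,\ell}$ grows linearly. The paper handles this with a set-based counting argument (Lemma \ref{empty}): in each window of length $L$ during which $U^L$ is nonempty, at least $\beta_{\min}\lfloor L\rfloor$ probability mass lands on $V^L$, so at least one arm of $V^L$ graduates past $L^{3/4}$ per window, and after $k$ windows $V^L$ (hence $U^L$) is empty. Second, your resolution of the circularity---a preliminary bound that every arm is sampled $\Omega(\log n)$ times---is left entirely unproven, and establishing it is essentially as hard as the proposition itself. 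The paper never needs it, and in fact never needs the empirical ranking to match the true ranking or arm $1$ to be identified: it case-splits on whether $I_n^{(1)}\in V_n^L$ (in which case the lemma already holds), and otherwise shows $I_n^*\in\overline{V_n^L}$ and $I_n^{(1)}=I_n^*$, after which all comparisons invoke Lemma \ref{W1} only for \emph{well-sampled} arms (those with $T_{n,i}\geq L^{3/4}$), for which no bootstrapping is required.
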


To prove this proposition, we first need to define two under-sampled sets for all $L>0$ and $n\in\mathbb{N}$:
\[
U_n^L \triangleq \{i\in A \,:\, T_{n,i} < L^{1/2}\}
\]
and
\[
V_n^L \triangleq \{i\in A \,:\, T_{n,i} < L^{3/4}\}.\footnote{We fix the exponent here to be $3/4$. Indeed, it can be changed to $1/2+\epsilon$ for any $\epsilon>0$. We just need a gap between the exponent here and $1/2$ in $U_n^L$.}
\]
Let $\overline{U_n^L} \triangleq A\setminus U_n^L$ and $\overline{V_n^L} \triangleq A\setminus V_n^L$. Then Proposition \ref{c1} can be proved using the following two lemmas.
Note that in this paper, $X = \poly(W_1,W_2)$ means that $X=\mathcal{O}(W_1^{c_1}W_2^{c_2})$ for positive constants $c_1$ and $c_2$ where $(\sigma,k,\mu_1,\ldots,\mu_k,\beta)$ are treated as constants throughout the proof.

\begin{lemma}
\label{main}
Under TTEI with parameter $\beta\in(0,1)$, there exists $L_1 = \poly(W_1)$ such that for all $L\geq L_1$ and $n\leq kL$,\footnote{$L$ could be any value, but $n$ must be integer value.} if $U_{n}^L$ is nonempty, then $I_{n}^{(1)}\in V_{n}^L$ or $I_{n}^{(2)}\in V_{n}^L$.
\end{lemma}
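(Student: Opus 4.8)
The plan is to prove the contrapositive: I assume that $U_n^L\neq\emptyset$ yet \emph{both} top-two arms are well sampled, i.e. $I_n^{(1)},I_n^{(2)}\notin V_n^L$ so that $T_{n,I_n^{(1)}},T_{n,I_n^{(2)}}\geq L^{3/4}$, and I derive a contradiction once $L\geq L_1=\poly(W_1)$. The guiding intuition is a variance dichotomy coming from $\sigma_{n,i}^2=\sigma^2/T_{n,i}$: a well-sampled arm has small posterior deviation $\sigma_{n,i}\leq \sigma L^{-3/8}$, whereas any $m\in U_n^L$ has a large one, $\sigma_{n,m}>\sigma L^{-1/4}$. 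Throughout, Lemma~\ref{W1} lets me pick $L_1$ a large enough polynomial in $W_1$ so that $|\mu_{n,i}-\mu_i|\leq \Delta_{\min}/4$ on every well-sampled arm, which keeps empirical and true orderings of means consistent there.

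First I would pin down $I_n^{(1)}$. Since $I_n^{(1)}$ maximizes $v^{(1)}_{n,\cdot}$ and $v^{(1)}_{n,I_n^*}=\sigma_{n,I_n^*}f(0)=\sigma_{n,I_n^*}/\sqrt{2\pi}$, while $v^{(1)}_{n,I_n^{(1)}}=\sigma_{n,I_n^{(1)}}f(a)\leq\sigma_{n,I_n^{(1)}}/\sqrt{2\pi}$ because $a=(\mu_{n,I_n^{(1)}}-\mu_{n,I_n^*})/\sigma_{n,I_n^{(1)}}\leq 0$ and $f$ is increasing (Lemma~\ref{basic}), comparing the two forces $\sigma_{n,I_n^{(1)}}\geq\sigma_{n,I_n^*}$, i.e. $T_{n,I_n^*}\geq T_{n,I_n^{(1)}}\geq L^{3/4}$; hence the empirical-best arm $I_n^*$ is itself well sampled. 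Feeding this back into the same inequality and using the Gaussian upper bound $f(-x)<\phi(-x)$ of Lemma~\ref{upper}, I get that $\delta:=\mu_{n,I_n^*}-\mu_{n,I_n^{(1)}}$ satisfies $\delta^2<\sigma_{n,I_n^{(1)}}^2\log(T_{n,I_n^*}/T_{n,I_n^{(1)}})\leq \sigma^2 L^{-3/4}\log(kL)$, where $T_{n,I_n^*}\leq n\leq kL$ is used. Since both arms are well sampled, if $I_n^{(1)}\neq I_n^*$ the consistent ordering of means would force $\delta\geq\Delta_{\min}/2$, contradicting $\delta\to 0$. Thus for $L$ large $I_n^{(1)}=I_n^*$, so every other arm has posterior mean at most $\mu_{n,I_n^{(1)}}$.

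The decisive step compares the second improvement measure across arms. Fix any $m\in U_n^L$. Writing $v^{(2)}_{n,m}=s_m f(-x_m)$ with $s_m=\sqrt{\sigma_{n,m}^2+\sigma_{n,I_n^{(1)}}^2}>\sigma L^{-1/4}$ and $x_m=(\mu_{n,I_n^{(1)}}-\mu_{n,m})/s_m$, the bound $x_m^2\leq B^2 L^{1/2}/\sigma^2$ (with $B=\Delta_{\max}+\sigma W_1$ controlling the mean gap) together with the tail lower bound of Lemma~\ref{lower} gives $v^{(2)}_{n,m}\gtrsim L^{-1}\exp(-B^2 L^{1/2}/(2\sigma^2))$ (the case $x_m<2$ only improves this). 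For any well-sampled $i\neq I_n^{(1)}$ the situation reverses: $s_i\leq\sqrt{2}\sigma L^{-3/8}$ is small and the gap $\mu_{n,I_n^{(1)}}-\mu_{n,i}\geq\Delta_{\min}/2$ is order one, so the normalized argument is of order $L^{3/8}$ and Lemma~\ref{upper} yields $v^{(2)}_{n,i}\lesssim L^{-3/8}\exp(-\Delta_{\min}^2 L^{3/4}/(16\sigma^2))$. The two exponents are $L^{1/2}$ and $L^{3/4}$—exactly the gap created by the differing thresholds defining $U_n^L$ and $V_n^L$—so Lemma~\ref{tech} produces an $L_1=\poly(W_1)$ beyond which $v^{(2)}_{n,m}>v^{(2)}_{n,i}$ simultaneously for all such $i$. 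Hence no well-sampled arm can maximize $v^{(2)}_{n,\cdot}$, forcing $I_n^{(2)}\in V_n^L$, which contradicts the assumption and establishes the contrapositive.

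I expect the main obstacle to be making the second paragraph's dichotomy fully rigorous: showing that the greedy first pick $I_n^{(1)}$ must coincide with the empirical-best arm $I_n^*$ whenever it is well sampled, since this is precisely what guarantees every competing arm lies below $\mu_{n,I_n^{(1)}}$ and therefore has a one-sided (negative-argument) improvement measure amenable to Lemma~\ref{lower} and Lemma~\ref{upper}. The quantitative heart is then the competition between the two Gaussian tails, which must hold \emph{uniformly} over all well-sampled $i$ and be converted, via Lemma~\ref{tech}, into a single polynomial threshold in $W_1$; this is exactly where the $1/2$-versus-$3/4$ exponent gap between $U_n^L$ and $V_n^L$ is indispensable.
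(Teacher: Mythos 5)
Your proposal is correct and follows essentially the same route as the paper's proof: you use the monotonicity of $f$ to show that a well-sampled $I_n^{(1)}$ forces $I_n^*$ to be well-sampled, use concentration of empirical means on well-sampled arms (via Lemma~\ref{W1}) to conclude $I_n^{(1)}=I_n^*$, and then play the under-sampled arms' $L^{1/2}$-scale Gaussian tail (Lemma~\ref{lower}) against the well-sampled arms' $L^{3/4}$-scale tail (Lemma~\ref{upper}), invoking Lemma~\ref{tech} to extract a single $\poly(W_1)$ threshold, exactly as in the paper. The only cosmetic differences are your contrapositive packaging and your quantitative bound $\delta^2 < \sigma^2 L^{-3/4}\log(kL)$ in place of the paper's direct comparison of $v^{(1)}$ values (both exploit $n\le kL$ in the same way); also your constant $B=\Delta_{\max}+\sigma W_1$ should be enlarged slightly (e.g.\ to $\Delta_{\max}+2\sigma W_1$) to absorb the estimation error on $\mu_{n,I_n^{(1)}}$, a trivial adjustment.
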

\begin{proof}

First of all, we will show that if $I_n^{(1)}\in \overline{V_{n}^L}$, then $I_{n}^*\in \overline{V_{n}^L}$ where $I_n^*=\argmax_{i\in A}\mu_{n,i}$. We prove this by contradiction. Suppose $I_{n}^*\in V_{n}^L$. By definition, $T_{n,I_{n}^{(1)}} > T_{n,I_{n}^*}$, which implies $\sigma_{n,I_{n}^{(1)}} < \sigma_{n,I_{n}^*}$. By Lemma \ref{basic}, we have
\[
v^{(1)}_{n,I_{n}^{(1)}}
=\sigma_{n,I_{n}^{(1)}} f\left( \frac{\mu_{n,I_{n}^{(1)}}-\mu_{n,I_{n}^*}}{\sigma_{n,I_{n}^{(1)}}} \right) <\sigma_{n,I_{n}^*} f( 0 ) 
=v^{(1)}_{n,I_{n}^{*}},
\]
which contradicts the definition of $I_{n}^{(1)}$. Hence, if $I_n^{(1)}\in \overline{V_{n}^L}$, then $I_{n}^*\in \overline{V_{n}^L}$.

Secondly we will show that when $L$ is sufficiently large, if $I_n^{*}\in\overline{V_{n}^L}$, then for all $i\in \overline{V_{n}^L}\setminus\{I_n^*\}$, $\mu_{n,i} - \mu_{n,I_n^*}\leq -0.5\Delta_{\min}$ where $\Delta_{\min} = \min_{i\neq j}|\mu_i-\mu_j|>0$.
By Lemma \ref{W1}, for all $i\in\overline{V_{n}^L}$,
\[
|\mu_{n,i}-\mu_{i}|\leq \sigma W_1\sqrt{\frac{\log(e+T_{n,i})}{T_{n,i}+1}}\leq \sigma W_1 \sqrt{\frac{\log(e+ L^{3/4})}{L^{3/4}+1}}
\]
where the last inequality is valid because $g(x)=\log(e+x)/(x+1)$ is positive and decreasing on $(0,\infty)$ and $T_{n,i}\geq L^{3/4}$. Note that for $L\geq 1$, $\log(e+L^{3/4})\leq 2L^{1/4}$. Then there exists $M_1=\poly(W_1)$ such that for all $L\geq M_1$,
\[
\sqrt{\frac{\log(e+ L^{3/4})}{L^{3/4}+1}} \leq \sqrt{\frac{2L^{1/4}}{L^{3/4}+1}}\leq \frac{\Delta_{\min}}{4\sigma W_1}.
\]
Suppose there exists $\tilde{i}\in\overline{V_{n}^L}\setminus\{I_n^*\}$ such that $\mu_{\tilde{i}}>\mu_{I_n^*}$. Then for $L\geq M_1$, we have
\begin{align*}
\mu_{n,\tilde{i}} - \mu_{n,I_{n}^*}
\geq& \mu_{\tilde{i}} - \sigma W_1 \sqrt{\frac{\log(e+L^{3/4})}{L_{3/4}+1}} - \mu_{I_{n}^*} - \sigma W_1 \sqrt{\frac{\log(e+L^{3/4})}{L_{3/4}+1}}     \\
=& (\mu_{\tilde{i}} - \mu_{I_{n}^*}) - 2\sigma W_1 \sqrt{\frac{\log(e+L^{3/4})}{L^{3/4}+1}}\\
\geq& \Delta_{\min} - 2\sigma W_1(\Delta_{\min}/4\sigma W_1) = 0.5\Delta_{\min},
\end{align*}
which contradicts the definition of $I_n^*$. Hence, for $L\geq M_1$, if $I_n^*\in\overline{V_{n}^L}$, then $\mu_{I_n^*}>\mu_i$ for all $i\in\overline{V_{n}^L}\setminus\{I_n^*\}$ (note that we assume that all arm-means are unique), and thus
\[
\mu_{n,i} - \mu_{n,I_{n}^*} \leq (\mu_i - \mu_{I_{n}^*}) + 2\sigma W_1 \sqrt{\frac{\log(e+ L^{3/4})}{L^{3/4}+1}} \leq -\Delta_{\min} + 0.5\Delta_{\min} = -0.5\Delta_{\min}.
\]

Thirdly we will show when $L$ is sufficiently large and $n\leq kL$, if $I_n^{(1)}\in\overline{V_{n}^L}$ (which implies $I_n^{*}\in\overline{V_{n}^L}$), then $v^{(1)}_{n,I_n^*}>v^{(1)}_{n,i}$ for all $i\in\overline{V_{n}^L}\setminus\{I_n^*\}$, which implies $I_n^{(1)}=I_n^*$. For all $i\in\overline{V_{n}^L}\setminus\{I_n^*\}$, $\sigma_{n,i}^2 = \sigma^2/T_{n,i}\leq\sigma^2/L^{3/4}$, and when $L\geq M_1$, $\mu_{n,i} - \mu_{n,I_n^*}\leq -0.5\Delta_{\min}$, which lead to
\begin{equation}
\label{1}
v^{(1)}_{n,i}=\sigma_{n,i} f\left( \frac{\mu_{n,i}-\mu_{n,I_n^*}}{\sigma_{n,i}} \right) \leq \frac{\sigma}{L^{3/8}}f\left(\frac{-\Delta_{\min} L^{3/8}}{2\sigma}\right) < \frac{\sigma}{L^{3/8}}\phi\left(\frac{-\Delta_{\min} L^{3/8}}{2\sigma}\right)
\end{equation}
where the last inequality uses Lemma \ref{upper}. 
On the other hand, 
\begin{equation}
\label{2}
v^{(1)}_{n,I_{n}^*}=\sigma_{n,I_{n}^*} f(0) \geq \frac{\sigma}{ (kL)^{1/2}}\phi(0).
\end{equation}
There exists $M_2$ such that for all $L\geq M_2$, the right hand side of (\ref{2}) is larger than the right hand of (\ref{1}). Hence, for $L\geq\max\{M_1,M_2\}$ and $n\leq kL$, if $I_n^{(1)}\in\overline{V_{n}^L}$ (which implies $I_n^{*}\in\overline{V_{n}^L}$), then $v^{(1)}_{n,I_n^*}>v^{(1)}_{n,i}$ for all $i\in\overline{V_{n}^L}\setminus\{I_n^*\}$, which implies $I_n^{(1)}=I_n^*$.

Finally we will show that when $L$ is sufficiently large and $n\leq kL$, if $U_{n}^L$ is nonempty (which implies $V_{n}^L$ is nonempty by definition) and $I_{n}^{(1)}\in \overline{V_{n}^L}$ (which implies $I_{n}^{*}\in \overline{V_{n}^L}$), then $I_{n}^{(2)}\in V_{n}^L$. We have proved that for $L\geq\{M_1,M_2\}$, $I_n^{(1)}=I_n^*$. Then for all $i\in \overline{V_{n}^L}\setminus \{I_{n}^{*}\}$, 
\[
\mu_{n,i} - \mu_{n,I_{n}^{(1)}}  = \mu_{n,i} - \mu_{n,I_{n}^*} \leq -0.5\Delta_{\min},
\]
and by definition, 
\[
\sigma_{n,i}^2 + \sigma_{n,I_{n}^{(1)}}^2
=\sigma_{n,i}^2 + \sigma_{n,I_{n}^*}^2  =\frac{\sigma^2}{T_{n,i}} + \frac{\sigma^2}{T_{n,I_{n}^*}} \leq \frac{\sigma^2}{L^{3/4}} + \frac{\sigma^2}{L^{3/4}} < \frac{4\sigma^2}{L^{3/4}},
\]
which leads to 
\begin{equation}
\label{3}
v^{(2)}_{n,i} <  \frac{2\sigma}{L^{3/8}} f\left(\frac{-\Delta_{\min} L^{3/8}}{4\sigma}\right) < \frac{2\sigma}{L^{3/8}} \phi\left(\frac{-\Delta_{\min} L^{3/8}}{4\sigma}\right).
\end{equation}
where the last inequality uses Lemma \ref{upper}. On the other hand, for all $j\in U_{n}^L$,
\begin{align*}
\mu_{n,j} - \mu_{n,I_{n}^{(1)}} =& \mu_{n,j} - \mu_{n,I_{n}^*} \\
\geq& \mu_j - \sigma W_1 \sqrt{\frac{\log(e+T_{n,j})}{T_{n,j}+1}}- \mu_{I_{n}^*} - \sigma W_1 \sqrt{\frac{\log(e+T_{n,I_{n}^*})}{T_{n,I_{n}^*}+1}} \\
\geq& ( \mu_j - \mu_{I_{n}^*}) - 2\sigma W_1 \sqrt{\frac{\log(e)}{1}} = ( \mu_j - \mu_{I_{n}^*}) - 2\sigma W_1
\end{align*}
where the last inequality is valid because $g(x)=\log(e+x)/(x+1)$ is positive and decreasing on $(0,\infty)$ and $T_{n,j},T_{n,I_n^*}\geq 0$.
If $\mu_{I_n^*} > \mu_j$, $\mu_{n,j} - \mu_{n,I_{n}^{(1)}} \geq -\Delta_{\max} - 2\sigma W_1$ where $\Delta_{\max}=\max_{i,j\in A}(\mu_i-\mu_j)$; otherwise, $\mu_{n,j} - \mu_{n,I_{n}^{(1)}} \geq \Delta_{\min} - 2\sigma W_1 > -\Delta_{\max} - 2\sigma W_1$. Hence, we have $\mu_{n,j} - \mu_{n,I_{n}^{(1)}} \geq -\Delta_{\max} - 2\sigma W_1$, and by definition,
\[
\sigma_{n,j}^2 + \sigma_{n,I_{n}^{(1)}}^2 =\sigma_{n,j}^2 + \sigma_{n,I_{n}^*}^2 = \frac{\sigma^2}{T_{n,j}} + \frac{\sigma^2}{T_{n,I_{n}^*}} > \frac{\sigma^2}{L^{1/2}} + \frac{\sigma^2}{kL} > \frac{\sigma^2}{L^{1/2}},
\]
which leads to 
\[
v^{(2)}_{n,j} > \frac{\sigma}{L^{1/4}} f\left(\frac{-(\Delta_{\max} + 2\sigma W_1)L^{1/4}}{\sigma}\right).
\]
Let $M_3\triangleq (2\sigma/\Delta_{\max})^4$. Since $W_1\geq 0$ by definition, for all $L\geq M_3$, $(\Delta_{\max} + 2\sigma W_1)L^{1/4}/\sigma\geq2$, and then by Lemma \ref{lower}, we have 
\begin{equation}
\label{4}
v^{(2)}_{n,j} > \frac{\sigma}{L^{1/4}} f\left(\frac{-(\Delta_{\max} + 2\sigma W_1)L^{1/4}}{\sigma}\right) > \frac{\sigma^4}{L(\Delta_{\max} + 2\sigma W_1)^3 } \phi\left(\frac{-(\Delta_{\max} + 2\sigma W_1)L^{1/4}}{\sigma}\right).
\end{equation}
By Lemma \ref{tech}, there exists $M_4$ such that for all $L\geq M_4$, the right hand side of (\ref{4}) is larger than the right hand side of (\ref{3}). Therefore, for $L\geq L_1\triangleq \max\{M_1,M_2,M_3,M_4\}$ and $n\leq kL$, if $U_{n}^L$ is nonempty (which implies $V_{n}^L$ is nonempty by definition) and $I_{n}^{(1)}\in \overline{V_{n}^L}$ (which implies $I_{n}^{*}\in \overline{V_{n}^L}$), then $v^{(2)}_{n,j}>v^{(2)}_{n,i}$ for all $j\in U_n^L$ and $i\in \overline{V_n^L}$ (here we use $v^{(2)}_{n,I_n^*}=v^{(2)}_{n,I_n^{(1)}}=0$), which implies $I_n^{(2)}\notin \overline{V_n^L}$, and thus $I_n^{(2)}\in V_n^L$.

\end{proof}

Note that the floor function $\lfloor x\rfloor$ is the greatest integer less than or equal to $x$. Then based on Lemma \ref{main}, we have the following result.

\begin{lemma}
\label{empty}
Under TTEI with parameter $\beta\in(0,1)$, there exists $L_2 = \poly(W_1,W_2)$ such that for all $L\geq L_2$, $U_{\lfloor kL \rfloor}^L$ is empty.
\end{lemma}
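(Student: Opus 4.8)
The plan is to argue by contradiction. Suppose $U_{\lfloor kL\rfloor}^L$ is nonempty, so that some arm $i^\star$ satisfies $T_{\lfloor kL\rfloor, i^\star} < L^{1/2}$. Because $T_{n,i^\star}$ is nondecreasing in $n$, this forces $i^\star \in U_n^L$, and hence $U_n^L \neq \emptyset$, for \emph{every} $n \leq \lfloor kL\rfloor$. The intuition is that Lemma \ref{main} then guarantees that at each such step a constant fraction of the sampling effort must be directed at the weakly under-sampled set $V_n^L$; accumulated over the $\approx kL$ steps this demands total effort growing \emph{linearly} in $L$, whereas the total effort any single arm can absorb while still belonging to $V_n^L$ is only $O(L^{3/4})$. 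For $L$ large these are incompatible.

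I would first convert Lemma \ref{main} into a lower bound on conditional sampling probabilities. Since TTEI plays $I_n^{(1)}$ with probability $\beta$ and the distinct arm $I_n^{(2)}$ with probability $1-\beta$, whenever $I_n^{(1)}\in V_n^L$ or $I_n^{(2)}\in V_n^L$ we have $\sum_{i\in V_n^L}\psi_{n,i} \geq \beta_{\min}$. Thus for $L \geq L_1$ (the threshold of Lemma \ref{main}), and all $n\leq \lfloor kL\rfloor \leq kL$, summing gives
\[
\sum_{n=1}^{\lfloor kL\rfloor}\sum_{i\in V_n^L}\psi_{n,i} \;\geq\; \beta_{\min}\lfloor kL\rfloor.
\]

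Next I would reorganize the left-hand side by arm. For each fixed $i$, monotonicity of $T_{n,i}$ makes $\{n\leq\lfloor kL\rfloor : i\in V_n^L\}$ an initial segment $\{1,\dots,m_i\}$, and $\sum_{n=1}^{m_i}\psi_{n,i} = \Psi_{m_i+1,i}$. Since sampling increments counts by at most one, $T_{m_i+1,i} < L^{3/4}+1$, so Lemma \ref{111} (the concentration bound relating $T$ and $\Psi$ through $W_2$) yields $\Psi_{m_i+1,i} < 2L^{3/4}$ once $L$ exceeds a suitable $\poly(W_2)$: the self-referential $\Psi^{3/4}$ term on the right of Lemma \ref{111} is sublinear and is absorbed into the main term for large $L$, an estimate that can be made rigorous via Lemma \ref{tech}. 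Summing over the $k$ arms bounds the left-hand side above by $2kL^{3/4}$, giving $\beta_{\min}\lfloor kL\rfloor < 2kL^{3/4}$. The left side grows linearly in $L$ and the right only as $L^{3/4}$, so this fails for all $L$ beyond some absolute threshold (again via Lemma \ref{tech}), producing the contradiction. Taking $L_2$ to be the maximum of $L_1$, the $\poly(W_2)$ threshold, and this final threshold yields $L_2 = \poly(W_1,W_2)$.

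The main obstacle is the passage from the decision-level statement of Lemma \ref{main} — which only identifies the \emph{candidate} arms $I_n^{(1)}, I_n^{(2)}$ — to a genuine bound on the realized counts $T_{n,i}$ that define $U_{\lfloor kL\rfloor}^L$. This is precisely why the argument must route through the predictable effort $\Psi_{n,i}=\sum_{\ell<n}\psi_{\ell,i}$: Lemma \ref{main} cleanly lower-bounds the conditional probability mass placed on $V_n^L$, but the target quantity is defined through the actual $T_{n,i}$. Controlling $|T_{n,i}-\Psi_{n,i}|$ uniformly is exactly the content of Lemmas \ref{W2} and \ref{111}, and keeping that gap of order $o(L^{3/4})$ relative to the leading terms is the delicate part of the bookkeeping.
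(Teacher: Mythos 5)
Your proof is correct, and it takes a genuinely different route from the paper's. The paper's proof of Lemma \ref{empty} is a staged pigeonhole induction: it first uses $\lfloor L\rfloor - 1 \geq kL^{3/4}$ to show at least one arm leaves $V^L$ by time $\lfloor L\rfloor$, then argues block by block that on each interval $[\lfloor (r-1)L\rfloor, \lfloor rL\rfloor)$ the arms still in $V^L$ collectively receive (via Lemma \ref{main}, then Lemma \ref{111} to convert $\Psi$-increments into $T$-increments) at least $kL^{3/4}$ samples, so at least one more arm exits per block, yielding $|V^L_{\lfloor rL\rfloor}|\leq k-r$ and hence $V^L_{\lfloor kL\rfloor}=\emptyset$ after $k$ blocks, contradicting $U^L_{\lfloor kL\rfloor}\subseteq V^L_{\lfloor kL\rfloor}$ being nonempty. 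You instead run a single global accounting argument: sum the bound $\sum_{i\in V_n^L}\psi_{n,i}\geq\beta_{\min}$ over the whole horizon, reorganize by arm using the observation that $\{n : i\in V_n^L\}$ is an initial segment, and cap each arm's contribution $\Psi_{m_i+1,i}$ at $O(L^{3/4})$ via Lemma \ref{111} plus a self-referential absorption, so that $\beta_{\min}\lfloor kL\rfloor < 2kL^{3/4}$ gives the contradiction directly. Both proofs rest on exactly the same two ingredients (Lemmas \ref{main} and \ref{111}), but your per-arm decomposition eliminates the block structure and the iterated pigeonhole, which makes for a shorter and arguably cleaner argument; the paper's block version has the advantage of only invoking Lemma \ref{111} at block endpoints and of mirroring the intuition that each length-$L$ epoch forces one arm out of the under-sampled set. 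One small correction: Lemma \ref{tech} is not the right tool for either of your two closing estimates, since it concerns an exponential dominating a polynomial; but both estimates are elementary anyway (the absorption follows by supposing $\Psi_{m_i+1,i}\geq 2L^{3/4}$ and deriving a contradiction once $L\geq\poly(W_2)$, and the final comparison of $\beta_{\min}\lfloor kL\rfloor$ against $2kL^{3/4}$ fails once $L$ exceeds a threshold depending only on $k$ and $\beta_{\min}$, which are treated as constants), so this is cosmetic rather than a gap.
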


\begin{proof}
There exists $M_1=\poly(W_2)$ such that for all $L\geq M_1$, we have $\lfloor L \rfloor - 1\geq k L^{3/4}$ and 
\[
\beta_{\min}\lfloor L\rfloor - 4kW_2 -\frac{6k\lfloor kL\rfloor^{3/4}}{\beta_{\min}}W_2 \geq kL^{3/4}
\]
where $\beta_{\min}=\min\{\beta,1-\beta\}>0$.
Let $L_2\triangleq\max\{L_1,M_1\}$ where $L_1=\poly(W_1)$ has been introduced in Lemma \ref{main}. Now We want to prove this statement by contradiction. 

Suppose there exists some $L\geq L_2$ such that $U_{\lfloor kL \rfloor}^L$ is nonempty. 
Then all $U_1^L,U_2^L,\ldots,U_{\lfloor kL\rfloor-1}^L,U_{\lfloor kL\rfloor}^L$ are nonempty, and thus by definition, all $V_1^L,V_2^L,\ldots,V_{\lfloor kL\rfloor-1}^L,V_{\lfloor kL\rfloor}^L$ are empty.
Since $L\geq L_2$, we have $\lfloor L \rfloor - 1\geq k L^{3/4}$, so at least one arm is measured at least $L^{3/4}$ times before period $\lfloor L\rfloor$, and thus $\left|V_{\lfloor L \rfloor}^L\right|\leq k-1$.

Now we want to prove $\left|V_{\lfloor 2L\rfloor}^L\right| \leq k-2$. For all $\ell = \lfloor L\rfloor,\lfloor L\rfloor+1,\ldots,\lfloor 2L\rfloor-1$, $U_\ell^L$ is nonempty, then by Lemma \ref{main}, we have $I_n^{(1)}\in V_\ell^L$ or $I_n^{(2)}\in V_\ell^L$, and thus $\sum_{i\in V_\ell^L} \psi_{l,i}=\sum_{i\in V_\ell^L}\mathbb{P}(I_\ell = i |  \mathcal{F}_{\ell-1})\geq \beta_{\min}$, which implies $\sum_{i\in V_{\lfloor L\rfloor}^L} \psi_{l,i}\geq \beta_{\min}$ due to $V_\ell^L\subseteq V_{\lfloor L\rfloor}^L$. Hence, we have 
\[
\sum_{i\in V_{\lfloor L\rfloor}^L} \left(\Psi_{{\lfloor 2L\rfloor},i} -  \Psi_{{\lfloor L\rfloor},i}\right)
=\sum_{\ell=\lfloor L \rfloor}^{\lfloor 2L\rfloor-1}
\sum_{i\in V_{\lfloor L\rfloor}^L} \psi_{\ell,i}
\geq \beta_{\min}{\lfloor L\rfloor}
\]
where the inequality uses the fact that $\lfloor a+b \rfloor\geq \lfloor a \rfloor + \lfloor b \rfloor$ for $a,b\geq 0$.
Then by Lemma \ref{111}, we have
\begin{align*}
&\sum_{i\in V_{\lfloor L\rfloor}^L} \left(T_{{\lfloor 2L\rfloor},i} -  T_{{\lfloor L\rfloor},i}\right) \\
\geq& \sum_{i\in V_{\lfloor L\rfloor}^L} \left(\Psi_{{\lfloor 2L\rfloor},i} -  \Psi_{{\lfloor L\rfloor},i}\right) - 
\sum_{i\in V_{\lfloor L\rfloor}^L} \left[\left(2 + \frac{3\Psi_{\lfloor 2L\rfloor,i}^{3/4}}{\beta_{\min}}\right)W_2  + \left(2 + \frac{3\Psi_{\lfloor L\rfloor,i}^{3/4}}{\beta_{\min}}\right)W_2 \right]\\
\geq& \beta_{\min}\lfloor L \rfloor - 2\sum_{i\in V_{\lfloor L\rfloor}^L} \left(2 + \frac{3\Psi_{\lfloor kL\rfloor,i}^{3/4}}{\beta_{\min}}\right)W_2   \\
>& \beta_{\min}\lfloor L\rfloor - 2k \left(2+\frac{3\Psi_{\lfloor kL\rfloor,i}^{3/4}}{\beta_{\min}}\right)W_2\\
>& \beta_{\min}\lfloor L\rfloor - 4kW_2 -\frac{6k\lfloor kL\rfloor^{3/4}}{\beta_{\min}}W_2 \geq kL^{3/4}
\end{align*}
where the second last inequality uses that for all $i\in A$ and $n\in\mathbb{N}$, $\Psi_{n,i}\leq \beta_{\max}(n-1)<n$, and the last inequality is valid because of the construction of $L_2$ and $L\geq L_2$.
Hence, at least one arm in $V_{\lfloor L\rfloor}^L$ is measured at least $L^{3/4}$ times in periods $\left[\lfloor L\rfloor,\lfloor 2L\rfloor\right)$, and thus $\left|V_{\lfloor 2L\rfloor}^L\right| \leq k-2$.

Similarly, we can prove that for $r = 3,\ldots,k$, at least one arm in $V_{\lfloor (r-1)L\rfloor}^L$ is measured at least $L^{3/4}$ times in periods $\left[\lfloor (r-1)L\rfloor,\lfloor rL\rfloor\right)$, so $\left|V_{\lfloor rL\rfloor}^L\right|\leq k-r$. Hence, $\left|V_{\lfloor kL\rfloor}^L\right|=0$, i.e., $V_{\lfloor kL\rfloor}^L$ is empty,  which implies that $U_{\lfloor kL\rfloor}^L$ is empty.
\end{proof}

Now we can prove Proposition \ref{c1}.

\paragraph{Proof of Proposition \ref{c1}.}
Let $N_1=kL_2$ where $L_2=\poly(W_1,W_2)$ introduced in Lemma \ref{empty}. For all $n\geq N_1$, we let $L=n/k$, then by Lemma \ref{empty}, we have $U_{\lfloor kL \rfloor}^L=U_n^{n/k}$ is empty, which by definition results in that for all $i\in A$, $T_{n,i} \geq \sqrt{n/k}$.

\subsection{Concentration of Empirical Means}

When $n$ is large, using the bound on the difference between the empirical mean $\mu_{n,i}$ and the unknown true mean $\mu_i$  in terms of $T_{n,i}$ for each arm $i\in A$, we can formally show the concentration of $\mu_{n,i}$ to $\mu_i$ under TTEI.

\begin{proposition}
\label{all_mean}
Let $\epsilon > 0$. Under TTEI with parameter $\beta\in(0,1)$, there exists $N_2^\epsilon = \poly(W_1,W_2,1/\epsilon)$ such that for all $n\geq N_2^\epsilon$,
\[
\left|\mu_{n,i} - \mu_i \right|\leq \epsilon, \qquad \forall i\in A.
\]
\end{proposition}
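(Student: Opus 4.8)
The plan is to derive this proposition as a straightforward consequence of the sufficient-exploration guarantee in Proposition \ref{c1} together with the uniform concentration bound furnished by Lemma \ref{W1}. All of the genuine difficulty has already been absorbed into Proposition \ref{c1}; what remains is to translate a lower bound on the sampling counts into an upper bound on the estimation error, and to verify that the resulting threshold depends only polynomially on $W_1, W_2$ and $1/\epsilon$.

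First I would invoke Proposition \ref{c1} to obtain $N_1 = \poly(W_1, W_2)$ such that $T_{n,i} \geq \sqrt{n/k}$ for every $i \in A$ whenever $n \geq N_1$. The bound accompanying Lemma \ref{W1} gives, for every arm and every period,
\[
|\mu_{n,i} - \mu_i| \leq \sigma W_1 \sqrt{\frac{\log(e + T_{n,i})}{T_{n,i} + 1}}.
\]
Since the map $x \mapsto \log(e+x)/(x+1)$ is positive and decreasing on $(0,\infty)$ (a fact already used in the proof of Lemma \ref{main}), substituting the lower bound $T_{n,i} \geq \sqrt{n/k}$ yields, for all $n \geq N_1$ and all $i \in A$,
\[
|\mu_{n,i} - \mu_i| \leq \sigma W_1 \sqrt{\frac{\log(e + \sqrt{n/k})}{\sqrt{n/k} + 1}}.
\]

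It then remains to show that the right-hand side falls below $\epsilon$ once $n$ exceeds a threshold of the required polynomial form. Writing $m = \sqrt{n/k}$ and using a crude logarithmic estimate such as $\log(e + m) \leq \sqrt{2m}$ (valid once $m$ exceeds an absolute constant), the right-hand side is at most $2^{1/4}\sigma W_1\, m^{-1/4}$, which is at most $\epsilon$ as soon as $m \geq 2\sigma^4 W_1^4/\epsilon^4$, i.e.\ $n \geq 4k\sigma^8 W_1^8/\epsilon^8$. Taking $N_2^\epsilon$ to be the maximum of $N_1$ and this explicit quantity (together with the constant needed to validate the logarithmic estimate) gives $N_2^\epsilon = \poly(W_1, W_2, 1/\epsilon)$, as claimed. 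Alternatively, Lemma \ref{tech} can be used to invert the decay of $\log(e+m)/(m+1)$ and directly certify the polynomial dependence without computing explicit constants.

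I do not expect any serious obstacle here: the step requiring real work---showing every arm is pulled $\Omega(\sqrt{n})$ times---is exactly the content of Proposition \ref{c1}, and the present proposition is essentially a corollary. The only mild care needed is the bookkeeping in the final inversion to confirm that the threshold is polynomial in $W_1, W_2, 1/\epsilon$ (with $\sigma, k$ treated as constants), and the verification that the monotonicity of $\log(e+x)/(x+1)$ legitimately lets one replace $T_{n,i}$ by its worst-case lower bound $\sqrt{n/k}$.
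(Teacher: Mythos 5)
Your proposal is correct and follows essentially the same route as the paper's own proof: invoke Proposition \ref{c1} to get $T_{n,i} \geq \sqrt{n/k}$, plug this into the Lemma \ref{W1} bound via the monotonicity of $\log(e+x)/(x+1)$, and invert a crude logarithmic estimate (the paper uses $\log(e+(n/k)^{1/2}) \leq 2(n/k)^{1/4}$ for $n \geq k$, matching your $\log(e+m)\leq\sqrt{2m}$ step) to obtain a threshold that is polynomial in $W_1$ and $1/\epsilon$, finally taking the maximum with $N_1$. Your explicit constant bookkeeping is a harmless refinement of the paper's existence-style statement of $M_1^\epsilon$.
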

\begin{proof}
By Lemma \ref{W1}, for all $i\in A$ and $n \in \mathbb{N}$,
\[
|\mu_{n,i} - \mu_i| \leq \sigma W_1 \sqrt{\frac{\log(e+T_{n,i})}{T_{n,i}+1}}.
\]
By Proposition \ref{c1}, for all $n\geq N_1$, for all $i\in A$, $T_{n,i}\geq \sqrt{n/k}$ , and thus
\[
|\mu_{n,i} - \mu_i| \leq \sigma W_1 \sqrt{\frac{\log(e+T_{n,i})}{T_{n,i}+1}} \leq \sigma W_1 \sqrt{\frac{\log(e+(n/k)^{1/2})}{(n/k)^{1/2}+1}}
\]
where the last inequality uses $g(x)=\log(e+x)/(x+1)$ is positive and decreasing on $(0,\infty)$.  Note that for $n\geq k$, $\log(e+(n/k)^{1/2})\leq 2(n/k)^{1/4}$. Then there exists $M_1^\epsilon=\poly(W_1,1/\epsilon)$ such that for all $n\geq M_1^\epsilon$,
\[
\sqrt{\frac{\log(e+ (n/k)^{1/2})}{(n/k)^{1/2}+1}} \leq \sqrt{\frac{2(n/k)^{1/4}}{(n/k)^{1/2}+1}}\leq \frac{\epsilon}{\sigma W_1}.
\]
Then for all $i\in A$ and $n\geq N_2^\epsilon \triangleq \max\{N_1,k,M_1^\epsilon\}$ where $N_1 = \poly(W_1,W_2)$ introduced in Proposition \ref{c1},  we have $|\mu_{n,i}-\mu_i|\leq \sigma W_1 [\epsilon/(\sigma W_1)]=\epsilon$.
\end{proof}

Recall that we assume the unknown arm-means are unique and $\mu_1 > \mu_2 \ldots > \mu_k$. If we set $\epsilon$ to a very small value in Lemma \ref{all_mean}, when $n$ is large, the empirical means are order as the true means, i.e.,  $\mu_{n,1} > \mu_{n,2} \ldots > \mu_{n,k}$, which implies the arm with the largest empirical mean is arm 1. In addition, we show that when $n$ is large, the arm selected in case 1 of TTEI is also arm 1.

\begin{lemma}
\label{first_best}
Under TTEI with parameter $\beta\in(0,1)$, there exists $N_3 = \poly(W_1,W_2)$ such that for all $n\geq N_3$,
$I_n^{(1)}=I_n^* = 1$.
\end{lemma}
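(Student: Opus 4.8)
The plan is to establish the two equalities in turn---first $I_n^* = 1$, then $I_n^{(1)} = 1$---by combining the mean-concentration result of Proposition \ref{all_mean} with the sufficient-exploration bound of Proposition \ref{c1}. The guiding intuition is that once every arm has been sampled enough, the empirical means are ordered correctly (forcing $I_n^* = 1$), and the best arm's EI value decays only polynomially in $n$ while every suboptimal arm's EI value decays exponentially, so the maximizer of the EI criterion must be arm $1$.

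For $I_n^* = 1$, I would apply Proposition \ref{all_mean} with the fixed constant $\epsilon = \Delta_{\min}/4$. This yields a threshold $N_2^{\Delta_{\min}/4} = \poly(W_1, W_2, 1/\Delta_{\min}) = \poly(W_1, W_2)$ beyond which $|\mu_{n,i} - \mu_i| \leq \Delta_{\min}/4$ for every arm. Since $\mu_1 > \mu_2 > \cdots > \mu_k$ with consecutive gaps at least $\Delta_{\min}$, this forces $\mu_{n,1} > \mu_{n,2} > \cdots > \mu_{n,k}$, hence $I_n^* = \argmax_{i} \mu_{n,i} = 1$; moreover $\mu_{n,1} - \mu_{n,i} \geq \Delta_{\min}/2$ for every $i \neq 1$.

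For $I_n^{(1)} = 1$, I would directly compare the first EI measures. With $I_n^* = 1$ we have $v^{(1)}_{n,1} = \sigma_{n,1} f(0) = \sigma_{n,1}\phi(0)$, which is only polynomially small since $\sigma_{n,1} = \sigma/\sqrt{T_{n,1}} > \sigma/\sqrt{n}$. For each $i \neq 1$, writing $x_i = (\mu_{n,1} - \mu_{n,i})/\sigma_{n,i} > 0$, Lemma \ref{upper} gives $v^{(1)}_{n,i} = \sigma_{n,i} f(-x_i) < \sigma_{n,i}\phi(x_i)$. Proposition \ref{c1} supplies $T_{n,i} \geq \sqrt{n/k}$ for $n \geq N_1$, so $\sigma_{n,i} \leq \sigma(k/n)^{1/4}$ and, using the gap above, $x_i \geq \frac{\Delta_{\min}}{2\sigma}(n/k)^{1/4}$. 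Therefore $v^{(1)}_{n,i}/v^{(1)}_{n,1} < \frac{\sigma_{n,i}}{\sigma_{n,1}}\,e^{-x_i^2/2} \leq k^{1/4} n^{1/4}\exp\!\big(-\tfrac{\Delta_{\min}^2}{8\sigma^2}(n/k)^{1/2}\big)$, in which the exponential decay in $\sqrt{n}$ dominates the polynomial prefactor.

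Finally I would collect the thresholds. The ratio above falls below $1$ for all $n$ exceeding an absolute constant $M$ depending only on $\Delta_{\min}, \sigma, k$ (formally via Lemma \ref{tech}, comparing exponential against polynomial growth), so setting $N_3 = \max\{N_1, N_2^{\Delta_{\min}/4}, M\}$ gives $v^{(1)}_{n,1} > v^{(1)}_{n,i}$ for all $i \neq 1$ and $n \geq N_3$, i.e.\ $I_n^{(1)} = 1$. Since $N_1$ and $N_2^{\Delta_{\min}/4}$ are $\poly(W_1, W_2)$ and $M$ is constant, $N_3 = \poly(W_1, W_2)$. The only point requiring care is that the exponential-versus-polynomial threshold $M$ must not depend on $W_1, W_2$; this holds precisely because $\epsilon$ was frozen at the constant $\Delta_{\min}/4$, keeping the final bound polynomial in the sample-path variables.
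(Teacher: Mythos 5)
Your proposal is correct and follows essentially the same route as the paper's proof: Proposition \ref{all_mean} with $\epsilon = \Delta_{\min}/4$ to pin down $I_n^* = 1$ and the gap $\mu_{n,1}-\mu_{n,i}\geq\Delta_{\min}/2$, Proposition \ref{c1} to lower-bound $T_{n,i}$, Lemma \ref{upper} to make each suboptimal arm's EI value exponentially small, and the bound $\sigma_{n,1}>\sigma/\sqrt{n}$ so that arm 1's EI value decays only polynomially. Your explicit ratio bound $k^{1/4}n^{1/4}\exp\bigl(-\tfrac{\Delta_{\min}^2}{8\sigma^2}(n/k)^{1/2}\bigr)$ is just a more explicit rendering of the paper's comparison of its displays (5) and (6), and your closing remark about the threshold $M$ being independent of $W_1,W_2$ is the same bookkeeping the paper performs.
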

\begin{proof}
Let $M_1\triangleq N_2^{\Delta_{\min}/4}$. By Proposition \ref{all_mean}, for all $n\geq M_1$,
\[
\left|\mu_{n,i} - \mu_i \right|\leq \Delta_{\min}/4, \qquad \forall i\in A
\]
where $\Delta_{\min}=\min_{i\neq j}|\mu_i-\mu_j|>0$, which implies $\mu_{n,1}>\mu_{n,2}>\ldots>\mu_{n,k}$, and thus $I_n^* = 1$. 

Now for $n\geq M_1$ and $i\neq I_n^*$, we have 
\begin{align*}
    \mu_{n,I_n^*} - \mu_{n,i} &= \mu_{n,1} - \mu_{n,i} \\
    &\geq \mu_{1} - \Delta_{\min}/4 - \mu_{i} - \Delta_{\min}/4\\
    &=(\mu_{1} - \mu_i) -\Delta_{\min}/2\\
    &\geq\Delta_{\min} - \Delta_{\min}/2 = \Delta_{\min}/2.
\end{align*}
By Proposition \ref{c1}, for $n\geq N_1$, $T_{n,i}\geq \sqrt{n/k}$ for all $i\in A$. Hence, for $n\geq \max\{N_1,M_1\}$ and $i\neq I_n^*$, we have 
\begin{equation}
\label{5}
v^{(1)}_{n,i} =\sigma_{n,i} f\left( \frac{\mu_{n,i}-\mu_{n,I_n^*}}{\sigma_{n,i}} \right)\leq \frac{\sigma k^{1/4}}{n^{1/4}} f\left(\frac{-\Delta_{\min}n^{1/4}}{2\sigma k^{1/4}}\right) < \frac{\sigma k^{1/4}}{n^{1/4}} \phi\left(\frac{-\Delta_{\min}n^{1/4}}{2\sigma k^{1/4}}\right)
\end{equation}
where the two inequalities use Lemmas \ref{basic} and \ref{upper}, respectively. On the other hand,
\begin{equation}
\label{6}
v^{(1)}_{n,I_n^*} =\sigma_{n,I_n^*} f(0) = \sigma_{n,I_n^*} \phi(0) > \frac{\sigma}{n^{1/2}}\phi(0)
\end{equation}
where the inequality uses $T_{n,I_n^*}\leq n-1 < n$. There exists $M_2$ such that for all $n\geq M_2$, the right hand side of (\ref{6}) is larger than the right hand side of (\ref{5}). Hence, for all $n\geq N_3\triangleq\max\{N_1,M_2,M_2\}$, $v^{(1)}_{n,I_n^*} > v^{(1)}_{n,i}$ for all $i\neq I_n^*$, which implies $I_n^{(1)} = I_n^* = 1$.
\end{proof}

\subsection{Tracking the Asymptotic Proportion of the Best Arm}

In this subsection, we show that when the number of arm draws goes large, the empirical proportion for the best arm concentrates to the tuning parameter $\beta$ used in TTEI. 

\begin{lemma}
\label{best_Psi}
Let $\epsilon > 0$. Under TTEI with parameter $\beta\in(0,1)$, there exists $N_4^\epsilon = \poly(W_1,W_2,1/\epsilon)$ such that for all $n\geq N_4^\epsilon$,
\[
\left|\frac{\Psi_{n,1}}{n} -\beta \right|\leq \epsilon.
\]
\end{lemma}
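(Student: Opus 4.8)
The plan is to reduce everything to Lemma~\ref{first_best}, which already guarantees that arm $1$ is the top arm selected in case~1 of TTEI once $n$ is large. Recall that $\psi_{n,1} = \mathbb{P}(I_n = 1 \mid \mathcal{F}_{n-1})$, and that conditioned on $\mathcal{F}_{n-1}$ the two candidate arms $I_n^{(1)}$ and $I_n^{(2)}$ are deterministic functions of the posterior. By Lemma~\ref{first_best} there is $N_3 = \poly(W_1,W_2)$ such that $I_n^{(1)} = 1$ for every $n \ge N_3$; since $I_n^{(2)} \neq I_n^{(1)} = 1$, the event $\{I_n = 1\}$ coincides with the event that TTEI plays its top arm, which has probability exactly $\beta$. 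Hence $\psi_{n,1} = \beta$ for all $n \ge N_3$, turning the randomized sampling rule into a deterministic rate at the conditional-probability level.

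First I would use this to split the cumulative allocation. Writing $\Psi_{n,1} = \sum_{\ell=1}^{n-1} \psi_{\ell,1}$, I separate the initial segment from the tail:
\[
\Psi_{n,1} = \sum_{\ell=1}^{N_3-1} \psi_{\ell,1} + \sum_{\ell=N_3}^{n-1} \psi_{\ell,1} = \sum_{\ell=1}^{N_3-1} \psi_{\ell,1} + (n-N_3)\beta,
\]
which holds for all $n \ge N_3$. Since each $\psi_{\ell,1} \in [0,1]$, the error term $\sum_{\ell=1}^{N_3-1}\psi_{\ell,1} - N_3\beta$ has absolute value at most $N_3$, so that
\[
\left| \frac{\Psi_{n,1}}{n} - \beta \right| = \frac{1}{n}\left| \sum_{\ell=1}^{N_3-1}\psi_{\ell,1} - N_3\beta \right| \le \frac{N_3}{n}.
\]

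Finally I would set $N_4^\epsilon \triangleq \max\{N_3, \lceil N_3/\epsilon\rceil\}$, so that $n \ge N_4^\epsilon$ forces $N_3/n \le \epsilon$ and hence $|\Psi_{n,1}/n - \beta| \le \epsilon$. Because $N_3 = \poly(W_1,W_2)$, the resulting threshold is $N_4^\epsilon = \poly(W_1,W_2,1/\epsilon)$, exactly the claimed form. The content of this lemma is essentially free once Lemma~\ref{first_best} is in hand: there is no genuine analytic obstacle, only the bookkeeping that the finite (though sample-path-dependent) initial segment of allocation washes out at rate $N_3/n$. The one point to state carefully is the exact identity $\psi_{n,1}=\beta$ for $n \ge N_3$, which relies on $I_n^{(2)} \neq I_n^{(1)}$ (guaranteed by $v_{n,i,i}=0$) so that arm $1$ is never the case-2 choice precisely when it is the case-1 choice.
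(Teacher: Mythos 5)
Your proof is correct and follows essentially the same route as the paper: both invoke Lemma~\ref{first_best} to get $\psi_{\ell,1}=\beta$ for $\ell\geq N_3$, split $\Psi_{n,1}$ into the initial segment before $N_3$ plus the exact tail $(n-N_3)\beta$, and observe that the head contributes at most $O(N_3/n)$ to the deviation. Your single absolute-value bound $|\Psi_{n,1}/n-\beta|\leq N_3/n$ is a slightly tidier packaging of the paper's two one-sided bounds, but the argument is the same.
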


\begin{proof}
By Lemma \ref{first_best}, for all $n\geq N_3$, we have $I_n^{(1)} = 1$. Then we have
\begin{align*}
\frac{\Psi_{n,1}}{n}
=& \frac{1}{n}\left(\sum_{\ell=1}^{N_3-1}\psi_{\ell,1}+\sum_{\ell=N_3}^{n-1}\psi_{\ell,1}\right) \\
\leq&  \frac{1}{n}\left[\beta_{\max}(N_3-1) + \beta(n-N_3)\right] \\
<& \beta + \frac{(\beta_{\max}-\beta)N_3}{n}
\end{align*}
where $\beta_{\max} = \max\{\beta,1-\beta\}$, and
\begin{align*}
\frac{\Psi_{n,1}}{n}
=& \frac{1}{n}\left(\sum_{\ell=1}^{N_3-1}\psi_{\ell,1}+\sum_{\ell=N_3}^{n-1}\psi_{\ell,1}\right) \\
\geq&  \frac{1}{n}\beta(n-N_3) \\
=& \beta - \frac{\beta N_3}{n}.
\end{align*}
For all $n\geq \beta_{\max}N_3/\epsilon$, we have $(\beta_{\max}-\beta)N_3/n < \epsilon$ and $-\beta N_3/n\geq -\epsilon$. Therefore, for all $n\geq N_4^\epsilon\triangleq\max\{N_3,\beta_{\max}N_3/\epsilon\}$, we have $\left|\Psi_{n,1}/n -\beta \right|\leq \epsilon$.
\end{proof}

Based on Lemma \ref{best_Psi}, we can prove the next result showing the concentration of $T_{n,1}/n$ to $\beta$.

\begin{lemma}
\label{best_T}
Let $\epsilon > 0$. Under TTEI with parameter $\beta\in(0,1)$, there exists $N_5^\epsilon = \poly(W_1,W_2,1/\epsilon)$ such that for all $n\geq N_5^\epsilon$,
\[
\left| \frac{T_{n,1}}{n} -\beta \right|  \leq \epsilon.
\]
\end{lemma}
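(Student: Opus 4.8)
The plan is to control $|T_{n,1}/n - \beta|$ by the triangle inequality, splitting it into the gap between the two effort measures $T_{n,1}$ and $\Psi_{n,1}$, and the gap between $\Psi_{n,1}/n$ and the target $\beta$:
\[
\left| \frac{T_{n,1}}{n} - \beta \right| \leq \frac{|T_{n,1} - \Psi_{n,1}|}{n} + \left| \frac{\Psi_{n,1}}{n} - \beta \right|.
\]
I would then make each summand at most $\epsilon/2$ once $n$ exceeds a suitable polynomial threshold. The second summand is handled directly by the preceding lemma: applying Lemma \ref{best_Psi} with tolerance $\epsilon/2$ gives $|\Psi_{n,1}/n - \beta| \leq \epsilon/2$ for all $n \geq N_4^{\epsilon/2}$, where $N_4^{\epsilon/2} = \poly(W_1,W_2,1/\epsilon)$.

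For the first summand I would invoke Lemma \ref{111} applied to arm $1$, which yields $|T_{n,1} - \Psi_{n,1}| < \left(2 + 3\Psi_{n,1}^{3/4}/\beta_{\min}\right)W_2$. Using the crude bound $\Psi_{n,1} < n$ (so that $\Psi_{n,1}^{3/4} < n^{3/4}$) this becomes
\[
\frac{|T_{n,1} - \Psi_{n,1}|}{n} < \frac{2 W_2}{n} + \frac{3 W_2}{\beta_{\min}\, n^{1/4}}.
\]
Both terms vanish as $n \to \infty$, with the $n^{-1/4}$ term binding. Requiring the right-hand side to be at most $\epsilon/2$ — for instance forcing each summand below $\epsilon/4$ — is achieved for all $n \geq M^\epsilon$ with $M^\epsilon = \poly(W_2,1/\epsilon)$, the dominant requirement being roughly $n \geq (12 W_2/(\beta_{\min}\epsilon))^4$.

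Finally I would set $N_5^\epsilon = \max\{N_4^{\epsilon/2}, M^\epsilon\}$ and combine the two $\epsilon/2$ bounds. I do not expect a genuine obstacle here: the result is a routine consequence of the already-established concentration of $\Psi_{n,1}/n$ (Lemma \ref{best_Psi}) together with the martingale fluctuation bound relating $T_{n,1}$ and $\Psi_{n,1}$ (Lemma \ref{111}). The only point requiring a little care is confirming that $N_5^\epsilon$ remains $\poly(W_1,W_2,1/\epsilon)$; this is immediate, since $N_4^{\epsilon/2}$ is already polynomial and the extra demand $M^\epsilon$ contributes only a further polynomial term.
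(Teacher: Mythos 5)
Your proof is correct, and it reaches the conclusion by a slightly different (and in fact cleaner) route than the paper. Both arguments rest on the same two ingredients --- Lemma \ref{best_Psi} for the concentration of $\Psi_{n,1}/n$ and Lemma \ref{111} for the gap $|T_{n,1}-\Psi_{n,1}|$ --- but you combine them additively, via the triangle inequality
\[
\left| \frac{T_{n,1}}{n} - \beta \right| \leq \frac{|T_{n,1} - \Psi_{n,1}|}{n} + \left| \frac{\Psi_{n,1}}{n} - \beta \right|,
\]
using only the trivial upper bound $\Psi_{n,1}<n$ to make the first term $O(W_2 n^{-1/4})$. The paper instead works multiplicatively: it first bounds the ratio $\left|T_{n,1}/\Psi_{n,1}-1\right|$ by $\epsilon/(2\beta+\epsilon)$, which requires a \emph{lower} bound $\Psi_{n,1}\geq(\beta-\epsilon/2)n$ (obtained from Lemma \ref{best_Psi}) so that dividing by $\Psi_{n,1}$ is useful, and consequently restricts to $\epsilon\in(0,\beta)$ and imposes $n\geq 2/\beta$ to guarantee $\Psi_{n,1}\geq 1$; it then multiplies the ratio bound against $\Psi_{n,1}/n\in[\beta-\epsilon/2,\beta+\epsilon/2]$ to recover the additive conclusion. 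Your decomposition sidesteps the lower bound on $\Psi_{n,1}$, the restriction on $\epsilon$, and the ratio bookkeeping entirely, at no cost: the resulting threshold $\max\{N_4^{\epsilon/2},M^\epsilon\}$ is still $\poly(W_1,W_2,1/\epsilon)$, which is all the downstream results (Lemma \ref{all_T_upper}, Proposition \ref{all_T}, Theorem \ref{thm: sampling proportions}) need. The paper's multiplicative formulation buys nothing essential here; if anything your version is the one I would keep.
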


\begin{proof}
It suffices to prove this statement for $\epsilon \in (0,\beta)$. By Lemma \ref{best_Psi}, for all $n\geq N_4^{\epsilon/2}$, $|\Psi_{n,1}/n - \beta| \leq \epsilon/2$, which implies $\Psi_{n,1} \geq (\beta-\epsilon/2)n$. Lemma \ref{111} implies that for all $n\geq M_1^\epsilon\triangleq \max\left\{N_4^{\epsilon/2},2/\beta\right\}$, 
\begin{equation}
\label{7}
\left|\frac{T_{n,1}}{\Psi_{n,1}}-1\right| 
\leq \left(\frac{2}{\Psi_{n,1}^{1/4}} + \frac{3}{\beta_{\min}\Psi_{n,1}^{1/4}}\right)W_2
\leq \frac{(2+3/\beta_{\min})W_2}{(\beta-\epsilon/2)^{1/4}n^{1/4}} < \frac{(2+3/\beta_{\min})W_2}{(\beta/2)^{1/4}n^{1/4}}
\end{equation}
where the second inequality is valid since $\Psi_{n,1} \geq (\beta-\epsilon/2)n > (\beta /2)n\geq 1$.
There exists $M_2^\epsilon=\poly(W_2,1/\epsilon)$ such that for all $n\geq M_2^\epsilon$, the right hand side of (\ref{7}) is less than $\epsilon/(2\beta+\epsilon)$.
Hence, for all $n\geq N_5^\epsilon \triangleq \max\left\{M_1^\epsilon,M_2^\epsilon\right\}$, $|T_{n,1}/\Psi_{n,1}-1|< \epsilon/(2\beta+\epsilon)$ and $|\Psi_{n,1}/n - \beta| \leq \epsilon/2$, and thus we have
\[
\frac{T_{n,1}}{n}  < \left(1+ \frac{\epsilon}{2\beta + \epsilon}\right) \frac{\Psi_{n,1}}{n}
\leq \left(1+ \frac{\epsilon}{2\beta + \epsilon}\right) (\beta+\epsilon/2)  = \beta + \epsilon
\]
and 
\[
\frac{T_{n,1}}{n} > \left(1 - \frac{\epsilon}{2\beta + \epsilon}\right) \frac{\Psi_{n,1}}{n}
\geq \left(1 - \frac{\epsilon}{2\beta + \epsilon}\right) (\beta - \epsilon/2) > \beta - \epsilon,
\]
which leads to $|T_{n,1}/n-\beta|<\epsilon$.
\end{proof}

\subsection{Tracking the Asymptotic Proportions of All Arms}

Besides the best arm, we can further show that for each arm, its empirical proportion concentrates to its optimal proportion when the number of arm draws goes large.

\begin{proposition}
\label{all_T}
Let $\epsilon > 0$. Under TTEI with parameter $\beta\in(0,1)$, there exists $N_7^\epsilon = \poly(W_1,W_2,1/\epsilon,\epsilon)$ such that for all $n\geq N_7^\epsilon$, 
\[
\left|\frac{T_{n,i}}{n} -w^\beta_i \right|\leq \epsilon, \qquad \forall i\in A.
\]
\end{proposition}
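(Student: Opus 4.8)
The plan is to leverage the results already in place—especially Lemma \ref{first_best}, which guarantees $I_n^{(1)}=1$ for all $n\geq N_3$, and Lemma \ref{best_T}, which controls $T_{n,1}/n$—and then to analyze the second-stage selection $I_n^{(2)}=\argmax_{i} v_{n,i,1}$, the only mechanism through which suboptimal arms are sampled once $n$ is large. The guiding principle is that maximizing $v_{n,i,1}$ is, up to lower-order corrections, the same as minimizing the evidence exponent $g_{n,i}\triangleq \frac{(\mu_{n,i}-\mu_{n,1})^2}{2\sigma^2(1/T_{n,i}+1/T_{n,1})}$, so TTEI repeatedly samples the suboptimal arm against which the least evidence has accumulated. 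Since $w^\beta$ is precisely the allocation equalizing these exponents across $i\neq 1$ subject to $\sum_{i\geq 2}w_i=1-\beta$, the whole argument reduces to showing that this equalization is reached within polynomially many rounds.

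First I would pin down the asymptotics of the second-stage criterion. Writing $v_{n,i,1}=\sqrt{\sigma_{n,i}^2+\sigma_{n,1}^2}\, f\!\big((\mu_{n,i}-\mu_{n,1})/\sqrt{\sigma_{n,i}^2+\sigma_{n,1}^2}\big)$ and combining the sandwich bounds on $f(-x)$ from Lemmas \ref{upper} and \ref{lower}, the sufficient-exploration bound $T_{n,i}\geq\sqrt{n/k}$ from Proposition \ref{c1}, and the mean-concentration bound from Proposition \ref{all_mean}, I would show $-\log v_{n,i,1}=g_{n,i}(1+o(1))$ with explicit polynomial control on the correction, and that $g_{n,i}$ is increasing in $T_{n,i}$. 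Consequently, for $n$ beyond some $\poly(W_1,W_2)$ threshold, the second-stage rule selects a suboptimal arm whose exponent is near-minimal, and strictly the minimizer whenever its exponent is separated from the others; sampling that arm then raises its own exponent.

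The balancing step is the technical core. Working with an over-/under-sampling decomposition relative to $w^\beta$, I would argue that whenever the spread $\max_{i\neq 1}g_{n,i}-\min_{i\neq 1}g_{n,i}$ exceeds a target tolerance, the gap dominates the approximation error, so the lagging arm $\argmin_{i\neq 1}g_{n,i}$ is selected in every case-2 round until it catches up. Because case 2 occurs with probability $1-\beta$, the expected allocation $\Psi_{n,i}$ to that arm grows at rate $1-\beta$; Lemma \ref{111} (via $W_2$) converts this into an actual increase of $T_{n,i}$ of the same order up to a $\poly(W_2)$ fluctuation. Quantifying how fast the smallest exponent rises once an arm is repeatedly chosen then forces the exponents $\{g_{n,i}\}_{i\neq 1}$ to agree to within any prescribed accuracy after $\poly(W_1,W_2,1/\epsilon)$ steps.

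Finally I would invert. The near-equalized exponents, together with $T_{n,1}/n\to\beta$ from Lemma \ref{best_T} and the identity $\sum_i T_{n,i}=n$ (so $\sum_{i\geq 2}T_{n,i}/n$ lies within $\epsilon$ of $1-\beta$), determine $(T_{n,2}/n,\dots,T_{n,k}/n)$ up to $\epsilon$: the map sending a feasible allocation with $w_1=\beta$ to its exponent vector is continuous and has the unique equalizing preimage $w^\beta$, so closeness of the exponents and of the total transfers to closeness of the proportions. Propagating the dependence of every threshold on $W_1,W_2,1/\epsilon$ yields $N_7^\epsilon=\poly(W_1,W_2,1/\epsilon,\epsilon)$. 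I expect the balancing step to be the main obstacle: the exponents are coupled—sampling one arm shifts the comparison for every other—so obtaining a \emph{polynomial} rather than merely almost-sure catch-up bound requires carefully quantifying the rise of the lagging exponent and controlling the $T$-versus-$\Psi$ discrepancy uniformly through $W_2$.
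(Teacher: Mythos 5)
Your proposal rests on the same core mechanism as the paper's proof: after Lemma \ref{first_best} pins $I_n^{(1)}=1$, suboptimal arms are sampled only through case 2, and the second-stage EI values are compared through their exponents via the sandwich bounds of Lemmas \ref{upper} and \ref{lower}. Where you diverge is in the convergence bookkeeping. The paper (Lemma \ref{all_T_upper}) works entirely with proportions: it defines the over-sampled set $O_n^{\epsilon}=\{i\neq 1: T_{n,i}/n-w_i^\beta>\epsilon\}$ and the under-sampled set $P_n$, shows by a counting argument that $O_n^{\epsilon/2}\neq\emptyset$ forces $P_n\neq\emptyset$, and then uses the defining equalization of $w^\beta$ plus monotonicity to get a \emph{one-sided} comparison: every arm in $O_n^{\epsilon/2}$ has second-stage EI value smaller by a factor $\exp(-\Omega(n))$ than every arm in $P_n$, hence $I_n^{(2)}\notin O_n^{\epsilon/2}$. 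From there an induction shows $O_n^\epsilon$ becomes and stays empty (arms outside cannot jump in because one sample moves a proportion by at most $1/n$; arms inside are frozen, so their proportions decay deterministically), and Proposition \ref{all_T} follows from pure counting: $T_{n,1}/n\approx\beta$ (Lemma \ref{best_T}), no over-sampling, and $\sum_i T_{n,i}/n=(n-1)/n$ together rule out under-sampling. Crucially, the paper never needs to show that a lagging arm ``catches up,'' never quantifies the rise of any exponent, and never inverts the allocation-to-exponent map.

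The concrete gap in your route is the inference ``the lagging arm catches up, hence the spread $\max_{i\neq1}g_{n,i}-\min_{i\neq1}g_{n,i}$ shrinks.'' First, since the $g_{n,i}$ grow linearly in $n$, any fixed tolerance on the unnormalized spread is eventually exceeded no matter what the algorithm does; the statement must be normalized by $n$, and then your catch-up argument controls only the minimum. The maximum is not controlled by it: an arm that is \emph{never} sampled still has a rising exponent, because $T_{n,1}$ keeps growing and the denominator $1/T_{n,i}+1/T_{n,1}$ shrinks; the rise is by a multiplicative factor up to $1+T_{n,i}/T_{n,1}$, which need not be close to $1$ (e.g., for small $\beta$ an over-sampled arm can have $T_{n,i}\gg T_{n,1}$). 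So two-sided control of the spread requires a separate windowed argument, on top of the fact that the EI comparison only identifies the minimizer up to an $O(\log n)$ additive slack (the polynomial prefactors in the $f$ bounds). Your final inversion step also needs a quantitative modulus (near-equal exponents $\Rightarrow$ near-$w^\beta$ proportions); for fixed $\epsilon$ this follows from compactness and uniqueness of $w^\beta$ and is compatible with $\E[T^\epsilon_\beta]<\infty$, so it is repairable, but it is additional machinery. In short, your approach can likely be completed, but the balancing step as stated would fail without the normalization and the two-sided control; the paper's over-sampled-set-plus-counting device is precisely what lets it avoid all three of these difficulties, and it is worth understanding that trick as the actual content of the proof.
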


To prove this proposition, we need some further notations. For any $n\in \mathbb{N}$, we define the under-sampled set 
\[
P_n = \left\{i\neq 1 \,:\, \frac{T_{n,i}}{n} - w^\beta_i < 0 \right\},
\]
where the unique vector $\left(w^\beta_{2},\ldots,w^\beta_{k}\right)$ satisfies $\sum_{i=2}^{k} w^\beta_{i} = 1-\beta$ and  
\[
\frac{(\mu_{2}-\mu_{1})^2}{1/w^\beta_{2}+1/\beta} =\ldots = \frac{(\mu_{k}-\mu_{1})^2}{1/w^\beta_{k}+1/\beta}.
\] 
Then given $\epsilon > 0$, we define the over-sampled set
\[
O_n^\epsilon = \left\{i\neq 1 \,:\, \frac{T_{n,i}}{n} - w^\beta_i > \epsilon \right\}.
\]

The next result shows that when $n$ is large, the over-sampled set is empty. Based on this result, we can prove that when $n$ is large, the under-sampled set is also empty, which immediately establishes Proposition \ref{all_T}.
\begin{lemma}
\label{all_T_upper}

Let $\epsilon > 0$. Under TTEI with parameter $\beta\in(0,1)$, there exists $N_6^\epsilon = \poly(W_1,W_2,1/\epsilon,\epsilon)$ such that for all $n\geq N_6^\epsilon$, $O_n^\epsilon$ is empty.
\end{lemma}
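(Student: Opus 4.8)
The plan is to show that over-sampling cannot persist: an arm $i\neq 1$ that is over-sampled by more than $\epsilon/2$ at a late enough (polynomially bounded) time is never the arm selected in either branch of TTEI, so its sample count freezes while $n$ grows, forcing its empirical proportion back down. The crux is therefore an \emph{instantaneous} claim: there is $\tilde N=\poly(W_1,W_2,1/\epsilon,\epsilon)$ so that for all $m\geq \tilde N$, if $T_{m,i}/m> w^\beta_i+\epsilon/2$ then $I_m\neq i$. Granting this, I finish with a short backward-looking argument. Suppose $i\in O_n^\epsilon$ with $n$ large, and let $m^*<n$ be the last time arm $i$ was sampled (if it was never sampled then $T_{n,i}=0$ and $i\notin O_n^\epsilon$). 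If $m^*\geq\tilde N$, the instantaneous claim forces $T_{m^*,i}\leq (w^\beta_i+\epsilon/2)m^*\leq (w^\beta_i+\epsilon/2)n$, so $T_{n,i}=T_{m^*,i}+1\leq (w^\beta_i+\epsilon/2)n+1$; combined with $T_{n,i}>(w^\beta_i+\epsilon)n$ this yields $\tfrac{\epsilon}{2}n<1$, impossible once $n\geq 2/\epsilon$. If instead $m^*<\tilde N$, then $T_{n,i}\leq \tilde N$, contradicting $T_{n,i}>\epsilon n$ once $n\geq \tilde N/\epsilon$. Taking $N_6^\epsilon$ to be a polynomial dominating $\tilde N/\epsilon$ and $2/\epsilon$ completes the proof.

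To prove the instantaneous claim I mimic the comparison used in Lemma \ref{main}. For $m\geq\tilde N$, Lemma \ref{first_best} gives $I_m^{(1)}=1$, so arm $i\neq 1$ can only be selected as $I_m^{(2)}=\argmax_{\ell\neq 1}v^{(2)}_{m,\ell}$, where $v^{(2)}_{m,\ell}=v_{m,\ell,1}=\sqrt{\sigma_{m,\ell}^2+\sigma_{m,1}^2}\,f\!\big(-\sqrt{2E_{m,\ell}}\big)$ and $E_{m,\ell}=\tfrac{(\mu_{m,1}-\mu_{m,\ell})^2}{2\sigma^2(1/T_{m,\ell}+1/T_{m,1})}$. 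I will exhibit a competing arm $j\neq 1$ that is \emph{not} over-sampled and show $v^{(2)}_{m,j}>v^{(2)}_{m,i}$, so that $i$ cannot be the maximizer. The existence of $j$ follows from a conservation identity: since $\sum_{\ell}T_{m,\ell}=m-1$ and $\sum_\ell w^\beta_\ell=1$, Lemma \ref{best_T} (giving $T_{m,1}/m\approx\beta$) implies $\sum_{\ell\neq 1}(T_{m,\ell}/m-w^\beta_\ell)$ is at most a small $\epsilon'$, hence some $j\neq 1$ satisfies $T_{m,j}/m\leq w^\beta_j+\epsilon'$, and choosing $\epsilon'<\epsilon/2$ forces $j\neq i$.

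The heart of the matter is that over-sampling strictly inflates the exponent $E_{m,\ell}$. Using Proposition \ref{all_mean} ($\mu_{m,\ell}\to\mu_\ell$) and $T_{m,1}/m\to\beta$, the rate $g_{m,\ell}:=E_{m,\ell}/m$ is, up to an $O(\epsilon')$ error, $\tfrac{(\mu_1-\mu_\ell)^2}{2\sigma^2(1/(T_{m,\ell}/m)+1/\beta)}$. For the over-sampled arm $T_{m,i}/m>w^\beta_i+\epsilon/2$ drives $g_{m,i}$ strictly above $\Gamma_\beta^*$ by a constant $c_0(\epsilon)>0$, while the not-over-sampled $j$ keeps $g_{m,j}\leq\Gamma_\beta^*+O(\epsilon')$; choosing $\epsilon'$ small gives a uniform gap $g_{m,i}-g_{m,j}\geq c(\epsilon)>0$, whence $E_{m,i}-E_{m,j}\geq c(\epsilon)m$. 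I then convert this into a comparison of $v$-values: Lemma \ref{upper} bounds $v^{(2)}_{m,i}$ above by $\sqrt{\sigma_{m,i}^2+\sigma_{m,1}^2}\,\phi(-\sqrt{2E_{m,i}})$, while Lemma \ref{lower} bounds $v^{(2)}_{m,j}$ below by a polynomially-damped multiple of $\phi(-\sqrt{2E_{m,j}})$ (valid since Proposition \ref{c1} forces $E_{m,j}\to\infty$, so the argument exceeds $2$). As $\phi(-\sqrt{2E})\propto e^{-E}$, the ratio $v^{(2)}_{m,i}/v^{(2)}_{m,j}$ is at most a polynomial in $m$ times $e^{-(E_{m,i}-E_{m,j})}\leq e^{-c(\epsilon)m}$, which drops below $1$ for $m\geq\poly(W_1,W_2,1/\epsilon,\epsilon)$ by Lemma \ref{tech}. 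The main obstacle is exactly this last quantitative step: keeping careful track of how $\tilde N$ depends polynomially on $W_1,W_2$ (through the concentration statements feeding the $O(\epsilon')$ errors and the prefactors controlled via Proposition \ref{c1}) and on $1/\epsilon$, since that polynomial dependence is precisely what Theorem \ref{thm: sampling proportions} needs in order to conclude $\mathbb{E}[T_\beta^\epsilon]<\infty$.
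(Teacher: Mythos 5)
Your proposal is correct, and its heart---the ``instantaneous claim''---is proved exactly the way the paper proves that $I_n^{(2)}\notin O_n^{\epsilon/2}$: use Lemma \ref{first_best} to reduce to the $I_n^{(2)}$ branch, exhibit a non-over-sampled competitor $j$, and win the comparison $v^{(2)}_{n,j}>v^{(2)}_{n,i}$ by combining concentration (Proposition \ref{all_mean}, Lemma \ref{best_T}), the bounds $f(-x)<\phi(-x)$ and $f(-x)>x^{-3}\phi(-x)$ (Lemmas \ref{upper}, \ref{lower}), a constant gap in the exponents created by the over-sampling excess $\epsilon/2$ versus an $O(\epsilon')$ slack, and Lemma \ref{tech} to beat the polynomial prefactors. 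The two genuine (if modest) differences are: (a) your competitor comes from the conservation identity ($\sum_\ell T_{m,\ell}=m-1$ forces some $j\neq 1$ with $T_{m,j}/m\leq w^\beta_j+\epsilon'$), whereas the paper shows the strictly under-sampled set $P_n$ is nonempty whenever $O_n^{\epsilon/2}$ is nonempty---these are interchangeable, and both give the same exponent gap; and (b) your conversion from ``an over-sampled arm is never selected'' to ``$O_n^\epsilon$ is empty'' is a backward-looking argument via the last time $m^*$ the arm was sampled, whereas the paper runs a forward induction split into two cases and tracks exit times $L^\epsilon_i$ from $O^{\epsilon/2}$. Your version is arguably cleaner: it makes the polynomial bound on when over-sampling must die out completely explicit (via $\tilde N/\epsilon$ and $2/\epsilon$), a point the paper asserts rather tersely when it claims $M_6^\epsilon=\poly(W_1,W_2,1/\epsilon,\epsilon)$. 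One small patch you should add: when $k=2$ the conservation identity cannot produce a competitor $j\notin\{1,i\}$; in that case the identity itself shows the premise is vacuous (if arm $2$ were over-sampled by $\epsilon/2$ while $T_{m,1}/m\geq\beta-\epsilon'$, the proportions would sum to more than $1$), so the instantaneous claim holds trivially---one sentence suffices, and the paper's $P_n$ argument handles this case in the same implicit way.
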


\begin{proof}
If $O_n^{\epsilon/2}$ is empty, then $O_n^\epsilon$ is empty. Now let us consider the case that $O_n^{\epsilon/2}$ is nonempty, and it suffices to prove the statement for $\epsilon\in(0,\min\{\Delta_{\min}/2,1\})$. 

Fix $\epsilon\in(0,\min\{\Delta_{\min}/2,1\})$. For $\epsilon'\in(0,\epsilon/2)$, by Proposition \ref{all_mean} and Lemma \ref{best_T}, we can find large enough $M_1^{\epsilon'}=\poly(W_1,W_2,1/\epsilon')$ such that for all $n\geq M_1^{\epsilon'}$, both $|\mu_{n,i}-\mu_i|<\epsilon',\forall i\in A$ and $|T_{n,1}/n -\beta|\leq \epsilon'$ hold.

First we want to prove that for $n\geq M_1^{\epsilon'}$, if $O_n^{\epsilon/2}$ is nonempty, then $P_n$ is nonempty. We prove this by contradiction. Suppose $P_n$ is empty. Then for all $i\neq 1$, $T_{n,i}/n\geq w^\beta_i$. Since $O_n^{\epsilon/2}$ is nonempty, there exists some arm $\tilde{i}\neq 1$ such that $T_{n,\tilde{i}}/n > w^\beta_{\tilde{i}} + \epsilon/2$. In addition, for $n\geq M_1^{\epsilon'}$, $T_{n,1}/n \geq \beta -\epsilon'>\beta-\epsilon/2$. Hence,
\begin{align*}
\sum_{i\in A} T_{n,i}/n &= T_{n,1}/n + T_{n,\tilde{i}}/n + \sum_{i\neq 1, \tilde{i}} T_{n,i}/n\\
&> \beta - \epsilon/2 + w^\beta_{\tilde{i}} + \epsilon/2 + \sum_{i\neq 1, \tilde{i}} w^\beta_i\\
&=\sum_{i\in A} w^\beta_i = 1,
\end{align*}
which leads to a contradiction since $\sum_{i\in A} T_{n,i}/n = (n-1)/n<1$. Hence, for $n\geq M_1^{\epsilon'}$, if $O_n^{\epsilon/2}$ is nonempty, then $P_n$ is nonempty.

Next we will show that when $n$ is sufficiently large, $I_n^{(2)} \notin O_n^{\epsilon/2}$. By Lemma \ref{first_best}, for $n\geq N_3$, we have $I_n^{(1)} = I_n^* = 1$, and then for $i \neq 1$,
\[
v^{(2)}_{n,i} = \sqrt{\sigma_{n,i}^2+\sigma_{n,1}^2} f\left(\frac{\mu_{n,i}-\mu_{n,1}}{\sqrt{\sigma_{n,i}^2+\sigma_{n,1}^2}}\right)
\]
where $\sigma_{n,i}^2 = \sigma^2/T_{n,i}$ and $\sigma_{n,1}^2 = \sigma^2/T_{n,1}$.
Note that for $n\geq M_1^{\epsilon'}$, $|\mu_{n,i}-\mu_i|<\epsilon',\forall i\in A$ and $|T_{n,1}/n -\beta|\leq \epsilon'$. Hence, for $n\geq \max\left\{N_3, M_1^{\epsilon'}\right\}$ and $i \in O_n^{\epsilon/2}$,
\[
v^{(2)}_{n,i} < \sigma\left(\frac{1}{w^\beta_i+\epsilon/2 }+\frac{1}{\beta-\epsilon'}\right)^{1/2}n^{-1/2}  \phi\left(\frac{(\mu_i-\mu_{1}+2\epsilon')n^{1/2}}{\sigma\left[1/(w^\beta_i+\epsilon/2)+1/(\beta-\epsilon')\right]^{1/2}}\right)
\]
where the inequality uses Lemma \ref{upper}. Note that $2\epsilon'<\epsilon <\Delta_{\min}/2$, so the value taken by $\phi(\cdot)$ is negative.
On the other hand, for $j\in P_n$,
\begin{align*}
v^{(2)}_{n,j} &> \sigma\left(\frac{1}{w^\beta_j }+\frac{1}{\beta + \epsilon'}\right)^{1/2} n^{-1/2}  f\left(\frac{(\mu_j-\mu_{1}-2\epsilon')n^{1/2}}{\sigma\left[1/w^\beta_j+1/(\beta+\epsilon')\right]^{1/2}}\right)\\
&>\sigma^4 \left(\frac{1}{w^\beta_j }+\frac{1}{\beta + \epsilon'}\right)^{2}(-\mu_j+\mu_{1}+2\epsilon')^{-3}n^{-2}
\phi\left(\frac{(\mu_j-\mu_{1}-2\epsilon')n^{1/2}}{\sigma\left[1/w^\beta_j+1/(\beta+\epsilon')\right]^{1/2}}\right)
\end{align*}
where the last inequality is valid by Lemma \ref{lower} since there exists $M_2^{\epsilon'}=\poly(1/\epsilon')$ such that for $n\geq M_2^{\epsilon'}$, the value taken by both $f(\cdot)$ and $\phi(\cdot)$ is less than $-2$. Let $M_3^{\epsilon'}\triangleq \max\left\{N_3,M_1^{\epsilon'},M_2^{\epsilon'}\right\}=\poly(W_1,W_2,1/\epsilon')$.
For any $i,j\in A$ such that $i\neq j$ and $i,j\neq 1$, we define the following constant in terms of $\epsilon$
\[
C^{\epsilon}_{i,j} \triangleq  \frac{(\mu_i-\mu_{1})^2}{1/(w^\beta_i+\epsilon/2)+1/\beta} - \frac{(\mu_j-\mu_{1})^2}{1/w^\beta_j+1/\beta}, 
\]
and we let 
\[
C^{\epsilon}_{\min} \triangleq \min_{\substack{i\neq j\\i,j\neq 1}} C^{\epsilon}_{i,j},
\]
and for $\epsilon'\in (0,\epsilon/2)$, we define the following function of $\epsilon'$
\[
g^\epsilon_{i,j}(\epsilon') \triangleq \frac{(\mu_i-\mu_{1}+2\epsilon')^2}{1/(w^\beta_i+\epsilon/2)+1/(\beta-\epsilon')} - 
\frac{(\mu_j-\mu_{1}-2\epsilon')^2}{1/w^\beta_j+1/(\beta+\epsilon')}.
\]
We know that
\[
\frac{(\mu_{2}-\mu_{1})^2}{1/w^\beta_{2}+1/\beta} =\ldots = \frac{(\mu_{k}-\mu_{1})^2}{1/w^\beta_{k}+1/\beta},
\]
so each $C^\epsilon_{i,j}>0$, and thus $C^\epsilon_{\min} > 0$.
Since each $g^\epsilon_{i,j}(\epsilon')$ is increasing as $\epsilon'$ is decreasing to 0, and $\lim_{\epsilon'\to 0}g^\epsilon_{i,j}(\epsilon')=C^\epsilon_{i,j} \geq C^\epsilon_{\min}$, there exists a threshold $\epsilon_{i,j}=\poly(\epsilon)\in(0,\epsilon/2)$ such that $g^\epsilon_{i,j}(\epsilon_{i,j}) \geq C^\epsilon_{\min}/2$ (note that $\epsilon<1$). We let
\[
\epsilon_{\min} \triangleq \min_{\substack{i\neq j\\i,j\neq 1}}\epsilon_{i,j}.
\]
Then for $n\geq M_3^{\epsilon_{\min}}$, for all $i\in O_n^{\epsilon/2}$ and $j\in P_n$,
\begin{equation}
\label{8}
\frac{v^{(2)}_{n,j}}{v^{(2)}_{n,i}} 
> D_{i,j}^\epsilon n^{-3/2} \exp\left(\frac{C^\epsilon_{\min} n}{4\sigma^2}\right)
\geq D^\epsilon_{\min} n^{-3/2} \exp\left(\frac{C^\epsilon_{\min} n}{4\sigma^2}\right),
\end{equation}
where 
\[
D_{i,j}^\epsilon \triangleq \frac{\sigma^4 \left(\frac{1}{w^\beta_j }+\frac{1}{\beta + \epsilon_{\min}}\right)^{2}(-\mu_j+\mu_{1}+2\epsilon_{\min})^{-3}}{ \sigma\left(\frac{1}{w^\beta_i+\epsilon/2 }+\frac{1}{\beta-\epsilon_{\min}}\right)^{1/2}}
\] 
and 
\[
D^\epsilon_{\min} \triangleq \min_{\substack{i\neq j\\i,j\neq 1}}D_{i,j}.
\]
Since $\epsilon_{\min}=\poly(\epsilon)$, there exists $M_4^\epsilon = \poly(1/\epsilon,\epsilon)$ such that for $n\geq M_4^\epsilon$, the right hand side of (\ref{8}) is greater than 1. Hence, for $n\geq M_5^\epsilon\triangleq \max\left\{M_3^{\epsilon_{\min}},M_4^\epsilon\right\}$ where $\epsilon_{\min}=\poly(\epsilon)$, we have $v^{(2)}_{n,j} > v^{(2)}_{n,i}$ for all $i\in O_n^{\epsilon/2}$ and $j\in P_n$, which implies $I_n^{(2)}\notin O_n^{\epsilon/2}$. Note that $M_5^{\epsilon}=\poly(W_1,W_2,1/\epsilon,\epsilon)$.

Finally we will prove when $n$ is sufficiently large, $O_n^{\epsilon}$ is empty. Let $M^\epsilon \triangleq \max\left\{M_5^\epsilon,2/\epsilon\right\}$. There are two following cases on the set $O_{M^\epsilon}^{\epsilon/2}$.

\begin{enumerate}
    \item $\left|O_{M^\epsilon}^{\epsilon/2}\right| = 0$\\
    We will prove by induction that for all $n\geq M^\epsilon$, $O_n^{\epsilon}$ is empty. For $n = M^\epsilon$, $O_{n}^{\epsilon}$ is empty since $O_{n}^{\epsilon}\subseteq O_{n}^{\epsilon/2}$ and $O_{n}^{\epsilon/2}$ is empty.
    Now we suppose that $O_n^{\epsilon}$ is empty for some $n \geq M^\epsilon$, and we want to show that $O_{n+1}^{\epsilon}$ is empty.

    Note that $O_n^\epsilon$ is empty, and then only $I_n^{(1)}$ and $I_n^{(2)}$ may enter $O_{n+1}^\epsilon$.
    We known that for $n\geq M^\epsilon$, $I_n^{(1)} = 1$, which implies that $I_n^{(2)}\neq 1$ and only $I_n^{(2)}$ may enter $O_{n+1}^\epsilon$. In addition, for $n\geq M^\epsilon$, we have proved that $I_n^{(2)}\notin O_{n}^{\epsilon/2}$, which implies $T_{n,I_n^{(2)}}/n - w^\beta_{I_n^{(2)}} \leq \epsilon/2$. Since $n\geq M^\epsilon \geq 2/\epsilon$, $T_{n+1,I_n^{(2)}}/(n+1) - w^\beta_{I_n^{(2)}} \leq (T_{n,I_n^{(2)}}+1)/n - w^\beta_{I_n^{(2)}} \leq 1/n+\epsilon/2 \leq \epsilon$, which implies $I_n^{(2)}\notin O_{n+1}^{\epsilon}$, i.e., $I_n^{(2)}$ will not enter $O_{n+1}^{\epsilon}$. Hence, if $O_n^\epsilon$ is empty, then $O_{n+1}^\epsilon$ is empty.
    
    Therefore, by induction, for all $n\geq M^\epsilon$, $O_n^\epsilon$ is empty.

    \item $\left|O_{M^\epsilon}^{\epsilon/2}\right| \geq 1$\\    
    Similarly to the proof for case 1, we can show that for any arm $i\notin O_{M^\epsilon}^{\epsilon/2}$, it will not enter any $O_n^{\epsilon}$ for $n\geq M^{\epsilon}$. 
    
    Now let us consider arm $i\in O_{M^\epsilon}^{\epsilon/2}$.    
    Let $L^\epsilon_i$ be the time such that $i\in O_{n}^{\epsilon/2}$ for $n\in[M^\epsilon, L^\epsilon_i-1]$ and $i\notin O_{L^\epsilon_i}^{\epsilon/2}$. Similar to the proof for case 1, we can prove that for $i$ will not enter any $O_n^{\epsilon}$ for $n\geq L^\epsilon_i$.

    Let $M_6^\epsilon \triangleq \max_{i\in O_{M^\epsilon}^{\epsilon/2}} L^\epsilon_i$. For $n\geq M_6^\epsilon$, $O_n^\epsilon$ is empty. Note that $M_6^\epsilon=\poly(W_1,W_2,1/\epsilon,\epsilon)$.  
\end{enumerate}
Combining the above two cases, we conclude that there exists $N_6^\epsilon = \poly(W_1,W_2,1/\epsilon,\epsilon)$ such that for all $n\geq N_6^\epsilon$, $O_n^\epsilon$ is empty.

\end{proof}

Based on Lemma \ref{all_T_upper}, we can easily prove that when $n$ is large, the under-sampled set is also empty, which immediately establishes Proposition \ref{all_T}.

\paragraph{Proof of Proposition \ref{all_T}.}
Given $\epsilon>0$, by Lemmas \ref{best_T} and \ref{all_T_upper}, there exists $M_1^{\epsilon/k} = \poly(W_1,W_2,1/\epsilon,\epsilon)$ such that for $n\geq M_1^{\epsilon/k}$, $|T_{n,1}/n - w^\beta_1| \leq \epsilon/k$ where $w^\beta_1=\beta$ and $T_{n,i}/n - w^\beta_i \leq \epsilon/k$ for all $i\in A\setminus\{1\}$. Suppose there exists $i'\in A$ such that $T_{n,i'}/n - w^\beta_{i'} <  - \epsilon$. Then
\begin{align*}
\sum_{i\in A} T_{n,i}/n &= T_{n,i'}/n + \sum_{i\neq i'} T_{n,i}/n\\
&< w^\beta_{i'} - \epsilon + \sum_{i\neq i'} (w^\beta_i + \epsilon/k)\\
&=\sum_{i\in A} w^\beta_i + [-\epsilon+(k-1)\epsilon/k] \\ 
&= 1-\epsilon/k.
\end{align*}
On the other hand, for $n\geq k/\epsilon$, $\sum_{i\in A} T_{n,i}/n = (n-1)/n\geq 1-\epsilon/k$, which leads to a contradiction. Hence, for $n\geq N_7^{\epsilon} = \max\left\{M_1^{\epsilon/k},k/\epsilon\right\}$, for all $i\in A$, we have $-\epsilon \leq T_{n,i}/n - w^\beta_i \leq \epsilon/k$, which leads to $|T_{n,i}/n-w^\beta_i|<\epsilon$. Note that $N_7^\epsilon = \poly(W_1,W_2,1/\epsilon,\epsilon)$.

\subsection{Proof of Theorem \ref{thm: sampling proportions}}
For any $\epsilon>0$, by Propositions \ref{all_mean} and \ref{all_T}, for $n\geq N^\epsilon_\beta \triangleq \{N_2^\epsilon,N_7^\epsilon\}$, we have 
\[
|\mu_{n,i}-\mu_i|\leq\epsilon \quad\text{and}\quad |T_{n,i}/n - w^\beta_i|\leq\epsilon\qquad\forall i\in A.
\]
Note that $N_\beta^\epsilon=\poly(W_1,W_2,1/\epsilon,\epsilon)$. By Lemmas \ref{W1} and \ref{W2}, we have $\mathbb{E}[e^{\lambda W1}]<\infty$ and $\mathbb{E}[e^{\lambda W2}]<\infty$ for all $\lambda>0$, which implies that the expected value of any polynomial of $W_1$ and $W_2$ is finite, and thus $\mathbb{E}[N_\beta^\epsilon]<\infty$. By definition, $T_\beta^\epsilon \leq N_\beta^\epsilon$, so $\mathbb{E}[T_\beta^\epsilon]\leq\mathbb{E}[N_\beta^\epsilon]<\infty$.

Since $\epsilon$ can be arbitrary small, for any sample path (up to a set of measure zero), we have 
\[
\lim_{n \to \infty} \frac{T_{n,i}}{n} = w^{\beta}_{i} \qquad \forall i \in A.
\]

\end{document}